%% file: main.tex
\documentclass{article}

\input{header}

\usepackage{microtype}
\usepackage{graphicx}
\usepackage{subcaption}

\usepackage{booktabs} 

\usepackage{hyperref}

\usepackage[accepted]{icml2026}

\usepackage{amsmath}
\usepackage{amssymb}
\usepackage{mathtools}
\usepackage{amsthm}

\usepackage[capitalize,noabbrev]{cleveref}

\theoremstyle{plain}
\theoremstyle{definition}
\theoremstyle{remark}

\usepackage[textsize=tiny]{todonotes}

\newcommand{\DefinedAs}[0]{\mathrel{\mathop:}=}

\usepackage{xcolor}         
\definecolor{mycyan}{RGB}{0,204,204}
\definecolor{myred}{RGB}{183,65,20}
\definecolor{mygreen}{RGB}{70,180,86}

\icmltitlerunning{Unlearning in Diffusion Models: A Unified Framework with KL Divergence and Likelihood Constraints}

\begin{document}

\twocolumn[
\icmltitle{Unlearning in Diffusion Models: A Unified Framework with KL Divergence and Likelihood Constraints}



  \begin{icmlauthorlist}

    \icmlauthor{Shervin Khalafi}{sch}
    \icmlauthor{Alejandro Ribeiro}{sch}
    \icmlauthor{Dongsheng Ding}{yyy}

  \end{icmlauthorlist}

  \icmlaffiliation{sch}{University of Pennsylvania}
  \icmlaffiliation{yyy}{University of Tennessee, Knoxville}

  \icmlcorrespondingauthor{Shervin Khalafi}{shervink@seas.upenn.edu}
  \icmlcorrespondingauthor{Dongsheng Ding}{dongshed@utk.edu}

  \icmlkeywords{Diffusion models; machine unlearning; constrained learning; KL divergence; likelihood constraints; strong duality; primal–dual algorithms; concept unlearning; data unlearning; retention–unlearning tradeoff.}
  \vskip 0.3in
]



\printAffiliationsAndNotice{}  

\begin{abstract}
Unlearning in diffusion models aims to remove undesirable data or concepts while preserving the utility of pretrained models---two fundamentally conflicting objectives. We propose a principled constrained optimization framework that formulates unlearning as minimizing the deviation from a pretrained model, subject to explicit separation constraints from the unlearning distributions. Specifically, we formulate three constrained optimization problems based on reverse and forward KL divergences, and likelihood constraints. The first two generalize existing approaches for concept and data unlearning, while the third offers a novel and natural formulation for unlearning. Despite the nonconvexity of the KL constraints, we establish strong duality for all three problems, enabling us to explicitly characterize their optimal solutions as unlearning targets and develop primal–dual algorithms for each formulation. Experimental results demonstrate that our KL-constrained approach achieves superior retention-unlearning tradeoffs compared to weight-based baselines for concept and data unlearning, and that our likelihood-based approach matches unlearning effectiveness while better preserving retained concepts compared to baselines.
\end{abstract}

\input{sec_introduction_camready}

\input{sec_distspace_camready}
\input{sec_diffusion_camready}
\input{sec_experiments_camready}

\newpage
\section*{Impact Statement}

This paper aims to improve the responsible behavior of deep generative models by ensuring compliance with unlearning requirements. By establishing a unified optimization framework, our work provides practical guidance for developing more reliable training algorithms, with potential impact across unlearning applications such as language, image, speech, and audio generation.

\bibliography{dd-bib}
\bibliographystyle{icml2026}

\newpage
\appendix
\onecolumn

\input{appendices}

\end{document}

%% file: header.tex
\usepackage{amsthm}
\usepackage{amsmath}
\usepackage{amssymb}
\usepackage{graphicx}
\usepackage{mathtools}
\usepackage{enumerate}
\usepackage{enumitem}
\usepackage{footnote}
\usepackage{float}
\usepackage{xspace}
\usepackage{multirow}
\usepackage{nicefrac}
\usepackage{wrapfig}
\usepackage{framed}
\usepackage{url}
\usepackage[colorlinks=true, linkcolor=blue, citecolor=blue]{hyperref}
\usepackage{blkarray}
\PassOptionsToPackage{round}{natbib}

\usepackage{accents}

\usepackage{tikz}
\usetikzlibrary{arrows,chains,matrix,positioning,scopes}

\let\hat\widehat
\let\tilde\widetilde

\newtheorem{lemma}{Lemma}
\newtheorem{theorem}{Theorem}
\newtheorem{cor}{Corollary}
\newtheorem{remark}{Remark}

\newtheorem{assumption}{Assumption}

\DeclareMathOperator*{\minimize}{minimize}
\DeclareMathOperator*{\maximize}{maximize}
\DeclareMathOperator*{\subject}{subject~to}

\newcommand{\one}{\boldsymbol{1}}

\DeclareMathOperator*{\argmin}{argmin}
\DeclareMathOperator*{\argmax}{argmax}

\newcommand{\norm}[1]{\left\|{#1}\right\|}

\DeclareFontFamily{OMX}{MnSymbolE}{}
\DeclareFontShape{OMX}{MnSymbolE}{m}{n}{
    <-6>  MnSymbolE5
   <6-7>  MnSymbolE6
   <7-8>  MnSymbolE7
   <8-9>  MnSymbolE8
   <9-10> MnSymbolE9
  <10-12> MnSymbolE10
  <12->   MnSymbolE12}{}
\DeclareSymbolFont{mnlargesymbols}{OMX}{MnSymbolE}{m}{n}
\SetSymbolFont{mnlargesymbols}{bold}{OMX}{MnSymbolE}{b}{n}
\DeclareMathDelimiter{\llangle}{\mathopen}{mnlargesymbols}{'164}{mnlargesymbols}{'164}
\DeclareMathDelimiter{\rrangle}{\mathclose}{mnlargesymbols}{'171}{mnlargesymbols}{'171}

\usepackage{prettyref}

\newcommand{\savehyperref}[2]{\texorpdfstring{\hyperref[#1]{#2}}{#2}}
\newrefformat{eq}{\savehyperref{#1}{\textup{(\ref*{#1})}}}
\newrefformat{eqn}{\savehyperref{#1}{Equation~\eqref{#1}}}
\newrefformat{lem}{\savehyperref{#1}{Lemma~\ref*{#1}}}
\newrefformat{def}{\savehyperref{#1}{Definition~\ref*{#1}}}
\newrefformat{line}{\savehyperref{#1}{line~\ref*{#1}}}
\newrefformat{thm}{\savehyperref{#1}{Theorem~\ref*{#1}}}
\newrefformat{cor}{\savehyperref{#1}{Corollary~\ref*{#1}}}
\newrefformat{sec}{\savehyperref{#1}{Section~\ref*{#1}}}
\newrefformat{app}{\savehyperref{#1}{Appendix~\ref*{#1}}}
\newrefformat{assum}{\savehyperref{#1}{Assumption~\ref*{#1}}}
\newrefformat{ex}{\savehyperref{#1}{Example~\ref*{#1}}}
\newrefformat{fig}{\savehyperref{#1}{Figure~\ref*{#1}}}
\newrefformat{alg}{\savehyperref{#1}{Algorithm~\ref*{#1}}}
\newrefformat{rem}{\savehyperref{#1}{Remark~\ref*{#1}}}
\newrefformat{conj}{\savehyperref{#1}{Conjecture~\ref*{#1}}}
\newrefformat{prop}{\savehyperref{#1}{Proposition~\ref*{#1}}}
\newrefformat{proto}{\savehyperref{#1}{Protocol~\ref*{#1}}}
\newrefformat{prob}{\savehyperref{#1}{Problem~\ref*{#1}}}
\newrefformat{claim}{\savehyperref{#1}{Claim~\ref*{#1}}}
\newrefformat{que}{\savehyperref{#1}{Question~\ref*{#1}}}

%% file: sec_introduction_camready.tex
\section{Introduction}

Generative diffusion models have emerged as an effective approach for synthesizing high-quality images that resemble training data and capture underlying concepts. However, this expressiveness also raises significant safety and ethical concerns, as such models may generate inappropriate content, including copyrighted material~\cite{carlini2023extracting} or harmful content~\cite{schuhmann2022laion}. While policy-makers have made progress in regulating the removal of undesirable data or concepts from trained models~\cite{protection2016general,goldman2020introduction}, it remains an open problem to develop effective technical methods to achieve this goal.

Recently, the classical concept of machine unlearning has been introduced in the context of diffusion models~\cite{cao2015towards}. A standard approach aims to retain the utility of pretrained models, ensuring high-quality image generation, while simultaneously preventing the generation of specific harmful data or concepts---two inherently \emph{conflicting} objectives. This principle has motivated recent empirical advances, including concept unlearning~\cite{gandikota2023erasing,feng2024controllable} and data unlearning~\cite{park2025data,alberti2025data}. Often, these works employ the simple practice of balancing two conflicting objectives by combining them with weighted sums. However, such weight-based methods are inherently ad hoc, as the weights are treated as hyperparameters, making the approach sensitive and poorly generalizable across scenarios.

In this work, we take a principled constrained optimization approach to address the conflicting objectives in unlearning. To do so, we view retaining the utility of trained models as minimizing the distance between the trained model and the pretrained model, while preventing harmful generation is formulated as additional constraints that push the trained model away from specific data or concept distributions. As a result, the tradeoff between these objectives can be systematically characterized by a Lagrangian formulation. To measure the distance between two models, we employ reverse and forward KL divergences, which broadly correspond to concept and data unlearning, respectively. We summarize our key contributions in three aspects below. 
\begin{figure*}[t]
    \centering
    \begin{subfigure}{0.23\textwidth}
        \centering
        \includegraphics[width=\linewidth]{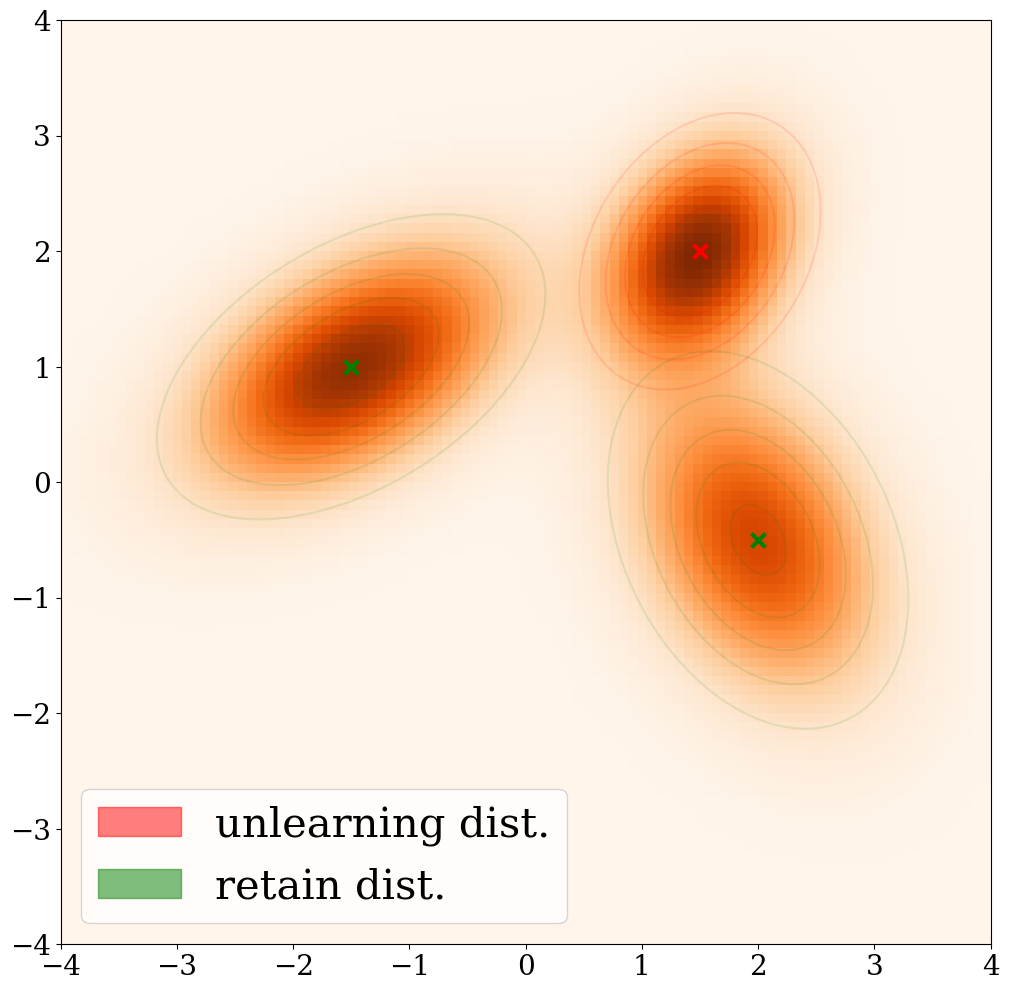}
    \end{subfigure}\hfill
    \begin{subfigure}{0.23\textwidth}
        \centering
        \includegraphics[width=\linewidth]{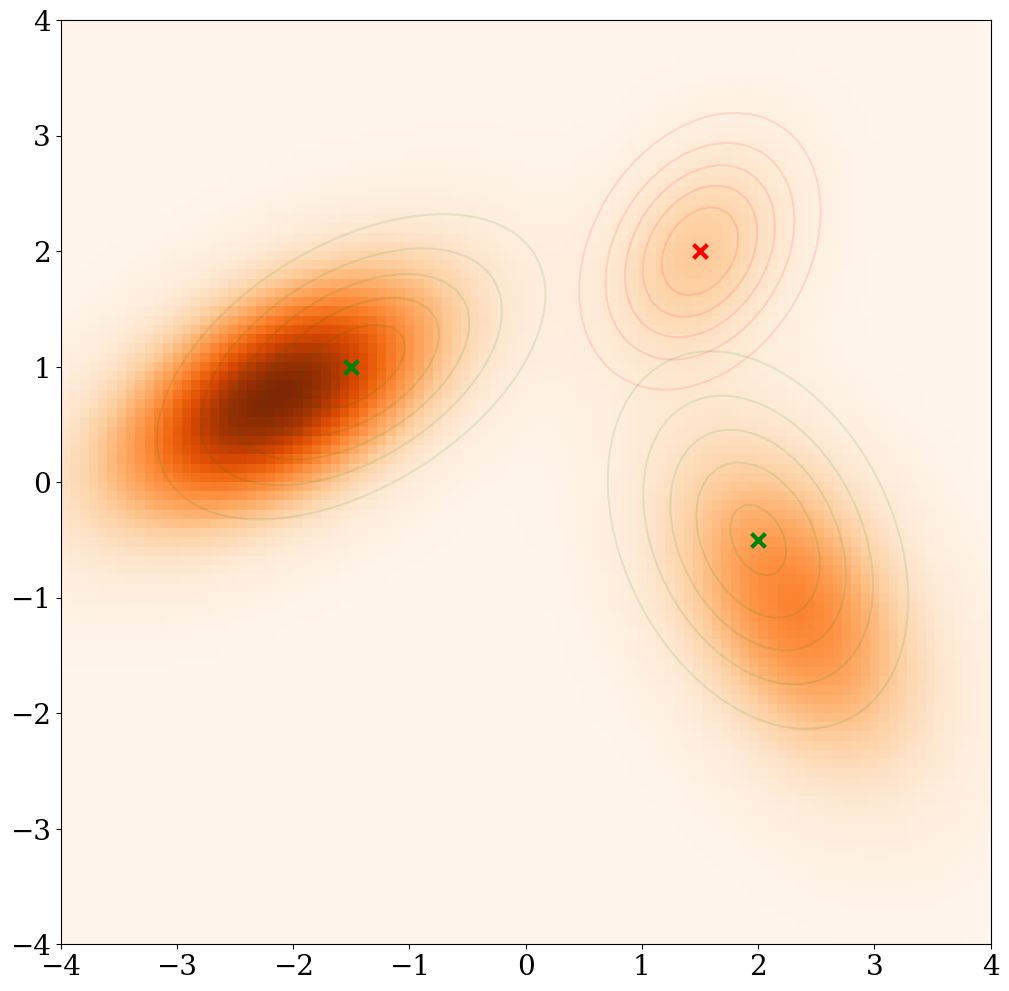}
    \end{subfigure}\hfill
    \begin{subfigure}{0.23\textwidth}
        \centering
        \includegraphics[width=\linewidth]{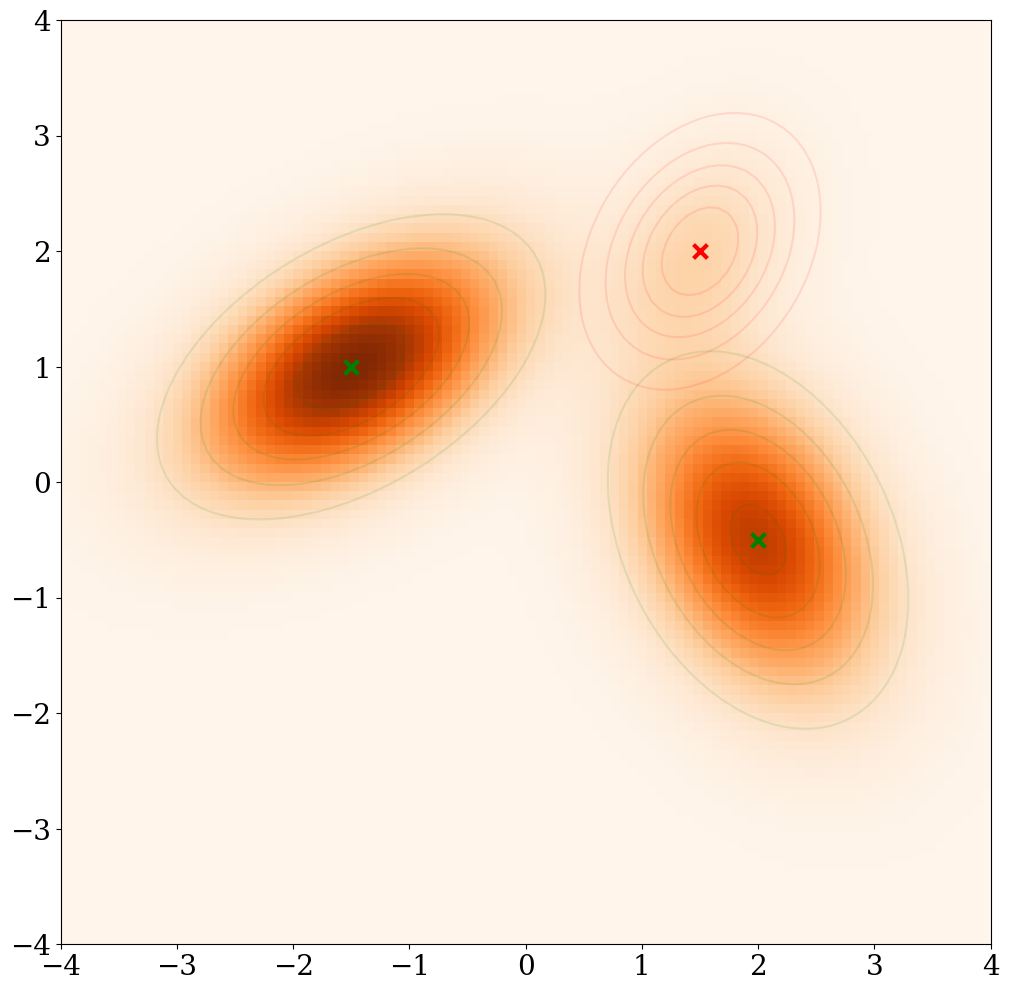}
    \end{subfigure}\hfill
    \begin{subfigure}{0.23\textwidth}
        \centering
        \includegraphics[width=\linewidth]{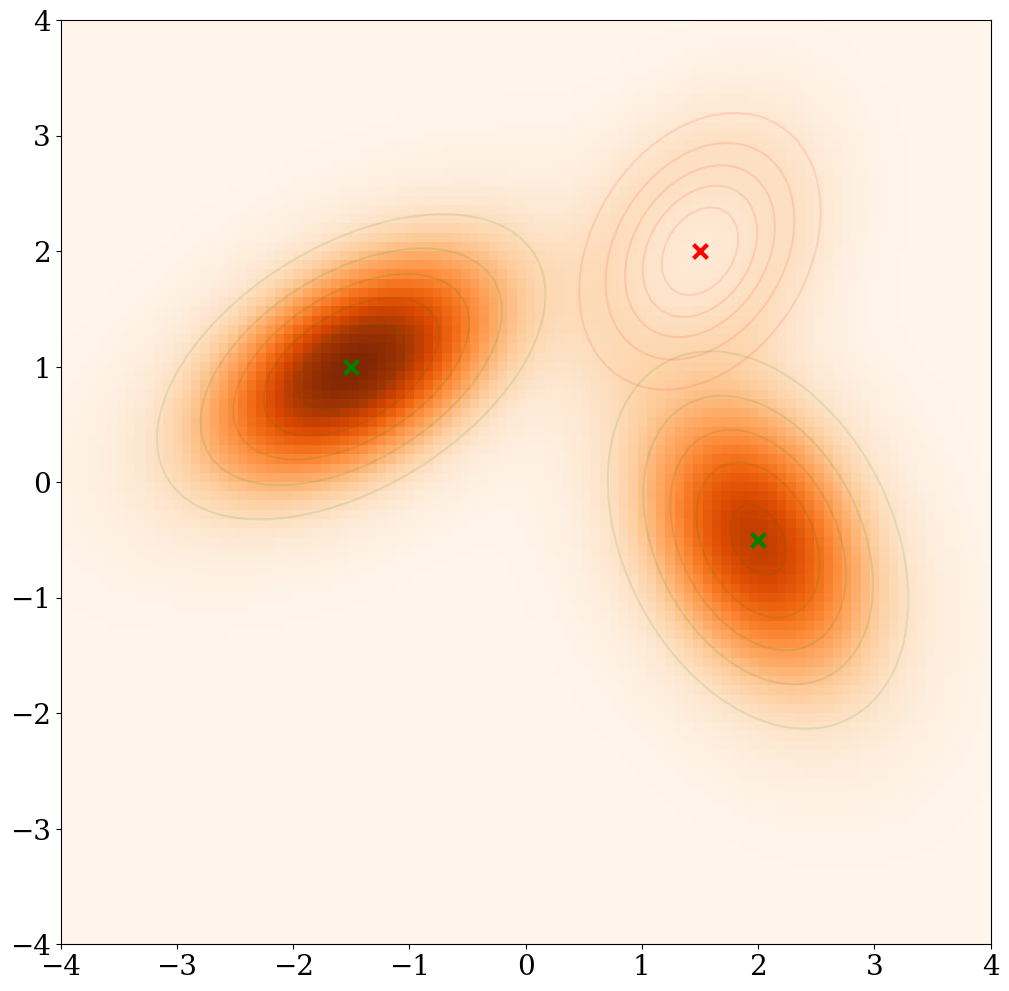}
    \end{subfigure}

    \caption{Heat maps of a three-Gaussian mixture before/after unlearning one mode (Right up). From left to right: pretrained model, reverse KL-constrained unlearning, forward KL-constrained unlearning, and likelihood-constrained unlearning.}
    \label{fig:gaussians_likelihood_vs_reverse}
\end{figure*}

\begin{enumerate}
    \item[(i)] \textbf{Problem Formulation}.
    We formulate a distribution optimization problem that minimizes the reverse KL divergence to a pretrained model, while constraining the reverse KL divergences to the distributions associated with the concepts to be unlearned to remain above user-specified thresholds. Formulation~\eqref{eq: unlearning reverse KL} corresponds to unlearning at the concept level. We then consider an analogous optimization problem based on the forward KL divergence~\eqref{eq: unlearning forward KL}, which corresponds to unlearning at the data level. Finally, we formulate a  distribution optimization problem~\eqref{eq: unlearning alignment} that constrains the expected likelihood of the concepts to be unlearned to remain below user-specified thresholds. This constrained framework extends the unlearning objectives, thereby revealing their commonalities and differences; see Section~\ref{subsec: contribution} for exact formulations and discussion. 
    
    \item[(ii)] \textbf{Theoretical Analysis}. Although the reverse and forward KL unlearning problems~\eqref{eq: unlearning reverse KL} and~\eqref{eq: unlearning forward KL} are \emph{nonconvex}, we exploit the convexity of the image of a non-atomic vector measure to prove that they are strongly dual, rather than relying on standard convex duality. In contrast, strong duality for the likelihood unlearning problem~\eqref{eq: unlearning alignment} directly follows from its convexity. In all cases, we explicitly characterize the optimal solutions, which serve as principled unlearning targets; see Figure~\ref{fig:gaussians_likelihood_vs_reverse} for visualizations and discussion.  A key implication of strong duality is that the unlearning problems can be solved equivalently in the dual domain. This enables our development of primal–dual algorithms instantiated with diffusion models for each formulation.

    \item[(iii)] \textbf{Empirical Results}.
    We demonstrate the effectiveness of our constrained optimization approach for concept and data unlearning in a text-to-image diffusion model. In concept unlearning, our method achieves the same degree of unlearning while deviating less from the pretrained model compared to equal-weights approaches. In data unlearning, assigning the same weight to every sample to be unlearned leads to excessive deviation from the pretrained model, whereas our method learns optimal weights that ensure unlearning while minimizing deviation. Furthermore, we show that our novel likelihood-based unlearning method achieves a better tradeoff between unlearning the target concept and preserving the retained concept. This allows us to perform strong unlearning without deviating significantly from the parts of the distribution we wish to retain.
\end{enumerate}

\subsection{Summary of our problem formulations}\label{subsec: contribution}

Within the optimization framework over the probability measure space \(\Delta\), we outline three principled unlearning targets and defer their exact characterizations to Section~\ref{sec: unlearning in distribution space}. Here, \(q\in\Delta\) denotes the data distribution or the distribution induced by the pretrained model to be retained, and \(q_{\mathrm u}^i\in\Delta\) denotes the \(i\)th concept or data distribution to be unlearned.

\textbf{Reverse KL-Constrained Unlearning}. We minimize the reverse KL divergence between the model $p$ and the reference $q$ to ensure that the model is \emph{close} to the reference. To remove the $m$ undesirable concepts, we impose $m$ reverse KL divergence constraints, each encouraging the model to \emph{stay away from} an undesirable concept encoded by $q_{\text{u}}^i$. To ensure this separation, we enforce a lower bound $b_i$ on each KL divergence by formulating a reverse KL-constrained distribution optimization problem,
\begin{equation}\label{eq: unlearning reverse KL}\tag{RU}
    \begin{array}{rl}
        \displaystyle\minimize_{p\,\in\,\Delta} & 
        D_{\text{KL}}(p\,\Vert\, q)
        \\[0.2cm]
        \subject &  D_{\text{KL}}(p\,\Vert\,q_{\text{u}}^i)\; \geq \; b_i \; \text{ for }\; i\in [m].
    \end{array}
\end{equation}
Problem~\eqref{eq: unlearning reverse KL} aims to minimize deviation from the pretrained model $q$---thereby maximizing model utility---while forgetting undesirable concepts. In Section~\ref{subsec: concept unlearning}, we capture a solution to Problem~\eqref{eq: unlearning reverse KL} as a ratio of distributions,
\[
p_{\text{rev}}^\star(\cdot) \;\propto\; \frac{(q(\cdot))^{\alpha_0} }{\prod_{i\,=\,1}^m ( q_{\text{u}}^i(\cdot))^{\alpha_i}}
\]
where $\{\alpha_i\}_{i\,=\,0}^m$ is a set of non-negative exponents that balance the relative importance of each model. The subscript $\text{rev}$ stands for reverse KL. The solution $p_{\text{rev}}^\star$ reflects unlearning by reducing the probability of sampling in regions where $q_{\text{u}}^i(\cdot)$ is large. This idea generalizes the concept unlearning target in concept erasing~\cite{gandikota2023erasing} and negation generation~\cite{du2023reduce}.

\textbf{Forward KL-Constrained Unlearning}. We employ the forward KL (rather than reverse KL) divergence to measure the closeness between two distributions. To do so, we obtain a variant of Problem~\eqref{eq: unlearning reverse KL} in terms of forward KL,
\begin{equation}\label{eq: unlearning forward KL}\tag{FU}
    \begin{array}{rl}
        \displaystyle\minimize_{p\,\in\,\Delta} & 
        D_{\text{KL}}(q\,\Vert\, p)
        \\[0.2cm]
        \subject &  D_{\text{KL}}(q_{\text{u}}^i\,\Vert\,p)\; \geq \; b_i \; \text{ for }\; i \in [m].
    \end{array}
\end{equation}
Since the forward KL divergence is an expectation over the data distribution, it essentially optimizes the log-likelihood of the data under the model---often approximated by the evidence lower bound in diffusion models~\cite{lai2025principles}. Therefore, Problem~\eqref{eq: unlearning forward KL} seeks to maximize the likelihood of the data to be retained while reducing the likelihood of the data to be forgotten. In Section~\ref{subsec: data unlearning}, we characterize a solution to Problem~\eqref{eq: unlearning forward KL} as a difference of distributions,
\[
    p_{\text{fw}}^\star(\cdot) 
    \;\propto\;
    q(x) 
    \, - \,
    \sum_{i\,=\,1}^m \alpha_i \, q_{\text{u}}^i(\cdot)
\]
where $\{\alpha_i\}_{i\,=\,1}^m$ is a set of non-negative weights that balance the relative importance of each distribution. The subscript fw stands for forward KL. The log-likelihood reduction of forgetting data is often referred to as data unlearning in the literature~\cite{feng2024controllable,alberti2025data}. Formulation~\eqref{eq: unlearning forward KL} is distinguished by its weaker assumption: it requires only sample access to the unlearning distributions $q_{\text{u}}^i$, whereas the reverse KL and likelihood formulations assume access to the associated diffusion models.

\textbf{Likelihood-Constrained Unlearning}. In addition to the nonconvex reverse KL divergence constraints in Problem~\eqref{eq: unlearning reverse KL}, we propose an alternative formulation that constrains the likelihood of the unlearning data under the model,
\begin{equation}\label{eq: unlearning alignment}\tag{LU}
    \begin{array}{rl}
        \displaystyle\minimize_{p\,\in\,\Delta} & 
        D_{\text{KL}}(p\,\Vert\, q)
        \\[0.2cm]
        \subject &  \mathbb{E}_{p}[\, q_{\text{u}}^i\,]\; \leq \; \epsilon_i \; \text{ for }\; i \in [m].
    \end{array}
\end{equation}
Problem~\eqref{eq: unlearning alignment} aims to minimize the deviation from the pretrained model $q$ while directly reducing the likelihood of the unlearning concepts appearing in the generated samples. In Section~\ref{subsec: likelihood unlearning} we capture a solution to Problem~\eqref{eq: unlearning alignment},
\[
    p_{\text{revl}}^\star(\cdot) 
    \; \propto \;
     \frac{q(\cdot)}{{\rm e}^{\sum_{i\,=\,1}^m \alpha_i\, q_{\text{u}}^i(\cdot)}}
\]
where $\{\alpha_i\}_{i\,=\,1}^m$ is a set of non-negative weights that balance the relative importance of each model. The subscript revl stands for reverse KL with likelihood. The exponential term downweights samples that have high likelihood under the unlearning distributions. Compared with the framework of constrained relative-entropy minimization~\cite{koyejo2013representation}, our formulation chooses the moment functions to be unlearning-distribution likelihoods, thereby turning the exponential tilt into a principled mechanism for pushing the learned distribution away from undesired concepts.

To illustrate, we use a three-Gaussian mixture example to compare the three unlearning solutions, as shown in Figure~\ref{fig:gaussians_likelihood_vs_reverse}. While the formulations~\eqref{eq: unlearning reverse KL} and~\eqref{eq: unlearning alignment} both reduce the probability of sampling from the unlearning concepts, their resulting solutions differ significantly. In particular, reverse KL-constrained unlearning achieves this by pushing the other modes we wish to retain away from the unlearned concept, whereas likelihood-constrained unlearning does not. This occurs because, for a sample $x$ where $q_{\text{u}}^i$ is small, the solution $p_{\text{rev}}^\star$ is more strongly altered by $q_{\text{u}}^i$ than $p_{\text{revl}}^\star$ (proportional vs exponential).

\textbf{Background on Diffusion Models}. A diffusion model consists of two stochastic processes: a forward process and a backward process. The forward process starts from a data point $x_0$ and gradually adds noise according to $x_{t} = \sqrt{1-\beta_t} x_{t-1} + \sqrt{\beta_t}w_t$ for $t\in [T]$, where $\beta_t\in (0,1)$ is the stepsize and $w_t\sim\mathcal{N}(0,I)$ is the Gaussian noise. The backward process starts from pure noise $x_T\sim\mathcal{N}(0,I)$ and iteratively removes noise. For instance, the DDPM sampler updates $x_{t-1} = \frac{1}{\sqrt{\alpha_t}}(x_t + \eta_t s(x_t, t) + \sigma_t z_t)$, where $\alpha_t = 1-\beta_t$, $\eta_t>0$ is the stepsize, $\sigma_t$ is the noise scaling, and $z_t\sim\mathcal{N}(0,I)$ is the Gaussian noise. A crucial component of the backward process is the score function $s(x_t,t)$: $X\times[T] \to \mathbb{R}^d$; when it is specified accurately, the backward process converges to the data distribution~\cite{chen2023sampling,li2025convergence}. Throughout the paper, we focus on score-based diffusion models, denoting each probability measure $p$ by its score function $s_p$: $X\times[T] \to \mathbb{R}^d$. We use \(p_0(\cdot; s_p)\) to denote the marginal distribution of the terminal sample \(x_0\), and \(p_{0:T}(\cdot; s_p)\) to denote the path distribution of the trajectory \(x_{0:T}\), both induced by the score function \(s_p\). We use analogous notation for the pretrained models $q$ and $q^i$.

%% file: sec_distspace_camready.tex
\section{Unlearning in Distribution Space}\label{sec: unlearning in distribution space}

In Sections~\ref{subsec: concept unlearning}--\ref{subsec: likelihood unlearning}, we formally present, in distribution space, the three formulations presented in Section~\ref{subsec: contribution}. We prove strong duality in all three cases, despite the nonconvexity of the KL-divergence constraints.

\subsection{Reverse KL-Constrained Unlearning}\label{subsec: concept unlearning}

We begin with Problem~\eqref{eq: unlearning reverse KL}. Let $q$: $\mathcal{B}(X)\to [0, 1]$ be the probability measure induced by a pretrained model, where $\mathcal{B}(X)$ is a Borel $\sigma$-algebra over a bounded domain $X \subseteq \mathbb{R}^d$. In addition to the pretrained model, we consider a set of $m$ probability measures $\{ q_{\text{u}}^i: \mathcal{B}(X)\to [0, 1] \}_{i\,=\,1}^m$, each modeling a concept that we wish the model to forget/unlearn. 

To ensure well-posedness, we assume that the probability measures of interest satisfy certain admissibility conditions.  

\begin{assumption}[Admissibility]\label{as: admissibility reverse KL}
There exists a probability measure $p$ satisfying two conditions: (i) $p$ is absolutely continuous with respect to $q$, $q_{\text{\normalfont u}}^i$, and (ii) $p$ is bounded, i.e.,
\begin{equation}\label{eq: admissible measures reverse KL}
    p \ll q,\, q_{\text{\normalfont u}}^i \, \text{ for } \, i \in [m]
    \; \text{ and } \;
    p(x) < \infty \, \text{ for } \, x \in X.
\end{equation}
\end{assumption}

The absolute continuity ensures the boundedness of reverse KL divergence. It is practical to only consider bounded models, since diffusion models are represented by conditional Gaussian distributions. Problem~\eqref{eq: unlearning reverse KL} is a \emph{nonconvex} optimization problem because the superlevel set of reverse KL divergence is not convex. As a result, we cannot invoke the convexity of the feasible set to conclude strong duality from convex duality analysis~\cite{boyd2004convex}.
\begin{assumption}[Feasibility]
\label{as: feasibility reverse KL}
    There exists an admissible probability measure $p \in \Delta$: $D_{\text{\normalfont KL}}(p\,\Vert\, q_{\text{\normalfont u}}^i) >  b_i$ for $i \in [m]$.
\end{assumption}

Since $q_{\text{u}}^i$ is focal on a specific concept or class, we can always construct $p$ such that most of its mass is located on samples with low probability under $q_{\text{u}}^i$. This makes Assumption~\ref{as: feasibility reverse KL} practically achievable. Let $p_{\text{rev}}^\star$ be a solution to Problem~\eqref{eq: unlearning reverse KL} and denote $P_{\text{rev}}^\star = D_{\text{KL}}(p_{\text{rev}}^\star\,\Vert\, q)$. 
Denote $\lambda \DefinedAs [\,\lambda_1,\ldots,\lambda_m\,]^\top$. 
Let the Lagrangian for Problem~\eqref{eq: unlearning reverse KL}~be
\[
    \begin{array}{rcl}
         L_{\text{rev}}(p, \lambda)
        & = & 
        \displaystyle
        D_{\text{KL}}(p\,\Vert\, q)
        \, + \,
        \sum_{i\,=\,1}^m \lambda_i \left(
        b_i 
        - D_{\text{KL}}(p\,\Vert\, q_{\text{u}}^i)
        \right)
    \end{array}
\]
and the associated dual function is given by $D_{\text{rev}}(\lambda) 
     =  \min_{p\,\in\,\Delta}  L_{\text{rev}}(p, \lambda)$, 
where the minimization is achieved at $p_{\text{rev}}^\star(\cdot;\lambda)$. 
Let an optimal dual variable be $\lambda_{\text{rev}}^\star \in \argmax_{\lambda\,\geq\,0} D_{\text{rev}}(\lambda)$, and the optimal value of the dual function be $D_{\text{rev}}^\star \DefinedAs D_{\text{rev}}(\lambda^\star)$.
 
However, it is unclear when $p_{\text{rev}}^\star(\cdot;\lambda)$ is a solution to Problem~\eqref{eq: unlearning reverse KL}. Due to nonconvexity, $p_{\text{rev}}^\star(\cdot;\lambda)$ is not necessarily a solution to Problem~\eqref{eq: unlearning reverse KL}, even when we fix $\lambda = \lambda_{\text{rev}}^\star$. To address this issue, we next prove that Problem~\eqref{eq: unlearning reverse KL} is strongly dual, despite its nonconvex feasible set.

\begin{theorem}[Strong duality]\label{thm: unlearning reverse KL strong duality}
    Let Assumptions~\ref{as: admissibility reverse KL} and~\ref{as: feasibility reverse KL} hold. Then, strong duality holds for Problem~\eqref{eq: unlearning reverse KL}: $P_{\text{\normalfont rev}}^\star = D_{\text{\normalfont rev}}^\star$. 
    Moreover, an optimal probability measure $p^\star$ is given by $p_{\text{\normalfont rev}}^\star(\cdot) = p_{\text{\normalfont rev}}^\star(\cdot;\lambda_{\text{\normalfont rev}}^\star)$. 
\end{theorem}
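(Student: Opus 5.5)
The plan is the standard nonconvex strong-duality argument for constrained learning over non-atomic measures: show that the set of achievable (objective, constraint) value vectors is convex, then run the usual separating-hyperplane argument in $\mathbb{R}^{m+1}$. Fix a reference measure $\mu$ (for instance $\mu = q + \sum_i q_{\text u}^i$) that dominates every admissible $p$. Write every admissible $p$ through its density $f = dp/d\mu$. Then
\[
D_{\text{KL}}(p\,\Vert\, q) = \int f \log \frac{f}{g_0}\, d\mu
\quad\text{and}\quad
D_{\text{KL}}(p\,\Vert\, q_{\text u}^i) = \int f \log \frac{f}{g_i}\, d\mu ,
\]
where $g_0, g_i$ are the densities of $q, q_{\text u}^i$. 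We assume $\mu$ is non-atomic, which is natural for a bounded domain with Lebesgue-type densities.

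Next, introduce the value set
\[
\mathcal{C} = \Bigl\{ (s_0, s) \in \mathbb{R}^{m+1} : \exists\, p \text{ admissible},\ s_0 \geq D_{\text{KL}}(p\Vert q),\ s_i \leq D_{\text{KL}}(p\Vert q_{\text u}^i) - b_i \Bigr\}.
\]
The key claim is that $\mathcal{C}$ is convex. Take $p_1, p_2$ admissible and $\theta\in[0,1]$. The vector measure on $\mathcal{B}(X)$ given by
\[
A \mapsto \Bigl( \int_A f_j \log\tfrac{f_j}{g_k}\, d\mu,\ \int_A f_j\, d\mu \Bigr)_{j=1,2;\ k=0,\dots,m}
\]
is finite, because admissibility gives bounded densities and absolute continuity, and it is non-atomic. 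By Lyapunov's convexity theorem there is a set $A$ on which every component equals $\theta$ times its value on $X$. Define $p_\theta = p_1$ on $A$ and $p_2$ on $A^c$. Then $p_\theta$ has total mass $\theta + (1-\theta) = 1$. Since KL integrands are pointwise in $f$, each KL value of $p_\theta$ equals $\theta\,\mathrm{KL}(p_1) + (1-\theta)\,\mathrm{KL}(p_2)$. Moreover $p_\theta$ is bounded and absolutely continuous, so it is admissible. This yields convexity of $\mathcal{C}$. This step is where nonconvexity is bypassed: piecewise gluing replaces mixing.

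With convexity in hand, the point $(P_{\text{rev}}^\star, 0)$ lies on the boundary of $\mathcal{C}$, since no point $(P_{\text{rev}}^\star - \delta, 0)$ belongs to $\mathcal{C}$. Separate $\mathcal{C}$ from the open set $\{(s_0,s): s_0 < P_{\text{rev}}^\star,\ s > 0\}$ (componentwise) by a hyperplane with normal $(\mu_0, \mu)\geq 0$. Assumption~\ref{as: feasibility reverse KL} supplies a strictly feasible $p$, a Slater point, which rules out $\mu_0 = 0$. Normalizing $\mu_0=1$ and setting $\lambda = \mu$ gives $L_{\text{rev}}(p,\lambda) \geq P_{\text{rev}}^\star$ for all admissible $p$. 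Hence $D_{\text{rev}}(\lambda)\geq P_{\text{rev}}^\star$. Weak duality gives the reverse inequality, so $P_{\text{rev}}^\star = D_{\text{rev}}^\star$ and $\lambda$ is dual optimal.

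For the primal recovery, note that the minimizer of the Lagrangian can be computed pointwise in $f$. Collecting terms gives
\[
\Bigl(1 - \sum_i\lambda_i\Bigr)\int f\log f \, d\mu + \int f \log\frac{\prod_i g_i^{\lambda_i}}{g_0}\, d\mu .
\]
When this is minimized over densities, it yields the Gibbs-type form $p^\star(\cdot;\lambda)\propto q^{\alpha_0}/\prod_i (q_{\text u}^i)^{\alpha_i}$, which is unique when $1-\sum_i\lambda_i>0$.

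The main obstacle is showing that this Lagrangian minimizer is actually primal optimal. I would argue as follows. Let $p^\star$ be primal optimal, assuming existence via the value set. Then strong duality with complementary slackness gives
\[
L_{\text{rev}}(p^\star,\lambda^\star) \leq P_{\text{rev}}^\star = D_{\text{rev}}(\lambda^\star),
\]
so $p^\star$ minimizes $L_{\text{rev}}(\cdot,\lambda^\star)$. Uniqueness of that minimizer then identifies $p^\star = p_{\text{rev}}^\star(\cdot;\lambda^\star)$.

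Two delicate points remain. The first is the degenerate case $\sum_i\lambda_i^\star \geq 1$, where the Lagrangian is concave or linear in $f$. There I would show that boundedness in Assumption~\ref{as: admissibility reverse KL} forces $D_{\text{rev}}$ to be $-\infty$ or non-maximal, so that case is excluded. The second is verifying finiteness and non-atomicity in Lyapunov's theorem, which I expect to take the most care.
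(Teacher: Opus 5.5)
Your strong-duality argument is correct and is essentially the paper's proof. The paper also applies Lyapunov's theorem to the vector measure built from the KL integrands and the masses of two densities, glues them along the Lyapunov set, and finishes with a supporting hyperplane and the Slater point. The differences are bookkeeping. You stay within probability measures (the mass coordinates already normalize $p_\theta$), so you avoid the paper's auxiliary problem with the explicit constraint $\mathbb{E}_\mu[p(x)]=1$, its extra multiplier, and the transfer step back. Your primal recovery (complementary slackness gives $p^\star\in\argmin_p L_{\text{rev}}(p,\lambda^\star)$, then uniqueness of the Gibbs minimizer when $\one^\top\lambda^\star<1$) is also more explicit than the paper's bare appeal to a saddle point. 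Note that existence of a primal minimizer does not follow from convexity of $\mathcal{C}$; like the paper, you are assuming it.

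The step that would fail is your plan to rule out $\one^\top\lambda^\star=1$ by showing $D_{\text{rev}}$ is non-maximal there: that face can be the unique dual optimum. Take $m=1$, $X=[0,1]$, $q_{\text{u}}^1\equiv 1$, $q=\tfrac32$ on $[0,\tfrac12]$ and $q=\tfrac12$ on $(\tfrac12,1]$, and $b_1=1$.

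Every feasible $p$ satisfies $D_{\text{KL}}(p\,\Vert\,q)=D_{\text{KL}}(p\,\Vert\,q_{\text{u}}^1)-\int p\log q\geq 1-\log\tfrac32$. Equality holds for the uniform density on $[0,1/e]$, so $P_{\text{rev}}^\star=1-\log\tfrac32$.

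On the dual side, $L_{\text{rev}}(p,1)=1-\int p\log q\geq 1-\log\tfrac32$, so $D_{\text{rev}}(1)=P_{\text{rev}}^\star$. For $\lambda<1$ one computes $D_{\text{rev}}(\lambda)=1-\log\tfrac32-(1-\lambda)\bigl[1-\log 2+\log(1+3^{-1/(1-\lambda)})\bigr]$, which is strictly smaller. For $\lambda>1$, concentrating $p$ gives $D_{\text{rev}}(\lambda)=-\infty$. So $\lambda^\star=1$ is the unique dual optimum.

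At $\lambda^\star=1$, every bounded density supported in $[0,\tfrac12]$ minimizes $L_{\text{rev}}(\cdot,1)$. This includes the uniform density on $[0,\tfrac12]$, which is infeasible because its divergence from $q_{\text{u}}^1$ is $\log 2<1$.

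Hence your uniqueness-based identification cannot be extended to this face. There, the ``moreover'' claim holds only in the weak sense your complementary-slackness step already delivers: some minimizer of $L_{\text{rev}}(\cdot,\lambda^\star)$ is optimal. The stronger reading needs an extra nondegeneracy hypothesis. Roughly, you need $q/\prod_i(q_{\text{u}}^i)^{\lambda_i}$ to attain its supremum only on a null set, so that the tilted minimizers concentrate and $D_{\text{rev}}$ decreases toward the face. The paper's own proof does not address this case either.
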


We defer the proof of Theorem~\ref{thm: unlearning reverse KL strong duality} to Appendix~\ref{app: reverse kl strong duality}, and sketch our key technical approach below. 

A sufficient condition for strong duality to hold for Problem~\eqref{eq: unlearning reverse KL} is that its epigraph is convex~\cite{chamon2022constrained}. To show this, we observe that all probability measures of interest have continuous domains. This allows us to define a non-atomic vector measure in terms of the objective and constraint functions. A key step is a new application of Lyapunov's convexity theorem~\cite{olech1968range}, which proves the image of a non-atomic vector measure is convex. We stress that it is infeasible to directly apply Lyapunov's convexity theorem, since the originally-constructed vector measure~\cite{olech1968range} is not necessarily a probability measure.

Having established strong duality, we evaluate a solution to Problem~\eqref{eq: unlearning reverse KL} given an optimal dual variable $\lambda_{\text{rev}}^\star$ in Corollary~\ref{cor: optimal distributions reverse KL}. Denote $\hat\lambda_i \DefinedAs \frac{\lambda_i}{1-\one^\top\lambda }$. When $\one^\top\lambda \neq 1$, we define an unlearning target $q_{\text{u}}^{\dagger}(\cdot;\lambda)$ as a ratio of tilted distributions,
\begin{equation}\label{eq: negation neq}
    q_{\text{\normalfont u}}^{\dagger} (\cdot;\lambda)
    \; \DefinedAs \;
    \frac{1}{ Z^\dagger_{\text{\normalfont 
    u}}(\lambda)}
    \frac{(q(\cdot))^{1/(1-\one^\top\lambda)}}{\prod_{i\,=\,1}^m(q_{\text{\normalfont u}}^i(\cdot))^{\hat\lambda_i}} 
\end{equation}
where $Z_{\text{\normalfont 
    u}}^{\dagger}(\lambda) \DefinedAs \int {(q(x))^{1/(1-\one^\top\lambda)}}/{\prod_{i\,=\,1}^m(q_{\text{\normalfont u}}^i(x))^{\hat\lambda_i}} dx$.  
    When $\one^\top\lambda = 1$,~\eqref{eq: negation neq} can be simplified in Appendix~\ref{app: optimal distributions reverse KL}; however, this is a degenerate solution, so we omit it here.
    Since $q$ and $\{q_{\text{\normalfont u}}^i\}_{i\,=\,1}^m$ share the same support, the unlearning target~\eqref{eq: negation neq} is a proper probability measure. Outside the support, $\frac{0}{0} = 0$ by convention. Viewing $q_{\text{\normalfont u}}^i$ as a concept distribution to be unlearned,~\eqref{eq: negation neq} reweighs the pretrained model $q$ by the reciprocal of the tilted distribution $q_{\text{\normalfont u}}^i$. Hence,~\eqref{eq: negation neq} excludes, or downweights, samples that are likely in $q_{\text{\normalfont u}}^i$. Similar targets have been used in concept erasure~\cite{gandikota2023erasing} and negation composition~\cite{du2023reduce}. 

\begin{cor}[Optimal models]\label{cor: optimal distributions reverse KL}
    Let Assumptions~\ref{as: admissibility reverse KL} and~\ref{as: feasibility reverse KL} hold. 
    Then, given an optimal dual variable $\lambda_{\text{\normalfont rev}}^\star$ that satisfies $0 \leq \one^\top\lambda_{\text{\normalfont rev}}^\star < 1$, Problem~\eqref{eq: unlearning reverse KL} admits an explicit solution, 
    \begin{equation}\label{eq: unlearning solutions}
    p_{\text{\normalfont rev}}^\star(\cdot) 
    \; = \;
    q_{\text{\normalfont u}}^{\dagger}(\cdot; \lambda_{\text{\normalfont rev}}^\star).
    \end{equation} 
    Moreover, it is infeasible or degenerate when $\one^\top\lambda_{\text{\normalfont rev}}^\star \geq 1$.
\end{cor}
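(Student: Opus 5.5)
Since Theorem~\ref{thm: unlearning reverse KL strong duality} already gives strong duality and identifies an optimal measure as $p_{\text{rev}}^\star(\cdot;\lambda_{\text{rev}}^\star)$, the minimizer of the Lagrangian at the optimal dual variable, the corollary reduces to computing this Lagrangian minimizer in closed form for fixed $\lambda\geq 0$.

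\textbf{Rewriting the Lagrangian.} For admissible $p$ with density $p(x)$, expand the divergences to get
\[
L_{\text{rev}}(p,\lambda) \;=\; \lambda^\top b + \int p(x)\Big[(1-\one^\top\lambda)\log p(x) - \log q(x) + \sum_{i=1}^m \lambda_i \log q_{\text{u}}^i(x)\Big]dx .
\]

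\textbf{Case $\one^\top\lambda<1$.} Factor out $c \DefinedAs 1-\one^\top\lambda>0$. The bracket becomes
\[
c\Big[\log p(x) - \tfrac{1}{c}\log q(x) + \sum_i \hat\lambda_i \log q_{\text{u}}^i(x)\Big] \;=\; c\,\log\frac{p(x)}{Z^\dagger_{\text{u}}(\lambda)\,q_{\text{u}}^\dagger(x;\lambda)} .
\]
Hence
\[
L_{\text{rev}}(p,\lambda) \;=\; \lambda^\top b + c\,D_{\text{KL}}\big(p\,\Vert\, q_{\text{u}}^\dagger(\cdot;\lambda)\big) - c\log Z^\dagger_{\text{u}}(\lambda).
\]
By Gibbs' inequality this is uniquely minimized at $p=q_{\text{u}}^\dagger(\cdot;\lambda)$. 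Setting $\lambda=\lambda_{\text{rev}}^\star$ and applying Theorem~\ref{thm: unlearning reverse KL strong duality} then gives~\eqref{eq: unlearning solutions}.

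This step needs $Z^\dagger_{\text{u}}(\lambda)<\infty$ so that $q_{\text{u}}^\dagger$ is a proper probability measure. I would argue this from the bounded domain $X$, the common support of $q$ and the $q_{\text{u}}^i$, and boundedness of the admissible densities (Assumption~\ref{as: admissibility reverse KL}), with the $0/0=0$ convention outside the support. I expect this integrability check to be the main technical obstacle. It is precisely where one needs $\hat\lambda_i\geq 0$, and hence $\one^\top\lambda<1$.

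\textbf{Case $\one^\top\lambda\geq 1$.} Here the coefficient $c$ of $\int p\log p$ is $\leq 0$, so the bracket is no longer a coercive entropy term.
\begin{itemize}
\item If $c<0$, the Lagrangian is concave in $p$. Concentrating $p$ on shrinking sets, which is allowed because the measures are non-atomic, drives $\int p\log p\to+\infty$ and hence $L_{\text{rev}}\to-\infty$. Then $D_{\text{rev}}(\lambda)=-\infty$. Since Assumption~\ref{as: feasibility reverse KL} makes $P^\star_{\text{rev}}$ finite and strong duality holds, such $\lambda$ cannot be dual optimal, which is the "infeasible" case.
\item If $c=0$, the Lagrangian is linear in $p$. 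The infimum then places all mass on the maximizers of $\log q-\sum_i\lambda_i\log q_{\text{u}}^i$, giving a degenerate, possibly Dirac-like, solution. This is the degenerate case deferred to Appendix~\ref{app: optimal distributions reverse KL}.
\end{itemize}
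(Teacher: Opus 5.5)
Your route matches the paper's proof. You use the same rewriting $L_{\text{rev}}(p,\lambda)=\lambda^\top b+(1-\one^\top\lambda)\bigl(D_{\text{KL}}(p\,\Vert\,q_{\text{u}}^\dagger(\cdot;\lambda))-\log Z_{\text{u}}^\dagger(\lambda)\bigr)$, Gibbs' inequality when $\one^\top\lambda<1$, a Dirac-type minimizer when $\one^\top\lambda=1$, a Lagrangian unbounded below when $\one^\top\lambda>1$, and then Theorem~\ref{thm: unlearning reverse KL strong duality} at $\lambda_{\text{rev}}^\star$. Your shrinking-set argument for $c<0$ is actually tighter than the paper's, which argues through inadmissible $p$ with infinite divergence. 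Take the shrinking sets inside a fixed positive-measure region where $q$ and all $q_{\text{u}}^i$ lie in $[1/n,n]$, so that the cross terms stay bounded.

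The step you single out, $Z_{\text{u}}^\dagger(\lambda)<\infty$, does not follow from a bounded domain, common support and bounded densities. Your remark that this is where $\hat\lambda_i\geq 0$ is needed also has the direction reversed. For $\hat\lambda_i>0$ the $q_{\text{u}}^i$ sit in the denominator, so upper bounds on them are useless, and divergence comes precisely from regions where $q_{\text{u}}^i$ is small. For example, take $X=[0,1]$, $q\equiv 1$ and $q_{\text{u}}(x)=2x$. Every $\lambda\in[1/2,1)$ gives $\hat\lambda\geq 1$ and $Z_{\text{u}}^\dagger(\lambda)=\int_0^1(2x)^{-\hat\lambda}\,dx=\infty$. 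The paper itself only asserts properness from the common support.

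The way to close this is through dual optimality rather than regularity. Write $g\DefinedAs q^{1/c}/\prod_i(q_{\text{u}}^i)^{\hat\lambda_i}$, so that $L_{\text{rev}}(p,\lambda)=\lambda^\top b+c\int p\log(p/g)$ with no normalization needed. Suppose $\int g=\infty$. Let $A_n$ be the set where $q$ and all $q_{\text{u}}^i$ lie in $[1/n,n]$, and let $p_n$ equal $g/\int_{A_n}g$ on $A_n$ and zero elsewhere. These are bounded admissible densities, and $L_{\text{rev}}(p_n,\lambda)=\lambda^\top b-c\log\int_{A_n}g\to-\infty$ by monotone convergence, so $D_{\text{rev}}(\lambda)=-\infty$. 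On the other hand, $D_{\text{rev}}(\lambda_{\text{rev}}^\star)\geq D_{\text{rev}}(0)=\inf_p D_{\text{KL}}(p\,\Vert\,q)\geq 0$. Hence $Z_{\text{u}}^\dagger(\lambda_{\text{rev}}^\star)<\infty$ automatically whenever $\one^\top\lambda_{\text{rev}}^\star<1$. The same bound $D_{\text{rev}}^\star\geq 0$ also excludes $c<0$ directly, without appealing to finiteness of $P_{\text{rev}}^\star$ or to strong duality.
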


The proof of Corollary~\ref{cor: optimal distributions reverse KL} evaluates the partial minimization of $L(p;\lambda)$ over $p\in\Delta$, and we provide it in Appendix~\ref{app: optimal distributions reverse KL}. When $0\leq \one^\top\lambda_{\text{\normalfont rev}}^\star<1$, the solution~\eqref{eq: unlearning solutions} captures the effect of the concept unlearning through the distribution ratio~\eqref{eq: negation neq}. An example of this is the concept erasing~\cite{gandikota2023erasing}. We note that when $\one^\top\lambda_{\text{\normalfont rev}}^\star > 1$, the distribution ratio~\eqref{eq: negation neq} is reversed; this leads to an infeasible unlearning target, which we therefore exclude. Hence, the reverse KL constraints effectively characterize concept unlearning.
  
\subsection{Forward KL-Constrained Unlearning}\label{subsec: data unlearning}

We next move to Problem~\eqref{eq: unlearning forward KL}. The forward KL objective ensures that the model remains \emph{likely} under the pretrained model, while the $m$ forward KL constraints \emph{reduce} the model's likelihood of generating samples from $m$ undesirable distributions. To ensure the well-posedness, we assume the following admissibility conditions.

\begin{assumption}[Admissibility]\label{as: admissible forward KL} 
    Any probability measure $p \in \Delta$ satisfies two conditions: (i) $q$ and $q_{\text{\normalfont u}}^i$ are absolutely continuous with respect to $p$; and (ii) $p$ is bounded, i.e.,
    \begin{equation}\label{eq: admissible measures forward KL}
        q,\, q_{\text{\normalfont u}}^i \ll  p \, \text{ for } \, i \in [m] 
        \; \text{ and } \;
        p(x) < \infty 
        \, \text{ for } \, 
        x \in X.
    \end{equation}
\end{assumption}

The absolute continuity ensures the boundedness of forward KL divergence. Problem~\eqref{eq: unlearning forward KL} is a \emph{non-convex} optimization problem because the superlevel set of forward KL divergence is \emph{nonconvex}. Again, strong duality is not ensured by convex duality analysis~\cite{boyd2004convex}.

\begin{assumption}[Feasibility]
\label{as: feasibility forward KL}
    There exists an admissible probability measure $p \in \Delta$: $D_{\text{\normalfont KL}}(q_{\text{\normalfont u}}^i \,\Vert\, p) >  b_i$ for $i \in [m]$.
\end{assumption}

Compared to Assumption~\ref{as: feasibility reverse KL}, we can always find a $p$ such that few of its mass is located on samples with high probability under $q_{\text{u}}^i$. Thus,
Assumption~\ref{as: feasibility reverse KL} is also practically achievable. Let $p_{\text{fw}}^\star$ be a solution to Problem~\eqref{eq: unlearning forward KL} and denote $P_{\text{fw}}^\star = D_{\text{KL}}(q \,\Vert\, p_{\text{fw}}^\star)$. Denote $\lambda \DefinedAs [\,\lambda_1,\ldots,\lambda_m\,]^\top$. Let the Lagrangian for Problem~\eqref{eq: unlearning forward KL} be
\[
    \begin{array}{rcl}
         L_{\text{fw}}(p, \lambda)
        & = & 
        \displaystyle
        D_{\text{KL}}(q \,\Vert\, p)
        \, + \,
        \sum_{i\,=\,1}^m \lambda_i \left(
        b_i 
        - D_{\text{KL}}( q_{\text{u}}^i \,\Vert\, p)
        \right)
    \end{array}
\]
and the associated dual function is given by $D_{\text{fw}}(\lambda) 
     =  \min_{p\,\in\,\Delta}  L_{\text{fw}}(p, \lambda)$, 
where the minimization is achieved at $p_{\text{fw}}^\star(\cdot;\lambda)$. 
Let an optimal dual variable be $\lambda_{\text{fw}}^\star \in \argmax_{\lambda\,\geq\,0} D_{\text{fw}}(\lambda)$, and the optimal value of the dual function be $D_{\text{fw}}^\star \DefinedAs D_{\text{fw}}(\lambda_{\text{fw}}^\star)$. Similar to Theorem~\ref{thm: unlearning reverse KL strong duality}, we next show that $p_{\text{fw}}^\star(\cdot;\lambda)$ is a solution to Problem~\eqref{eq: unlearning forward KL} if $\lambda = \lambda_{\text{fw}}^\star$. To do so, we prove that Problem~\eqref{eq: unlearning forward KL} is strongly dual.

\begin{theorem}[Strong duality]\label{thm: unlearning forward KL strong duality}
    Let Assumptions~\ref{as: admissible forward KL} and~\ref{as: feasibility forward KL} hold. Then, strong duality holds for Problem~\eqref{eq: unlearning forward KL}: $P_{\text{\normalfont fw}}^\star = D_{\text{\normalfont fw}}^\star$. Moreover, an optimal probability measure $p_{\text{\normalfont fw}}^\star$ is given by $p_{\text{\normalfont fw}}^\star(\cdot) = p_{\text{\normalfont fw}}^\star(\cdot;\lambda_{\text{\normalfont fw}}^\star)$.
\end{theorem}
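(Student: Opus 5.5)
We follow the same route as for Theorem~\ref{thm: unlearning reverse KL strong duality}: we show that the set of achievable (constraint, objective) values of Problem~\eqref{eq: unlearning forward KL} has a convex epigraph. We then invoke the sufficient condition of~\cite{chamon2022constrained}, together with the Slater-type Assumption~\ref{as: feasibility forward KL}, to conclude $P_{\text{fw}}^\star = D_{\text{fw}}^\star$.

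Concretely, define the cost map $\mathcal{C}(p) = \big(D_{\text{KL}}(q\,\Vert\,p),\, b_1 - D_{\text{KL}}(q_{\text{u}}^1\,\Vert\,p),\ldots,b_m - D_{\text{KL}}(q_{\text{u}}^m\,\Vert\,p)\big)$ over admissible $p$. We need convexity of the set
\[
\mathcal{E} = \{(s,u)\in\mathbb{R}\times\mathbb{R}^m : \exists\, p \text{ admissible},\ D_{\text{KL}}(q\Vert p)\le s,\ b_i - D_{\text{KL}}(q_{\text{u}}^i\Vert p)\le u_i\}.
\]
Take $(s,u),(s',u')\in\mathcal{E}$ with witnesses $p,p'$ and $\theta\in(0,1)$. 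We write each forward KL as an integral of a density against a fixed reference. For instance, with $\mu = (q + \sum_i q_{\text{u}}^i + p + p')/(m+3)$, the terms are $-\int q\log p\,dx$, $-\int q_{\text{u}}^i\log p\,dx$ (up to constants independent of $p$), plus $\int p$ for normalization. We then form the non-atomic $\mathbb{R}^{m+3}$-valued vector measure
\[
\nu(A) = \Big(\int_A q\log\tfrac{q}{p}, \int_A q\log\tfrac{q}{p'}, \{\int_A q_{\text{u}}^i\log\tfrac{q_{\text{u}}^i}{p}-\int_A q_{\text{u}}^i\log\tfrac{q_{\text{u}}^i}{p'}\}_i, \int_A p, \int_A p'\Big).
\]
It is non-atomic because all measures have densities on a continuous domain (Assumption~\ref{as: admissible forward KL} makes the integrands finite and integrable). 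By Lyapunov's convexity theorem~\cite{olech1968range}, there is a Borel set $A$ with $\nu(A)=\theta\nu(X)$.

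Set $p_\theta = p\,\one_A + p'\,\one_{A^c}$. The last two coordinates give $\int_A p = \theta$ and $\int_{A^c}p' = 1-(1-\theta) = \theta$. This is where the naive construction fails to be a probability measure, so we must include both mass coordinates: they give $\int p_\theta = \theta + (1-\theta)\cdot$... Matching masses exactly requires the pair $(\int_A p,\int_A p')=(\theta,\theta)$, which yields total mass $\theta + 1-\theta = 1$. The KL coordinates then give $D_{\text{KL}}(q\Vert p_\theta) = \theta D_{\text{KL}}(q\Vert p) + (1-\theta)D_{\text{KL}}(q\Vert p')$, and similarly for each $q_{\text{u}}^i$. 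Here the integrals over $A$ and $A^c$ are split using the chosen coordinates: we include $\int_A q\log(q/p)$ and $\int_A q\log(q/p')$ separately, and likewise for each $q_{\text{u}}^i$, so $\nu$ has $2(m+1)+2$ coordinates. Hence $(\theta s+(1-\theta)s', \theta u + (1-\theta)u')\in\mathcal{E}$, witnessed by $p_\theta$. The measure $p_\theta$ is admissible: it is bounded, and $q,q_{\text{u}}^i\ll p_\theta$ since both $p,p'$ dominate them. Convexity follows.

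With $\mathcal{E}$ convex and a strictly feasible point from Assumption~\ref{as: feasibility forward KL}, a standard separating-hyperplane argument (as in~\cite{chamon2022constrained}) yields $P_{\text{fw}}^\star = D_{\text{fw}}^\star$ and existence of a dual maximizer $\lambda_{\text{fw}}^\star\ge 0$. For the second claim, strong duality gives the chain $P_{\text{fw}}^\star = D_{\text{fw}}(\lambda_{\text{fw}}^\star) = \min_p L_{\text{fw}}(p,\lambda_{\text{fw}}^\star) \le L_{\text{fw}}(p_{\text{fw}}^\star,\lambda_{\text{fw}}^\star)\le P_{\text{fw}}^\star$. Thus any primal optimum minimizes the Lagrangian and complementary slackness holds. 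Conversely, we must check that the Lagrangian minimizer $p_{\text{fw}}^\star(\cdot;\lambda_{\text{fw}}^\star)$ is feasible. We expect to argue this via uniqueness of the Lagrangian minimizer: pointwise minimization gives $p\propto q-\sum_i\lambda_i q_{\text{u}}^i$, a unique minimizer when it exists, so it must coincide with the primal optimum.

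The main obstacle is the Lyapunov step: ensuring the spliced $p_\theta$ is a genuine probability measure and that every KL term splits additively over $A$ and $A^c$. Both are handled by enlarging the vector measure with the mass and per-piece log-ratio coordinates, as above. A secondary subtlety is integrability of $q\log(q/p)$ restricted to subsets, which we obtain from the finiteness of the KLs at $p$ and $p'$.
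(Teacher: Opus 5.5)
Your proposal is correct and follows essentially the paper's route: the same $2(m+2)$-coordinate non-atomic vector measure built from the forward-KL integrands and the masses of $p$ and $p'$, Lyapunov splicing $p_\theta = p\,\one_A + p'\,\one_{A^c}$, and a supporting-hyperplane argument under strict feasibility. The differences are small: you keep normalization inside the domain instead of dualizing it with an extra multiplier and translating back, and you justify the ``moreover'' claim explicitly via uniqueness of the Gibbs minimizer $p\propto q-\sum_i\lambda_i q_{\text{u}}^i$.

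Two small repairs are needed. First, $\int_{A^c}p' = 1-\theta$ rather than $\theta$; your subsequent mass count is right. Second, admissibility does not make the constraint KLs finite, since membership in $\mathcal{E}$ only bounds them below. So when they are infinite, the $q_{\text{u}}^i$-coordinates of $\nu$ should be truncated before invoking Lyapunov, e.g.\ to $\min\{q_{\text{u}}^i\log(q_{\text{u}}^i/p),K\}$ on the bounded domain. The paper glosses over this point too.
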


Theorem~\ref{thm: unlearning forward KL strong duality} has a similar proof as the one for Theorem~\ref{thm: unlearning reverse KL strong duality}. We explain the proof of Theorem~\ref{thm: unlearning forward KL strong duality} in Appendix~\ref{app: unlearning forward KL strong duality}. 

With the strong duality in place, Corollary~\ref{cor: optimal distributions forward KL} captures an explicit solution to Problem~\eqref{eq: unlearning forward KL}. When ${q(x)} - 
        \sum_{i\,=\,1}^m \lambda_i{q_{\text{u}}^i(x)} \geq 0$ for any $x\in X$ and $\lambda\geq0$, we define an unlearning target $q_{\text{u}}^\triangleleft(\cdot;\lambda)$ as a difference between a pretrained model and a mixture distribution to be unlearned,
        \begin{equation}\label{eq: mixture minus}
            q_{\text{u}}^{\triangleleft}(\cdot;\lambda)
            \; = \; 
            \frac{1}{Z_{\text{u}}^\triangleleft(\lambda)} 
            \left(
            {q(\cdot)} 
            \, - \,
             \sum_{i\,=\,1}^m \lambda_i
             \,
             {q_{\text{u}}^i(\cdot)}
             \right)
        \end{equation}
        where $Z_{\text{u}}^\triangleleft(\lambda) \DefinedAs \int \left(
        {q(x)} - 
        \sum_{i\,=\,1}^m \lambda_i{q_{\text{u}}^i(x)}\right) dx$. The unlearning target~\eqref{eq: mixture minus} defines a valid distribution by reducing the likelihood of generating samples from a mixture. This target is similar to the data-unlearning~\cite{wu2025erasing,alberti2025data}, which runs gradient descent on the retained dataset and gradient ascent on the unlearning dataset.

\begin{cor}[Optimal models]\label{cor: optimal distributions forward KL}
    Let Assumptions~\ref{as: admissible forward KL} and~\ref{as: feasibility forward KL} hold. Then, given an optimal dual variable $\lambda_{\text{\normalfont fw}}^\star\geq0$, Problem~\eqref{eq: unlearning forward KL} admits an explicit solution,
    \begin{equation}\label{eq: unlearning solutions_1}
    p_{\text{\normalfont fw}}^\star(\cdot) 
    \; = \; 
    q_{\text{\normalfont u}}^{\triangleleft}(\cdot;\lambda_{\text{\normalfont fw}}^\star)
    \end{equation}
    where $\lambda_{\text{\normalfont fw}}^\star$ satisfies ${q(x)} - 
        \sum_{i\,=\,1}^m \lambda_i{q_{\text{\normalfont u}}^i(x)} \geq 0$ for $x\in X$.
\end{cor}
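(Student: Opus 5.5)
By Theorem~\ref{thm: unlearning forward KL strong duality}, an optimal solution is the Lagrangian minimizer $p_{\text{fw}}^\star(\cdot;\lambda_{\text{fw}}^\star)$. It therefore suffices to compute, for fixed $\lambda\geq0$, the minimizer of $L_{\text{fw}}(p,\lambda)$ over $p\in\Delta$, and then evaluate it at $\lambda=\lambda_{\text{fw}}^\star$.

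\textbf{Reduction to a linear-in-$\log p$ problem.} Expanding the divergences gives
\[
L_{\text{fw}}(p,\lambda) = C(\lambda) - \int \Big(q(x) - \sum_{i=1}^m \lambda_i q_{\text{u}}^i(x)\Big)\log p(x)\,dx ,
\]
where $C(\lambda)=\int q\log q - \sum_i\lambda_i\int q_{\text{u}}^i\log q_{\text{u}}^i + \lambda^\top b$ does not depend on $p$. Set $r_\lambda(x) \DefinedAs q(x)-\sum_i\lambda_i q_{\text{u}}^i(x)$ and $Z=\int r_\lambda = 1-\one^\top\lambda$. The task becomes maximizing $\int r_\lambda\log p$ over densities $p$ that are admissible in the sense of Assumption~\ref{as: admissible forward KL}.

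\textbf{Solving the reduced problem.} Assume first that $r_\lambda\ge0$ pointwise and $Z>0$. Then $\tilde r=r_\lambda/Z$ is a probability density, and
\[
\int r_\lambda\log p = Z\Big(\int \tilde r\log\tilde r - D_{\text{KL}}(\tilde r\,\Vert\,p)\Big).
\]
Gibbs' inequality shows this is uniquely maximized at $p=\tilde r=q_{\text{u}}^\triangleleft(\cdot;\lambda)$. Alternatively, one can write the first-order condition $r_\lambda/p=\mu$ with a normalization multiplier $\mu$, which gives $\mu=Z$. Evaluating at $\lambda=\lambda_{\text{fw}}^\star$ yields \eqref{eq: unlearning solutions_1}.

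\textbf{Main obstacle: justifying the sign condition on $\lambda_{\text{fw}}^\star$.} We must explain why $\lambda_{\text{fw}}^\star$ satisfies $r_{\lambda^\star}\ge0$ on $X$, and this is where I expect the real difficulty. Suppose $r_\lambda(x)<0$ on a set $A$ of positive measure. Then $L_{\text{fw}}$ can be made to decrease without bound by letting $p\to0$ on $A$ while keeping $p>0$ elsewhere. The limit would violate the absolute continuity $q_{\text{u}}^i\ll p$ required by Assumption~\ref{as: admissible forward KL}, but the infimum over admissible $p$ is still $-\infty$. So $D_{\text{fw}}(\lambda)=-\infty$ for such $\lambda$.

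Strong duality together with feasibility (Assumption~\ref{as: feasibility forward KL}) gives $D_{\text{fw}}^\star=P_{\text{fw}}^\star<\infty$. To conclude that $D_{\text{fw}}^\star>-\infty$, I would use admissible $q$, which yields a finite primal value. Hence the dual maximizer lies in $\{\lambda\ge0 : r_\lambda\ge0 \text{ a.e.}\}$. On this set $Z\ge0$. The case $Z=0$ forces $r_\lambda\equiv0$ a.e., which I would treat as degenerate, analogous to the case $\one^\top\lambda=1$ in Corollary~\ref{cor: optimal distributions reverse KL}.

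One point needs care. When $r_\lambda$ vanishes on a set of positive measure, the minimizer $\tilde r$ may itself violate $q_{\text{u}}^i\ll p$, so the minimum is only an infimum over admissible $p$. I would handle this by interpreting $\min$ as $\inf$ and approximating $\tilde r$ by the admissible mixtures $(1-\delta)\tilde r+\delta q$.
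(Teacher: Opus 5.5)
Your proposal is correct and follows essentially the same route as the paper. Both arguments rewrite $L_{\text{fw}}$ as a $p$-independent constant plus $Z_{\text{u}}^\triangleleft(\lambda)\, D_{\text{KL}}(q_{\text{u}}^\triangleleft(\cdot;\lambda)\,\Vert\,p)$, rule out any $\lambda^\star$ with $q-\sum_i\lambda_i^\star q_{\text{u}}^i<0$ because the Lagrangian would be unbounded below while strong duality forces $L_{\text{fw}}(p_{\text{fw}}^\star,\lambda_{\text{fw}}^\star)=D_{\text{KL}}(q\,\Vert\,p_{\text{fw}}^\star)\ge 0$, and read off the minimizer via Theorem~\ref{thm: unlearning forward KL strong duality}. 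Your handling of the a.e.\ sign condition, the degenerate case $\one^\top\lambda=1$, and attainment over admissible $p$ goes beyond what the paper checks; the bound $D_{\text{fw}}^\star>-\infty$ that you justify somewhat obliquely follows at once from $D_{\text{fw}}^\star=P_{\text{fw}}^\star\ge 0$ (or $D_{\text{fw}}^\star\ge D_{\text{fw}}(0)\ge 0$) by nonnegativity of KL.
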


We provide the proof of Corollary~\ref{cor: optimal distributions forward KL} in Appendix~\ref{app: optimal distributions forward KL}.

\subsection{Likelihood-Constrained Unlearning}\label{subsec: likelihood unlearning}
 
We now consider Problem~\eqref{eq: unlearning alignment}. A key distinction from Problem~\eqref{eq: unlearning reverse KL} is that it constrains the likelihood of generating samples from undesirable concepts, rather than the KL divergences. The likelihood constraints force the model to generate samples from the unlearning concepts less often, with each bounded by $\epsilon_i$. This constraint is intuitive since it prevents the model from assigning high probability to regions associated with the unlearning distributions. Since the likelihood-based feasible set is convex, we can directly conclude strong duality~\cite{boyd2004convex}.  

\begin{assumption}[Feasibility]
\label{as: feasibility reverse KL likelihood}
    There exists an admissible probability measure $p \in \Delta$: $\mathbb{E}_p[ q_{\text{\normalfont u}}^i ] <  \epsilon_i$ for $i \in [m]$.
\end{assumption}

Let $p_{\text{revl}}^\star$ be a solution to Problem~\eqref{eq: unlearning alignment} and denote $P_{\text{revl}}^\star = D_{\text{KL}}(p_{\text{revl}}^\star\,\Vert\, q)$. Denote $\lambda \DefinedAs [\,\lambda_1,\ldots,\lambda_m\,]^\top$. Let the Lagrangian for Problem~\eqref{eq: unlearning alignment} be
\[
    \begin{array}{rcl}
         L_{\text{revl}}(p, \lambda)
        & = & 
        \displaystyle
        D_{\text{KL}}(p\,\Vert\, q)
        \, + \,
        \sum_{i\,=\,1}^m \lambda_i \left(\,
         \mathbb{E}_{p} [ \,q_{\text{u}}^i \,]
         -
         \epsilon_i
        \,\right)
    \end{array}
\]
and the associated dual function is given by $D_{\text{revl}}(\lambda) 
     =  \min_{p\,\in\,\Delta}  L_{\text{revl}}(p, \lambda)$, 
where the minimization is achieved at
\begin{equation}\label{eq: likelihood exp}
    p_{\text{revl}}^\star(\cdot;\lambda) \;=\; \frac{1}{Z_{\text{u}}^{\circ}(\lambda)}\, q(\cdot)\, {\rm e}^{-\sum_{i\,=\,1}^m \lambda_i \, q_{\text{u}}^i(\cdot)}
\end{equation}
where ${Z_{\text{u}}^{\circ}(\lambda)} \DefinedAs \int q(x) {\rm e}^{-\sum_{i\,=\,1}^m \lambda_i q_{\text{u}}^i(x)} dx$. Let an optimal dual variable be $\lambda_{\text{revl}}^\star \in \argmax_{\lambda\,\geq\,0} D_{\text{revl}}(\lambda)$, and the optimal value of the dual function be $D_{\text{revl}}^\star \DefinedAs D_{\text{revl}}(\lambda_{\text{revl}}^\star)$.

\begin{theorem}[Strong duality]\label{thm: unlearning likelihood}
     Let Assumptions~\ref{as: admissibility reverse KL} and~\ref{as: feasibility reverse KL likelihood} hold. Then, strong duality holds for Problem~\eqref{eq: unlearning alignment}: $P_{\text{\normalfont revl}}^\star = D_{\text{\normalfont revl}}^\star$. Moreover, an optimal probability measure $p_{\text{\normalfont revl}}^\star$ is given by 
     \begin{equation}\label{eq: unlearning solutions_2}
     p_{\text{\normalfont revl}}^\star(\cdot) \; = \; p_{\text{\normalfont revl}}^\star(\cdot;\lambda_{\text{\normalfont revl}}^\star).
     \end{equation}
\end{theorem}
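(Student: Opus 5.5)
Problem~\eqref{eq: unlearning alignment} is convex, so the plan is to run a standard convex Lagrangian duality argument directly in the space of measures. The objective $p\mapsto D_{\text{KL}}(p\,\Vert\,q)$ is convex; indeed it is strictly convex on measures absolutely continuous with respect to $q$. Each constraint $p\mapsto \mathbb{E}_p[q_{\text{u}}^i]-\epsilon_i$ is linear in $p$. The admissible set from Assumption~\ref{as: admissibility reverse KL} is convex. First I will check that admissibility, namely absolute continuity and boundedness, is preserved under convex combinations; this is immediate.

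The core step is to show that the set
\[
\mathcal{C}=\left\{(s,u)\in\mathbb{R}\times\mathbb{R}^m : \exists\, p \text{ admissible},\ D_{\text{KL}}(p\Vert q)\le s,\ \mathbb{E}_p[q_{\text{u}}^i]-\epsilon_i\le u_i\right\}
\]
is convex. This follows from the convexity of the objective, the linearity of the constraints, and the convexity of the admissible set. Next I will separate the point $(P_{\text{revl}}^\star-\delta,0)$ from $\mathcal{C}$ with a hyperplane $(\mu_0,\mu)\ge0$. Assumption~\ref{as: feasibility reverse KL likelihood} provides a strictly feasible $p$, so Slater's condition holds, which forces $\mu_0>0$. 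Normalizing then gives $D_{\text{revl}}(\lambda)\ge P_{\text{revl}}^\star$ for $\lambda=\mu/\mu_0$. Combined with weak duality, which holds trivially since $\lambda\ge0$ and the constraint terms are nonpositive at feasible $p$, this yields $P_{\text{revl}}^\star=D_{\text{revl}}^\star$. The same argument shows that the dual optimum is attained. Since the dual function is concave and bounded above on a set where Slater holds, the dual optimal set is nonempty and bounded. This is the same route as~\cite{boyd2004convex}, transplanted to the infinite-dimensional primal; only the image space is finite-dimensional, so no topological closure issues arise.

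Next I will compute the Lagrangian minimizer. For fixed $\lambda$, write
\[
L_{\text{revl}}(p,\lambda)=\int p\log\frac{p}{q\,e^{-\sum_i\lambda_iq_{\text{u}}^i}}\,dx-\sum_i\lambda_i\epsilon_i=D_{\text{KL}}\bigl(p\,\Vert\,p_{\text{revl}}^\star(\cdot;\lambda)\bigr)-\log Z_{\text{u}}^{\circ}(\lambda)-\sum_i\lambda_i\epsilon_i .
\]
By Gibbs' inequality this is minimized uniquely at~\eqref{eq: likelihood exp}. We need $0<Z_{\text{u}}^\circ(\lambda)\le1$, which holds because $q_{\text{u}}^i\ge0$ and $q_{\text{u}}^i$ is finite $q$-a.e.

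The main obstacle is passing from strong duality to primal optimality of $p_{\text{revl}}^\star(\cdot;\lambda_{\text{revl}}^\star)$. I will handle it through the saddle-point and KKT argument. Let $p^\star$ be a primal solution, whose existence I argue via lower semicontinuity and compact sublevel sets of $D_{\text{KL}}(\cdot\Vert q)$ together with closedness of the linear constraints. Then
\[
P^\star=D(\lambda^\star)\le L(p^\star,\lambda^\star)\le P^\star .
\]
Hence $p^\star$ minimizes $L(\cdot,\lambda^\star)$ and complementary slackness holds. Because the Lagrangian minimizer is unique by strict convexity (equivalently, by the Gibbs identity above), $p^\star=p_{\text{revl}}^\star(\cdot;\lambda_{\text{revl}}^\star)$. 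This gives~\eqref{eq: unlearning solutions_2}.

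The uniqueness step is exactly what is unavailable in the nonconvex KL cases, and it is why this proof is simpler than that of Theorem~\ref{thm: unlearning reverse KL strong duality}. If existence of $p^\star$ is delicate, I would instead verify feasibility of $p_{\text{revl}}^\star(\cdot;\lambda^\star)$ directly. The dual function is differentiable with $\partial_{\lambda_i}D=\epsilon_i-\mathbb{E}_{p(\cdot;\lambda)}[q_{\text{u}}^i]$, so dual first-order optimality gives both feasibility and complementary slackness. Optimality then follows from $D_{\text{KL}}=D^\star=P^\star$.
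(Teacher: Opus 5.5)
Your proposal is correct and follows the same route as the paper, which simply observes that \eqref{eq: unlearning alignment} is convex (convex objective, linear constraints) and invokes Slater's condition with the standard argument of~\cite{boyd2004convex}; you additionally spell out the Gibbs identity $L_{\text{revl}}(p,\lambda)=D_{\text{KL}}(p\,\Vert\,p_{\text{revl}}^\star(\cdot;\lambda))-\log Z_{\text{u}}^{\circ}(\lambda)-\sum_i\lambda_i\epsilon_i$ and the saddle-point/uniqueness step that justifies the ``moreover'' claim, which the paper leaves implicit. There are two small slips: separating $(P_{\text{revl}}^\star-\delta,0)$ only yields $D_{\text{revl}}(\lambda)\ge P_{\text{revl}}^\star-\delta$ (separate the non-interior point $(P_{\text{revl}}^\star,0)$ directly, or let $\delta\to0$ and use your dual-attainment argument), and in your fallback the gradient is $\partial_{\lambda_i}D_{\text{revl}}(\lambda)=\mathbb{E}_{p_{\text{revl}}^\star(\cdot;\lambda)}[q_{\text{u}}^i]-\epsilon_i$, not its negative, which is the sign under which the optimality conditions for maximizing over $\lambda\ge0$ give feasibility and complementary slackness.
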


The proof of Theorem~\ref{thm: unlearning likelihood} is based on the convex duality analysis~\cite{boyd2004convex} in Appendix~\ref{app: unlearning likelihood}. Given an optimal dual variable $\lambda_{\text{revl}}^\star\geq 0$, the optimal model $p_{\text{revl}}^\star(\cdot;\lambda_{\text{revl}}^\star)$ that we describe in~\eqref{eq: likelihood exp} characterizes the effect of the concept unlearning through the negative exponent $-\sum_{i\,=\,1}^m \lambda_i q_{\text{u}}^i$. Regions with high likelihood under the unlearning distributions are downweighted exponentially.

\begin{remark}[Constrained vs. Regularized Formulations]
The solutions to Problems~\eqref{eq: unlearning reverse KL},~\eqref{eq: unlearning forward KL}, and~\eqref{eq: unlearning alignment} also admit regularized interpretations: each can be viewed as promoting separation from the distributions to be unlearned via suitable penalty terms. Under proper regularity conditions, for each feasible set of constraint thresholds, there exists a set of dual or penalty weights (or exponents) such that the constrained and regularized formulations agree at the level of optimal solutions. However, this equivalence is primarily variational and does not imply that the corresponding methods are equally effective in practice. We highlight three key reasons to prefer constrained formulations.
\begin{enumerate}
    \item[\text{\normalfont(i)}] Constraint thresholds are more interpretable than penalty weights.
    One can require the learned distribution to be separated from each unlearning distribution by a prescribed amount, or put an upper bound on the expected likelihood of undesirable samples. These specifications have direct semantic meaning, whereas penalty weights are indirect and harder to interpret.

    \item[\text{\normalfont(ii)}] Penalty weights do not directly specify the desired level of unlearning.
    A penalty weight, e.g., \(\alpha_i\) in the solution \(p_{\text{fw}}^\star(\cdot) \), does not mean that the \(i\)th unlearning target is \(\alpha_i\) times more important. Rather, at the optimum, it balances the marginal change in the unlearning objective against the marginal change in the retention objective. In contrast, a constraint level \(b_i\) or \(\epsilon_i\) directly specifies a target separation or likelihood level.

    \item[\text{\normalfont(iii)}] Constraints better control multiple unlearning targets.
    With many concepts or datasets, tuning separate penalty weights is impractical, while assigning the same weight to all targets can lead to uneven unlearning because some concepts are easier to erase than others. Constraint thresholds provide a direct way to enforce balanced unlearning across targets, achieving the desired erasure effect with less deviation from the pretrained model, as shown in Figures~\ref{fig: forward pareto} and~\ref{fig: reverseKL_pareto}.
\end{enumerate}
\end{remark}

%% file: sec_diffusion_camready.tex
\section{Unlearning for Diffusion Models}\label{sec: unlearning in score function space}

We instantiate the three formulations in Section~\ref{subsec: contribution} by letting all distributions be induced by diffusion models. 

\subsection{Reverse KL-Constrained Unlearning}\label{subsec: concept unlearning diffusion models_reverse KL}

We specify the probability measures $p$, $q$, and $q_{\text{u}}^i$ in Problem~\eqref{eq: unlearning reverse KL} as the point distributions $p_{0}(\cdot; s_p)$, $q_{0}(\cdot; s_q)$, and $q_{0}^i(\cdot; s_q^i)$, respectively, where $s_p$, $s_q$, and $s_q^i$ denote the corresponding score functions. We cast Problem~\eqref{eq: unlearning reverse KL} into a constrained optimization with functional constraints,
    \begin{align}\label{eq: unlearning reverse KL diffusion models_point-wise_score}
        \displaystyle
        \minimize_{s_p\,\in\,\mathcal{S}} \;\;
        &
        D_{\text{KL}}(p_0(\cdot; s_p)\,\Vert\, q_0(\cdot; s_q))
        \\
        \subject \;\;
        & D_{\text{KL}}(p_0(\cdot; s_p)\,\Vert\, q_0^i(\cdot; s_q^i)) \geq b_i \, \text{ for }\, i\in [m].\nonumber
    \end{align}
Problem~\eqref{eq: unlearning reverse KL diffusion models_point-wise_score} is a \emph{nonconvex} optimization problem because the mapping from the score function to the KL divergence is nonlinear. Nevertheless, when viewed as a function of the underlying probability measures, it has the same structure as Problem~\eqref{eq: unlearning reverse KL}. To exploit this structure, we assume that the score function class is sufficiently expressive to represent any target point distribution within the class of interest. This is mild, as diffusion models are implemented as overparameterized deep neural networks.

\begin{assumption}[Expressivity]\label{as: admissibility reverse KL_point-wise}
    There exists a class of point distributions in which each $p_{0}(\cdot)$ is induced by a backward process associated with a score function $s_p \in \mathcal{S}$.
\end{assumption}

\begin{assumption}[Feasibility]
\label{as: feasibility reverse KL_point-wise}
    There exists an admissible score $s_p$: $D_{\text{\normalfont KL}}(p_{0}(\cdot;s_p)\,\Vert\, q_{0}^i(\cdot; s_q^i)) >  b_i$ for $i \in [m]$.
\end{assumption}

Assumption~\ref{as: admissibility reverse KL_point-wise} is reasonable, because score-based backward processes converge to a data distribution under very mild conditions~\cite{chen2023sampling,li2025convergence}. Assumption~\ref{as: feasibility reverse KL_point-wise} resembles the rationale behind Assumption~\ref{as: feasibility reverse KL}. Let $s_{\text{rev}}^\star$ be a solution to Problem~\eqref{eq: unlearning reverse KL diffusion models_point-wise_score} and denote $\hat P_{\text{rev}}^\star = D_{\text{KL}}(p_{0}(\cdot; s_{\text{rev}}^\star)\,\Vert\, q_{0}(\cdot; s_q))$. The Lagrangian for Problem~\eqref{eq: unlearning reverse KL diffusion models_point-wise_score} is given by $\hat L_{\text{rev}}(s_p, \lambda) = L_{\text{rev}}(p_{0}(\cdot; s_p), \lambda)$, and its dual function is given by $\hat D_{\text{rev}}(\lambda) \DefinedAs \min_{s_p\,\in\,\mathcal{S}} L_{\text{rev}}(p_{0}(\cdot; s_p), \lambda)$. Let an optimal dual variable be $\hat\lambda_{\text{rev}}^\star \in\argmax_{\lambda\,\geq\,0} \hat D_{\text{rev}}(\lambda)$, and the optimal value of the dual function be $\hat D_{\text{rev}}^\star \DefinedAs \hat D_{\text{rev}}(\hat\lambda_{\text{rev}}^\star)$.

\begin{theorem}[Strong duality]\label{thm: unlearning reverse KL strong duality_point-wise}
    Let Assumptions~\ref{as: admissibility reverse KL_point-wise} and~\ref{as: feasibility reverse KL_point-wise} hold. Then, strong duality holds for Problem~\eqref{eq: unlearning reverse KL diffusion models_point-wise_score}: $\hat P_{\text{\normalfont rev}}^\star = \hat D_{\text{\normalfont rev}}^\star$.
\end{theorem}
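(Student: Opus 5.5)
The plan is to reduce Theorem~\ref{thm: unlearning reverse KL strong duality_point-wise} to Theorem~\ref{thm: unlearning reverse KL strong duality}. The reduction rests on a reparametrization argument: optimizing over score functions $s_p\in\mathcal{S}$ is the same as optimizing over the induced point distributions $p_0(\cdot;s_p)$.

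\textbf{Step 1: identify the feasible sets.} Let $\Delta_{\mathcal{S}} \DefinedAs \{p_0(\cdot;s_p): s_p\in\mathcal{S}\}$. By Assumption~\ref{as: admissibility reverse KL_point-wise}, every admissible point distribution of interest is induced by some $s_p\in\mathcal{S}$. Conversely, each score in $\mathcal{S}$ induces a bounded distribution that is absolutely continuous with respect to $q_0(\cdot;s_q)$ and $q_0^i(\cdot;s_q^i)$. This holds because backward processes are compositions of Gaussian kernels, so they have full support and bounded density on the bounded domain. Hence $\Delta_{\mathcal{S}}$ coincides with the admissible class on which Theorem~\ref{thm: unlearning reverse KL strong duality} is posed, taking $q=q_0(\cdot;s_q)$ and $q_{\text{u}}^i=q_0^i(\cdot;s_q^i)$. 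Under this identification, Assumption~\ref{as: feasibility reverse KL_point-wise} is exactly Assumption~\ref{as: feasibility reverse KL}, and Assumption~\ref{as: admissibility reverse KL} holds.

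\textbf{Step 2: transfer primal and dual values.} The objective and constraints of Problem~\eqref{eq: unlearning reverse KL diffusion models_point-wise_score} depend on $s_p$ only through $p_0(\cdot;s_p)$. Therefore
\[
\hat P_{\text{rev}}^\star=\inf_{s_p\text{ feasible}} D_{\text{KL}}(p_0(\cdot;s_p)\,\Vert\,q_0)=\inf_{p\in\Delta_{\mathcal{S}}\text{ feasible}} D_{\text{KL}}(p\,\Vert\,q_0)=P_{\text{rev}}^\star .
\]
For every $\lambda\ge0$ the same surjectivity argument gives
\[
\hat D_{\text{rev}}(\lambda)=\min_{s_p}L_{\text{rev}}(p_0(\cdot;s_p),\lambda)=\min_{p\in\Delta_{\mathcal{S}}}L_{\text{rev}}(p,\lambda)=D_{\text{rev}}(\lambda).
\]
Taking the supremum over $\lambda\ge 0$ yields $\hat D_{\text{rev}}^\star=D_{\text{rev}}^\star$.

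\textbf{Step 3: conclude.} Combining the two identities with Theorem~\ref{thm: unlearning reverse KL strong duality} gives $\hat P_{\text{rev}}^\star=P_{\text{rev}}^\star=D_{\text{rev}}^\star=\hat D_{\text{rev}}^\star$. As a by-product, the score realizing $p_{\text{rev}}^\star(\cdot;\lambda_{\text{rev}}^\star)$, which exists by expressivity, solves Problem~\eqref{eq: unlearning reverse KL diffusion models_point-wise_score}.

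\textbf{Main obstacle.} The delicate step is Step 1, namely ensuring that the image $\Delta_{\mathcal{S}}$ is exactly the set used in the Lyapunov-convexity argument behind Theorem~\ref{thm: unlearning reverse KL strong duality}. If $\Delta_{\mathcal{S}}$ were only a subset of it, the epigraph convexity could fail. I would handle this by reading Assumption~\ref{as: admissibility reverse KL_point-wise} as saying that $\Delta_{\mathcal{S}}$ contains the whole admissible class. Alternatively, one can note that the proof of Theorem~\ref{thm: unlearning reverse KL strong duality} only needs the mixtures and restrictions it constructs to remain admissible. Those mixtures and restrictions are bounded, absolutely continuous densities, so by expressivity they are again realized by scores. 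This makes the epigraph of Problem~\eqref{eq: unlearning reverse KL diffusion models_point-wise_score} the same convex set, which gives strong duality directly.
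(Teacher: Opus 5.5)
Your argument is correct under the same reading of Assumption~\ref{as: admissibility reverse KL_point-wise} that the paper adopts: every admissible point distribution equals $p_0(\cdot;s_p)$ for some $s_p\in\mathcal{S}$, ``and vice versa''. It also shares the paper's skeleton (strong duality in distribution space, then transfer to $\mathcal{S}$), but both halves are carried out differently.

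For the first half, the paper does not invoke Theorem~\ref{thm: unlearning reverse KL strong duality}. It rewrites $D_{\text{KL}}(p_0\,\Vert\,q_0)$ as a path-space integral with integrand $p_{0:T}\log\bigl(\int p_{0:T}\,dx_{1:T}/\int q_{0:T}\,dx_{1:T}\bigr)$ and reruns the Lyapunov epigraph argument on $X^{T+1}$. You instead observe that the point-wise problem is literally an instance of \eqref{eq: unlearning reverse KL} with $q=q_0(\cdot;s_q)$ and $q_{\text{u}}^i=q_0^i(\cdot;s_q^i)$, so Theorem~\ref{thm: unlearning reverse KL strong duality} applies verbatim on $X$. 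This is shorter, and it sidesteps a subtlety of the path-space version. That integrand depends on the marginal rather than pointwise on $p_{0:T}(x_{0:T})$. Hence pasting two path densities along a Lyapunov set that is not a cylinder $A\times X^{T}$ does not split the objective additively, and restricting to cylinder sets just reproduces your argument on $X$. The path-space framing mainly buys uniformity with Theorems~\ref{thm: unlearning forward KL strong duality_path-wise} and~\ref{thm: unlearning reverse KL likelihood strong duality_point-wise}, where the objective genuinely lives on paths.

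For the second half, the paper passes through a saddle point $(\bar p^\star_{0,\text{rev}},\bar\lambda^\star_{\text{rev}})$ and realizes its primal component by a score. That presupposes the primal and dual optima are attained. You transfer the primal value and the entire dual function $\lambda\mapsto\hat D_{\text{rev}}(\lambda)$ through the surjection $s_p\mapsto p_0(\cdot;s_p)$, which needs no attainment.

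Two small caveats. First, your Gaussian-kernel remark gives laws with full support on $\mathbb{R}^d$ rather than on the bounded $X$; this is harmless, since the Lyapunov step never uses boundedness. It also does not cover deterministic samplers such as DDIM. So it is the expressivity reading, not that remark, that carries Step~1. Second, in your fallback pasting argument you should drop the normalization coordinate from the epigraph. Pastings of two probability densities stay normalized, because the Lyapunov set also matches the mass components. Epigraph points with $\int_X p\,d\mu\neq 1$, however, come from unnormalized measures that no score realizes.
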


We prove Theorem~\ref{thm: unlearning reverse KL strong duality_point-wise} in Appendix~\ref{app: reverse kl strong duality_point-wise}. Despite the nonlinearity of Problem~\eqref{eq: unlearning reverse KL diffusion models_point-wise_score}, we exploit the convexity of a non-atomic vector measure---now it is defined on the path space---to establish strong duality. Since the score function class is sufficiently expressive, we then carry the result over to the score function space using the saddle-point characterization of strong duality. We note that the convexity of a non-atomic vector measure is unaffected by the nonlinearity introduced in~\eqref{eq: unlearning reverse KL diffusion models_point-wise_score} when passing to point distributions.

Exploiting strong duality, we introduce a primal-dual algorithm for solving Problem~\eqref{eq: unlearning reverse KL diffusion models_point-wise_score}, alternating between minimizing the Lagrangian via gradient descent, and maximizing the dual function via dual subgradient ascent: 
\begin{itemize}
    \item[(i)] Primal step is to solve the Lagrangian problem: $s^+ \in \argmin_{s}\hat{L}_{\text{rev}}(s,\lambda)$, where the Lagrangian $\hat{L}_{\text{rev}}(s,\lambda)$ evaluates the KL divergence to the target~\eqref{eq: negation neq}; 
    \item[(ii)] Dual step updates the dual variable via $\lambda^+ = \lambda + \eta \partial\hat{D}_{\text{rev}}(\lambda)$, where the subgradient $\partial\hat{D}_{\text{rev}}(\lambda)$ is evaluated using the current model $s$, and $\eta$ is the stepsize. 
\end{itemize}
See Appendix~\ref{app: algorithms} for the full description.

\subsection{Forward KL-Constrained Unlearning}\label{sec: data unlearning diffusion models}

Instead of the point distributions, we specify the probability measures $p$, $q$, $q_{\text{u}}^i$ in Problem~\eqref{eq: unlearning forward KL} as the path distributions $p_{0:T}(\cdot; s_p)$, $q_{0:T}(\cdot; s_q)$, and $q_{0:T}^i(\cdot; s_q^i)$. Problem~\eqref{eq: unlearning forward KL} can be cast as a constrained optimization problem,
    \begin{align}\label{eq: unlearning forward KL diffusion models_path-wise_score}
        \displaystyle
        \minimize_{s_p\,\in\,\mathcal{S}} 
        \;\; &  
        D_{\text{KL}}(q_{0:T}(\cdot; s_q)\,\Vert\, p_{0:T}(\cdot; s_p))
        \\
        \subject 
        \;\; &  D_{\text{KL}}(q^i_{0:T}(\cdot; s^i_q)\,\Vert\, p_{0:T}(\cdot; s_p)) \geq b_i\, \text{ for }\, i\in [m]. \nonumber
    \end{align}
Problem~\eqref{eq: unlearning forward KL diffusion models_path-wise_score} is a \emph{nonconvex} optimization problem for the same reason as Problem~\eqref{eq: unlearning reverse KL diffusion models_point-wise_score}. Similarly, we exploit the strong duality for Problem~\eqref{eq: unlearning forward KL} to establish strong duality. 

\begin{assumption}[Expressivity]\label{as: admissibility foward KL_path-wise}
    There is a class of path distributions in which each $p_{0:T}(\cdot)$ is induced by a backward process associated with a score function $s_p \in \mathcal{S}$.
\end{assumption}

\begin{assumption}[Feasibility]
\label{as: feasibility forward KL_path-wise}
    There exists an admissible score $s_p$: $D_{\text{\normalfont KL}}(q_{0:T}^i(\cdot; s_q^i)\,\Vert\, p_{0:T}(\cdot;s_p)) >  b_i$ for $i \in [m]$.
\end{assumption}

Assumption~\ref{as: admissibility foward KL_path-wise} is more restrictive than Assumption~\ref{as: admissibility reverse KL_point-wise}, as it uses joint distributions instead of marginal ones. Let $s_\text{fw}^\star$ be a solution to Problem~\eqref{eq: unlearning forward KL diffusion models_path-wise_score} and denote $\hat P_{\text{fw}}^\star = D_{\text{KL}}( q_{0:T}(\cdot; s_q)\,\Vert\,p_{0:T}(\cdot; s_{\text{fw}}^\star))$. The Lagrangian for Problem~\eqref{eq: unlearning forward KL diffusion models_path-wise_score} is given by $\hat L_\text{fw}(s_p, \lambda) = L_\text{fw}(p_{0:T}(\cdot; s_p), \lambda)$, and its dual function is given by $\hat D_\text{fw}(\lambda) \DefinedAs \min_{s_p\,\in\,\mathcal{S}} L_\text{fw}(p_{0:T}(\cdot; s_p), \lambda)$. Let an optimal dual variable be $\hat\lambda_\text{fw}^\star \in\argmax_{\lambda\,\geq\,0} \hat D_\text{fw}(\lambda)$, and the optimal value of the dual function be $\hat D_\text{fw}^\star \DefinedAs \hat D_\text{fw}(\hat\lambda_\text{fw}^\star)$. 

\begin{theorem}[Strong duality]\label{thm: unlearning forward KL strong duality_path-wise}
    Let Assumptions~\ref{as: admissibility foward KL_path-wise} and~\ref{as: feasibility forward KL_path-wise} hold. Then, strong duality holds for Problem~\eqref{eq: unlearning forward KL diffusion models_path-wise_score}: $\hat P_\text{\normalfont fw}^\star = \hat D_\text{\normalfont fw}^\star$.
\end{theorem}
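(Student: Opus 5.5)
Our plan is to reduce Theorem~\ref{thm: unlearning forward KL strong duality_path-wise} to Theorem~\ref{thm: unlearning forward KL strong duality}, applied on the path space $X^{T+1}$ rather than on $X$. This mirrors the argument for Theorem~\ref{thm: unlearning reverse KL strong duality_point-wise}.

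\emph{First step.} We instantiate Problem~\eqref{eq: unlearning forward KL} with measures over trajectories $x_{0:T}$:
\[
q \leftarrow q_{0:T}(\cdot;s_q), \qquad q_{\text{u}}^i \leftarrow q_{0:T}^i(\cdot;s_q^i), \qquad p \in \Delta_{\text{path}},
\]
where $\Delta_{\text{path}}$ is the class of path distributions from Assumption~\ref{as: admissibility foward KL_path-wise}. The path distributions induced by DDPM-type backward processes are products of Gaussian transition kernels with a Gaussian initial law. Hence they have densities with respect to Lebesgue measure, which is non-atomic, and they are bounded and mutually absolutely continuous. The admissibility requirements of Assumption~\ref{as: admissible forward KL} therefore hold on the path space. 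Assumption~\ref{as: feasibility forward KL_path-wise} supplies the Slater-type condition of Assumption~\ref{as: feasibility forward KL}.

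\emph{Second step.} We rerun the epigraph argument behind Theorem~\ref{thm: unlearning forward KL strong duality} on this path space. Fix a reference measure on $X^{T+1}$, for instance $q_{0:T}$ itself. Define the vector measure
\[
A \mapsto \left( \int_A q\log\tfrac{q}{p},\; \int_A q_{\text{u}}^i \log\tfrac{q_{\text{u}}^i}{p} \right).
\]
Lyapunov's convexity theorem shows its range is convex. The perturbation function $P(\xi)$ is the optimal value with thresholds $b_i+\xi_i$. We show $P$ is convex by splicing the density ratios of two near-optimal path measures over a measurable set $A$ and renormalizing. Convexity of $P$, together with Slater's condition, gives zero duality gap in the class $\Delta_{\text{path}}$. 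This yields a saddle point $(p^\star_{0:T},\lambda^\star)$ of $L_{\text{fw}}$ over $\Delta_{\text{path}}\times\mathbb{R}^m_+$.

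\emph{Third step.} We transfer the result to the score parametrization. By Assumption~\ref{as: admissibility foward KL_path-wise}, the map $s_p \mapsto p_{0:T}(\cdot;s_p)$ is onto $\Delta_{\text{path}}$. Consequently
\[
\hat D_{\text{fw}}(\lambda) = \min_{s_p} L_{\text{fw}}(p_{0:T}(\cdot;s_p),\lambda) = D_{\text{fw}}(\lambda),
\]
and $\hat P^\star_{\text{fw}} = P^\star_{\text{fw}}$. Combining these with the second step gives $\hat P_{\text{fw}}^\star = \hat D_{\text{fw}}^\star$. Equivalently, a score $s^\star$ realizing $p^\star_{0:T}$ forms a saddle point with $\lambda^\star$.

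\emph{Main obstacle.} The delicate step is the splicing in the second step. A spliced measure $\tilde p = \mathbb{1}_A p_1 + \mathbb{1}_{A^c} p_2$, after normalization, must still lie in the expressible path class. The concern is that the class is closed under splicing of densities but need not consist of Markov Gaussian-transition laws. We would handle this by taking $\Delta_{\text{path}}$ in Assumption~\ref{as: admissibility foward KL_path-wise} to be the full class of admissible bounded path densities; the assumption then asserts each is score-induced. We must also check the normalization: the forward-KL terms pick up $\log$ of the normalizing constant. We would handle this as in Theorem~\ref{thm: unlearning forward KL strong duality}, by choosing $A$ via Lyapunov so that both the mass $\int_A p_1$ and the KL contributions are convex combinations simultaneously. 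To do this we include the masses of $p_1$ and $p_2$ as extra coordinates of the vector measure.
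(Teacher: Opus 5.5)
Your proposal is correct and essentially matches the paper's proof. Both lift the problem to path space and apply Lyapunov's convexity theorem to a vector measure that collects the forward-KL integrands and the masses of two candidate path densities, so that splicing $p_1$ on $A$ with $p_2$ on its complement gives an exactly normalized measure with convex-combination values. Both then obtain zero duality gap from the Slater-type assumption and transfer the result to scores by reading the expressivity assumption as a correspondence onto all admissible path densities.

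The differences are cosmetic. You phrase the convexity step through the perturbation function, whereas the paper uses the epigraph with an explicit normalization constraint and then removes its multiplier. You also transfer the values $\hat D_{\text{fw}}(\lambda)=D_{\text{fw}}(\lambda)$ and $\hat P^\star_{\text{fw}}=P^\star_{\text{fw}}$ directly rather than through a saddle point, which is equivalent under the same assumption.
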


The proof of Theorem~\ref{thm: unlearning forward KL strong duality_path-wise} in Appendix~\ref{app: unlearning forward KL strong duality_path-wise} is similar to that of Theorem~\ref{thm: unlearning reverse KL strong duality_point-wise}, except that the KL divergence is defined over a path distribution. The convexity of a non-atomic vector measure continues to hold. Leveraging strong duality, we introduce a primal-dual algorithm for solving Problem~\eqref{eq: unlearning forward KL diffusion models_path-wise_score}: 
\begin{itemize}
    \item[(i)] Primal step is to solve the Lagrangian problem: $s^+ \in \argmin_{s}\hat{L}_{\text{fw}}(s,\lambda)$, where the Lagrangian $\hat{L}_{\text{fw}}(s,\lambda)$ has the form of standard score-matching  or noise prediction losses; 
    \item[(ii)] Dual step updates the dual variable via $\lambda^+ = \lambda + \eta \partial\hat{D}_{\text{fw}}(\lambda)$, where the subgradient $\partial\hat{D}_{\text{fw}}(\lambda)$ is evaluated using the current model $s$, and $\eta$ is the stepsize. 
\end{itemize}
See Appendix~\ref{app: algorithms} for the full description.

\subsection{Likelihood-Constrained Unlearning}

As done in Section~\ref{subsec: concept unlearning diffusion models_reverse KL}, we introduce the point distributions $q_0^i(\cdot;s_q^i)$ to Problem~\eqref{eq: unlearning alignment} to rewrite it as
    \begin{align}\label{eq: unlearning alignment diffusion models}
        \displaystyle\minimize_{s_p\,\in\,\mathcal{S}} \;\;& 
        D_{\text{KL}}(p_{0:T}(\cdot; s_p)\,\Vert\, q_{0:T}(\cdot; s_q))
        \\
        \subject \;\;&  \mathbb{E}_{x_0 \,\sim\, p_0}\left[\, q_0^i (x_0; s^i_q)\,\right] \; \leq \; \epsilon_i \; \text{ for } \; i \in [m].\nonumber
    \end{align}
In contrast to Problem~\eqref{eq: unlearning alignment}, Problem~\eqref{eq: unlearning alignment diffusion models} is \emph{nonconvex} due to the nonlinear mapping from the score function to the KL divergence and the likelihood. We note that Problems~\eqref{eq: unlearning alignment diffusion models} and~\eqref{eq: unlearning reverse KL diffusion models_point-wise_score} are similar, differing only in the form of the constraints. Critically, the convexity of a non-atomic vector measure continues to hold despite this change in how the constraints depend on the probability measures.
\begin{assumption}[Feasibility]
\label{as: feasibility reverse KL likelihood_point-wise}
    There exists an admissible score $s_p$: $\mathbb{E}_{x_0\,\sim\,p_0(\cdot;s_p)}[ q_0^i(x_0; s_q^i) ] <  \epsilon_i$ for $i \in [m]$.
\end{assumption}

Let $s_{\text{revl}}^\star$ be a solution to Problem~\eqref{eq: unlearning alignment diffusion models} and denote $\hat P_{\text{revl}}^\star = D_{\text{KL}}(p_{0:T}(\cdot; s_{\text{revl}}^\star)\,\Vert\, q_{0:T}(\cdot; s_q))$. The Lagrangian for Problem~\eqref{eq: unlearning alignment diffusion models} is given by $\hat L_{\text{revl}}(s_p, \lambda) = L_{\text{revl}}(p_{0:T}(\cdot; s_p), \lambda)$, and its dual function is given by $\hat D_{\text{revl}}(\lambda) \DefinedAs \min_{s_p\,\in\,\mathcal{S}} L_{\text{revl}}(p_{0:T}(\cdot; s_p), \lambda)$. Let an optimal dual variable be $\hat \lambda_{\text{revl}}^\star \in\argmax_{\lambda\,\geq\,0} \hat D_{\text{revl}}(\lambda)$, and the optimal value of the dual function be $\hat D_{\text{revl}}^\star \DefinedAs \hat D_{\text{revl}}(\hat\lambda_{\text{revl}}^\star)$.

\begin{theorem}[Strong duality]\label{thm: unlearning reverse KL likelihood strong duality_point-wise}
    Let Assumptions~\ref{as: admissibility reverse KL_point-wise} and~\ref{as: feasibility reverse KL likelihood_point-wise} hold. Then, strong duality holds for Problem~\eqref{eq: unlearning alignment diffusion models}: $\hat P_{\text{\normalfont revl}}^\star = \hat D_{\text{\normalfont revl}}^\star$.
\end{theorem}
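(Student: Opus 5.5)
The plan is to reduce Problem~\eqref{eq: unlearning alignment diffusion models} to a problem over path measures and then transfer strong duality back to score functions. We view it as an optimization over path distributions $p_{0:T}\in\Delta(X^{T+1})$. The objective $D_{\text{KL}}(p_{0:T}\,\Vert\, q_{0:T})$ is convex in $p_{0:T}$. Each constraint $\mathbb{E}_{x_0\sim p_0}[q_0^i(x_0;s_q^i)] = \int q_0^i(x_0)\, p_{0:T}(dx_{0:T})$ is \emph{linear} in $p_{0:T}$, because the marginal $p_0$ is a linear image of $p_{0:T}$. The lifted problem over all admissible path measures is therefore convex. Slater's condition follows from Assumption~\ref{as: feasibility reverse KL likelihood_point-wise}: the path measure induced by the strictly feasible score $s_p$ gives strict inequality. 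Standard convex duality, exactly as in Theorem~\ref{thm: unlearning likelihood}, then yields zero duality gap and a dual maximizer $\lambda^\star\ge 0$ for the lifted problem.

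The explicit minimizer of the lifted Lagrangian is the exponential tilt
\[
p^\star_{0:T}(x_{0:T};\lambda)\;\propto\; q_{0:T}(x_{0:T})\,{\rm e}^{-\sum_{i=1}^m \lambda_i q_0^i(x_0)}.
\]
Its marginal is $p_0^\star \propto q_0\, {\rm e}^{-\sum_i\lambda_i q_0^i}$. Its conditional path law given $x_0$ equals that of $q_{0:T}$, so it is a bounded, admissible path measure. If convexity of the lifted problem were in doubt, the fallback is the Lyapunov argument used for Theorem~\ref{thm: unlearning reverse KL strong duality_point-wise}. That argument builds a non-atomic vector measure on path space from the objective integrand and the constraint integrands $q_0^i(x_0)$, and derives convexity of the epigraph. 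This route is insensitive to how the constraint depends on $p$, which is the point noted after Problem~\eqref{eq: unlearning alignment diffusion models}.

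The transfer to score space goes through the saddle-point characterization, and it is the step I expect to be the main obstacle. By weak duality, $\hat D_{\text{revl}}(\lambda)\le \hat P_{\text{revl}}^\star$ for every $\lambda \ge 0$. Restricting the feasible set from all path measures to score-induced ones can only raise the primal value, so $\hat P^\star_{\text{revl}} \ge P^\star$. Restricting the minimization likewise raises the dual function, so $\hat D_{\text{revl}}(\lambda)\ge D(\lambda)$.

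For the reverse inequality, we invoke Assumption~\ref{as: admissibility reverse KL_point-wise} to obtain a score $s^\star\in\mathcal{S}$ realizing $p_0^\star(\cdot;\lambda^\star)$. The delicate part is that the objective is a \emph{path} KL while expressivity is stated for point distributions. Two facts resolve this. The constraints depend only on $p_0$. The chain rule gives $D_{\text{KL}}(p_{0:T}\Vert q_{0:T}) \ge D_{\text{KL}}(p_0\Vert q_0)$, with equality when the conditionals match those of $q$. We therefore argue that the path law of $s^\star$ is the tilt above. This holds because the tilt only reweights the terminal variable, so the corresponding backward process is the pretrained one with a score correction given by the $x_t$-conditional expectation of the tilt, which lies in the expressive class.

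With this, the lifted primal–dual pair $(p^\star_{0:T},\lambda^\star)$ is realized by $(s^\star,\lambda^\star)$ and is a saddle point of $\hat L_{\text{revl}}$. Hence $\hat P_{\text{revl}}^\star \le \hat L_{\text{revl}}(s^\star,\lambda^\star)= D(\lambda^\star) \le \hat D_{\text{revl}}(\lambda^\star)\le \hat D^\star_{\text{revl}}$. Combined with weak duality, this gives $\hat P_{\text{revl}}^\star=\hat D_{\text{revl}}^\star$.
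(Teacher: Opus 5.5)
Your lifting and first step are sound, and simpler than the paper's. The objective in \eqref{eq: unlearning alignment diffusion models} is the path KL, and each constraint $\int q_0^i(x_0)\,p_{0:T}(dx_{0:T})$ is linear in $p_{0:T}$. So the lifted problem is convex, and Slater gives zero gap with dual attainment. The paper instead re-runs its Lyapunov argument on path space with integrands $p_{0:T}\log(p_{0:T}/q_{0:T})$ and $p_{0:T}\,q_0^i(x_0)$.

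The gap is in the transfer step you flagged. The point-wise expressivity assumption only gives you \emph{some} score whose terminal marginal is $p_0^\star(\cdot;\lambda^\star)$. It says nothing about that score's conditional law of $x_{1:T}$ given $x_0$, so you cannot conclude that its path law is the tilt. Your h-transform repair does not close this. Take $h={\rm e}^{-\sum_i\lambda^\star_i q_0^i}$ and $h_t(x_t)=\mathbb{E}_{q_{0:T}}[h(x_0)\mid x_t]$. The tilted law then has transitions $q(x_{t-1}\mid x_t)\,h_{t-1}(x_{t-1})/h_t(x_t)$, which are in general not Gaussian with the sampler's fixed variance. Its $x_T$-marginal is proportional to $\mathcal{N}(0,I)\,h_T$ rather than $\mathcal{N}(0,I)$. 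A drift correction $\nabla\log h_t$ reproduces the tilt only in continuous time, and even then only with a reweighted initial law. So no score in the DDPM class induces the tilt exactly.

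By the chain rule $D_{\text{KL}}(p_{0:T}\Vert q_{0:T}) = D_{\text{KL}}(p_0\Vert q_0) + \mathbb{E}_{p_0}[D_{\text{KL}}(p_{1:T\mid 0}\Vert q_{1:T\mid 0})]$, every score $s$ with $p_0(\cdot;s)=p_0^\star$ then has $\hat L_{\text{revl}}(s,\lambda^\star) > D(\lambda^\star)$. Your key identity $\hat L_{\text{revl}}(s^\star,\lambda^\star)=D(\lambda^\star)$ therefore fails, and with it the bound $\hat P^\star_{\text{revl}}\le D(\lambda^\star)$.

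The paper sidesteps this by applying expressivity directly to path distributions. It cites Assumption~\ref{as: admissibility reverse KL_point-wise}, but its proof uses that every path measure $p_{0:T}$ equals $p_{0:T}(\cdot;s_p)$ for some $s_p\in\mathcal{S}$. That is in effect a path-wise assumption like Assumption~\ref{as: admissibility foward KL_path-wise}. Under it, the lifted optimum is realized by a score and the lifted saddle point transfers verbatim. If you adopt that reading, your chain of inequalities closes without the h-transform detour.

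One smaller point remains either way. Calling $(p^\star_{0:T}(\cdot;\lambda^\star),\lambda^\star)$ a saddle point requires the tilt at $\lambda^\star$ to be feasible with complementary slackness. This follows from existence of a lifted primal optimum, which the paper presumes, together with uniqueness of the Lagrangian minimizer.
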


We prove Theorem~\ref{thm: unlearning reverse KL likelihood strong duality_point-wise} in Appendix~\ref{app: unlearning reverse KL likelihood strong duality_point-wise}. The proof is similar to that of Theorem~\ref{thm: unlearning reverse KL strong duality_point-wise}, except for the likelihood constraints. Since the likelihood constraints can be lifted to path-space constraints, we can still apply the convexity of a non-atomic vector measure. Because of strong duality, we similarly use a primal-dual algorithm for solving Problem~\eqref{eq: unlearning alignment diffusion models}: 

\begin{itemize}
    \item[(i)] Primal step is to solve the Lagrangian problem: $s^+ \in \argmin_{s}\hat{L}_{\text{revl}}(s,\lambda)$, where the Lagrangian $\hat{L}_{\text{revl}}(s,\lambda)$ involves the computation of the KL divergence and the point probability. To estimate the probability $q_0^i(x_0)$ for a sample $x_0$, we apply the information-theoretic diffusion results~\cite{kong2023informationtheoreticdiffusion} to compute $\log q_0^i(x_0)$ by viewing the pretrained diffusion model as a Gaussian noise channel (see Appendix~\ref{app: proof of likelihood estimation}); 
   \item[(ii)] Dual step updates the dual variable via $\lambda^+ = \lambda + \eta \partial\hat{D}_{\text{revl}}(\lambda)$, where the subgradient $\partial\hat{D}_{\text{revl}}(\lambda)$ is evaluated using the current model $s$, and $\eta$ is the stepsize. 
\end{itemize}
See Appendix~\ref{app: algorithms} for the full description.

\begin{remark}[Parametrization gap]
In Problems~\eqref{eq: unlearning reverse KL diffusion models_point-wise_score},~\eqref{eq: unlearning forward KL diffusion models_path-wise_score}, and~\eqref{eq: unlearning alignment diffusion models}, we optimize over distributions induced by diffusion models through a class of score functions. In contrast, the unlearning targets~\eqref{eq: unlearning solutions},~\eqref{eq: unlearning solutions_1}, and~\eqref{eq: unlearning solutions_2} describe the best unlearning distributions when one can optimize over all admissible probability measures. These targets are attained when the unlearning targets are well represented by the score function parametrization. The remaining discrepancy is a parametrization gap, not a duality gap: strong duality identifies the correct target in the ideal distribution optimization problems, while the diffusion implementation can only approximate these targets. 

In practice, this gap is expected to be small when the desired unlearning target is a smooth perturbation of the pretrained model, the retaining and unlearning distributions are well separated, and the score function class has sufficient expressiveness. It can be large when the constraints are aggressive, retained and unlearned concepts are highly entangled, the target requires sharp density suppression, or the practical parametrization is restrictive, for example because of low-rank adapters, limited optimization, or approximate sampling. Experimentally, a large gap would appear as persistent constraint violation, saturating or oscillating dual variables, a plateau in unlearning performance as thresholds become stricter, or excessive degradation of retention metrics such as KL divergence, KID, or sample quality.
\end{remark}

%% file: sec_experiments_camready.tex
\section{Computational Experiments}\label{sec: experiments}

We demonstrate our constrained unlearning approach through a series of computational experiments using Stable Diffusion v1.4~\cite{rombach2022highresolutionimagesynthesislatent} as the base model. Sections~\ref{subsec: Likelihood-Constrained Unlearning Experiments},~\ref{subsec: forward kl experiments}, and~\ref{subsec: reverse kl experiments} cover likelihood-constrained concept unlearning, Forward KL-based removal of memorized samples, and reverse KL-based multi-concept unlearning, respectively.

\subsection{Likelihood-Constrained Unlearning}\label{subsec: Likelihood-Constrained Unlearning Experiments}

We begin with the likelihood-constrained problem~\eqref{eq: unlearning alignment diffusion models} by considering a simple setting in which a diffusion model is pretrained on a three-Gaussian mixture, and one component is to be unlearned. As a baseline, we consider concept erasing~\cite{gandikota2023erasing}, which targets the unlearning distribution ${(q(\cdot))^{1+\eta}}/{(q_{\text{\normalfont u}}(\cdot))^{\eta}}$ for some $\eta>0$. This is similar to the unlearning target~\eqref{eq: negation neq} in the reverse KL case. We visualize the distributions of the pretrained and trained models for each approach in the three leftmost plots of Figure~\ref{fig: gaussians_likelihood_vs_reverse}. At the same unlearning-likelihood level, our likelihood-constrained method achieves a smaller KL divergence to the retained modes, as shown in the right plot of Figure~\ref{fig: gaussians_likelihood_vs_reverse}, where each point represents a constraint threshold or regularization weight for our method or the baseline. 

We further demonstrate this improvement in a pretrained text-to-image model. To unlearn attributes from a concept, we set $q = p_{\text{pre}}(\cdot\,|\,c)$, and $q^i_{\text{u}} = p_{\text{pre}}(\cdot\,|\,c_i)$, where $c$ is a retained concept (e.g., cowboy) that we aim to stay close to, and $c_i$ denotes other concepts or biases associated with the concept $c$ that we wish to remove (e.g., horse). We show the retention–unlearning tradeoff in the two rightmost plots in Figure~\ref{fig: cowboys_likelihood_vs_reverse}, where retention performance is measured by the Kernel Inception Distance (KID)~\cite{bińkowski2021demystifyingmmdgans} (Lower KID means better retention performance) and the KL divergence to the pretrained model, respectively. Compared to the baseline, our likelihood-constrained approach achieves strong unlearning (low likelihood) while more effectively limiting deviation from the retained model. 

This improved tradeoff can also be qualitatively observed. While both the concept erasure approach and our likelihood constrained approach successfully unlearn the undesired conept, in this case 'horse', (Figure~\ref{fig: cowboys_likelihood_vs_reverse} Left), our approach keeps the parts of the distribution that we wish to retain, closer to the pre-trained model (Figure~\ref{fig: cowboys_likelihood_vs_reverse} Middle left)

\begin{figure}[t]
    \centering
    \begin{subfigure}{0.23\textwidth}
        \centering
        \includegraphics[width=\linewidth]{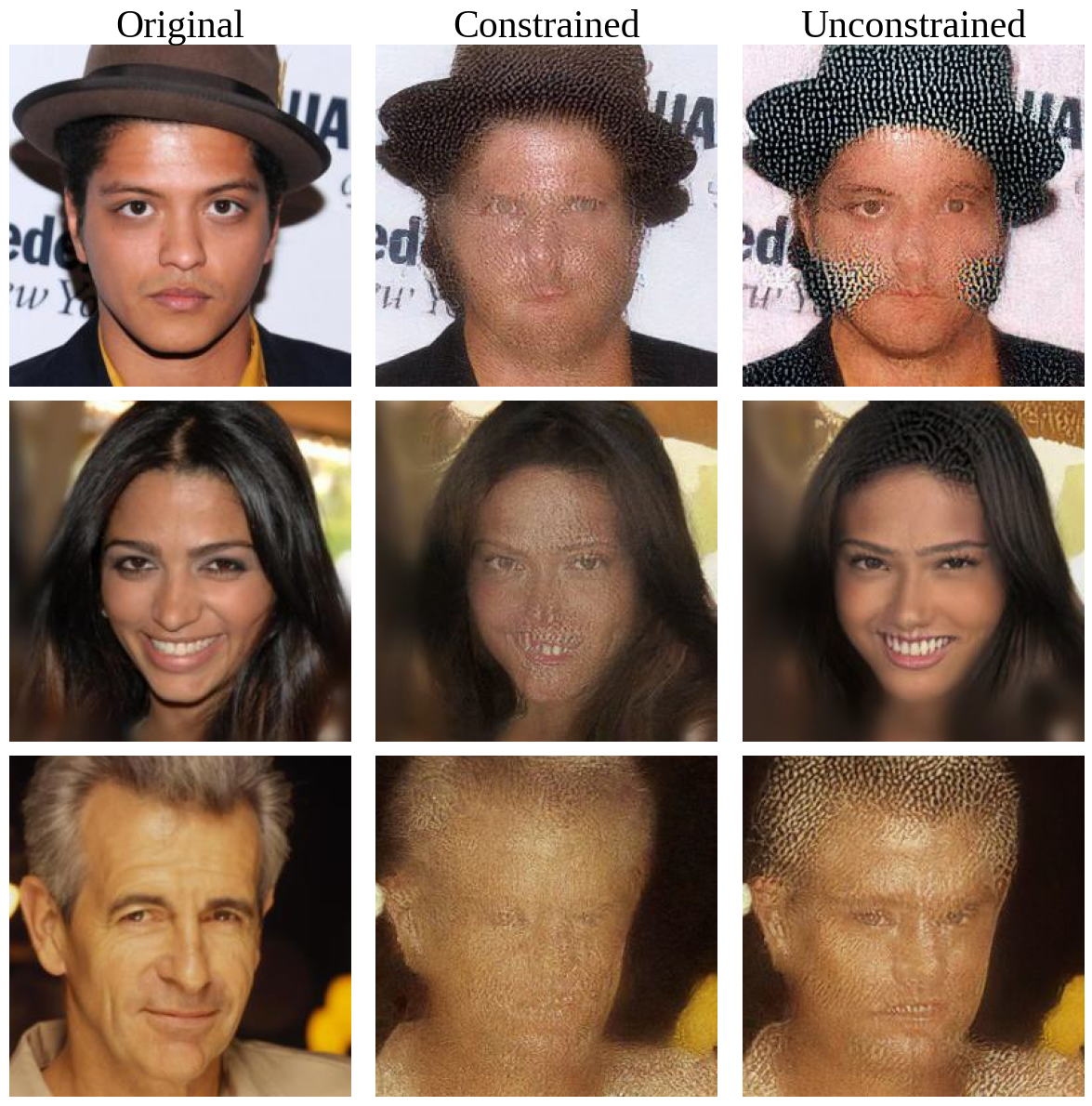}
        \label{fig:sub1}
    \end{subfigure}
    \hfill
    \begin{subfigure}{0.23\textwidth}
        \centering
        \includegraphics[width=\linewidth]{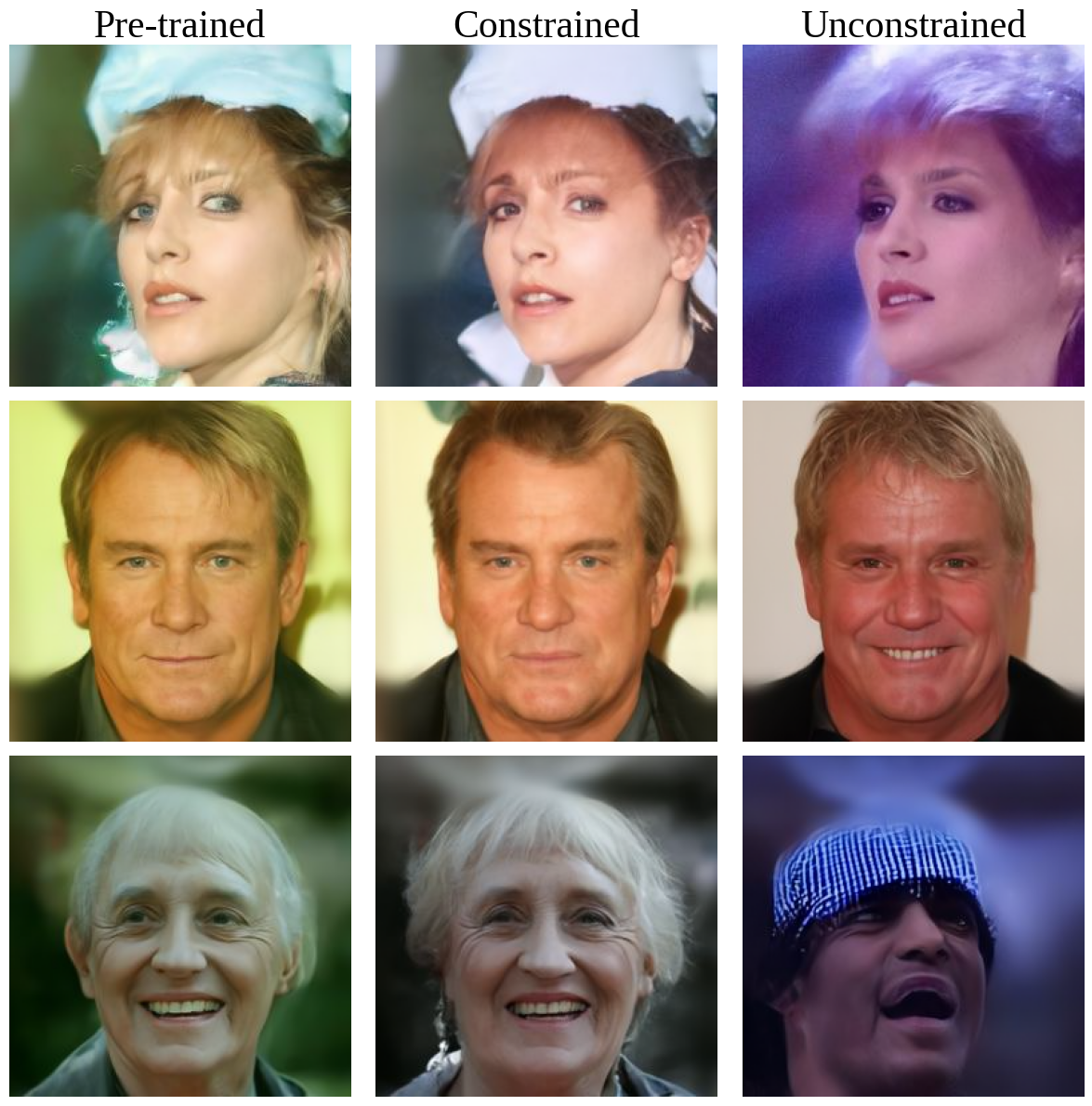}
        \label{fig:sub2}
    \end{subfigure}

    \caption{Performance of forward KL-constrained unlearning. (Left) Images generated by pretrained model (three random images to be removed), forward KL-constrained unlearning, and baseline. (Right) Images generated by pretrained model (three random images), forward KL-constrained unlearning, and  baseline.}
    \label{fig: forward_images}
\end{figure}

\begin{figure}[ht]
    \centering
    \includegraphics[width=0.57\linewidth]{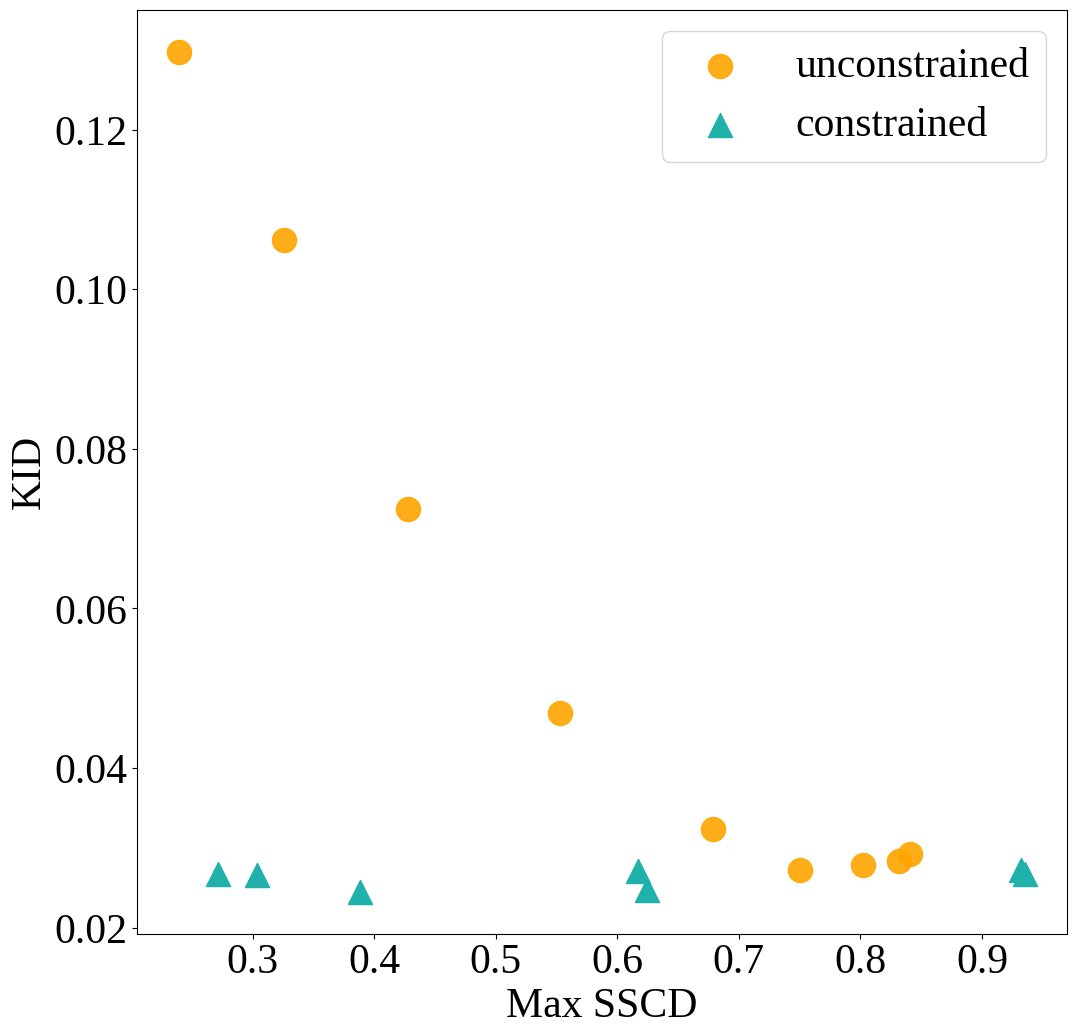}
    \caption{Retention–unlearning tradeoff for forward KL-constrained unlearning and unconstrained baseline. Our constrained model deviates less from the pretrained model at the same level of unlearning (max SSCD).}
    \label{fig: forward pareto}
\end{figure}

\begin{figure*}[t]
    \centering
    \begin{subfigure}{0.23\textwidth}
        \centering
        \includegraphics[width=\linewidth]{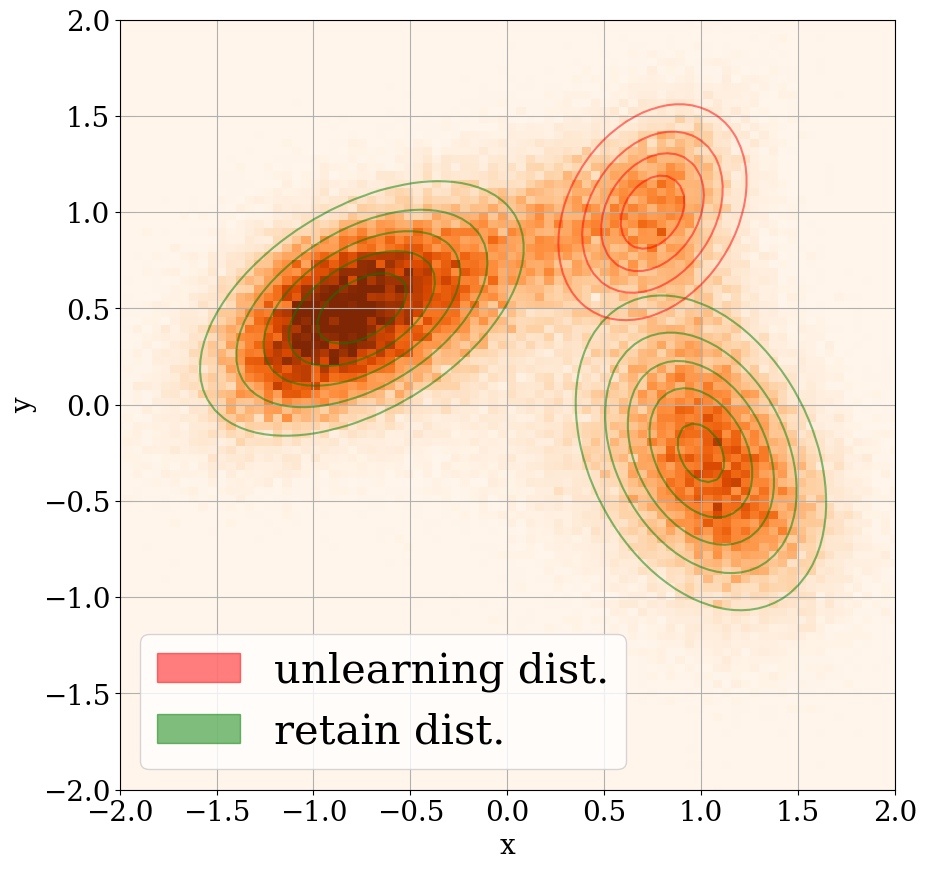}
    \end{subfigure}\hfill
    \begin{subfigure}{0.23\textwidth}
        \centering
        \includegraphics[width=\linewidth]{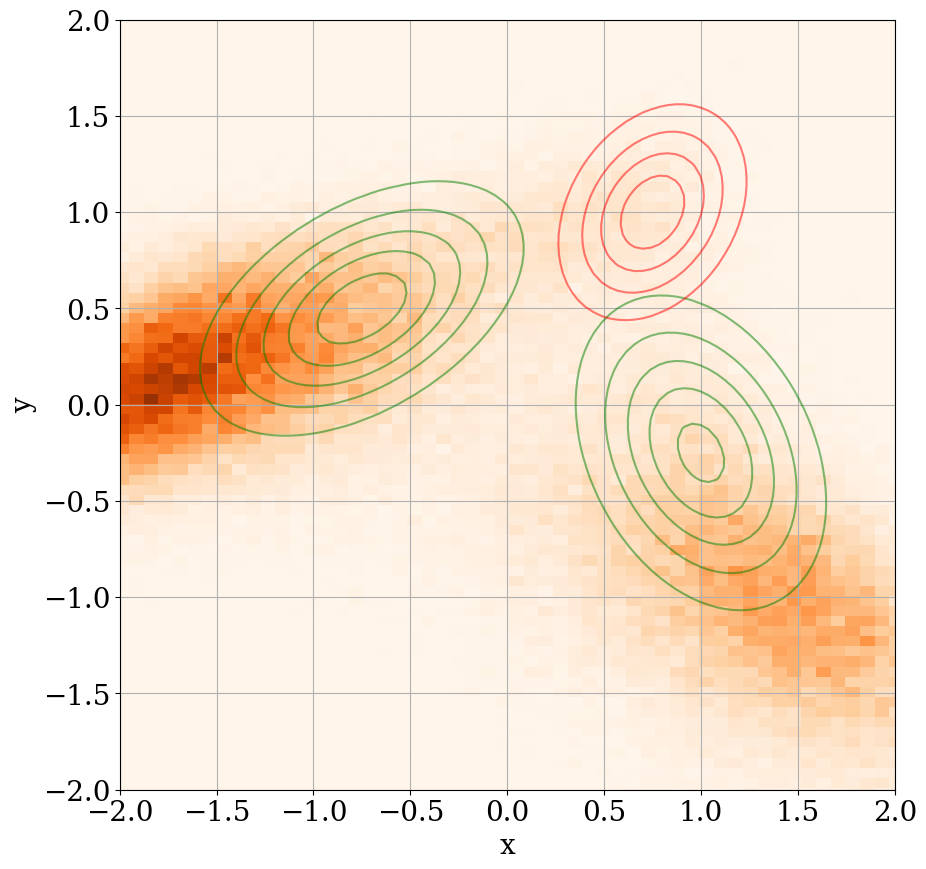}
    \end{subfigure}\hfill
    \begin{subfigure}{0.23\textwidth}
        \centering
        \includegraphics[width=\linewidth]{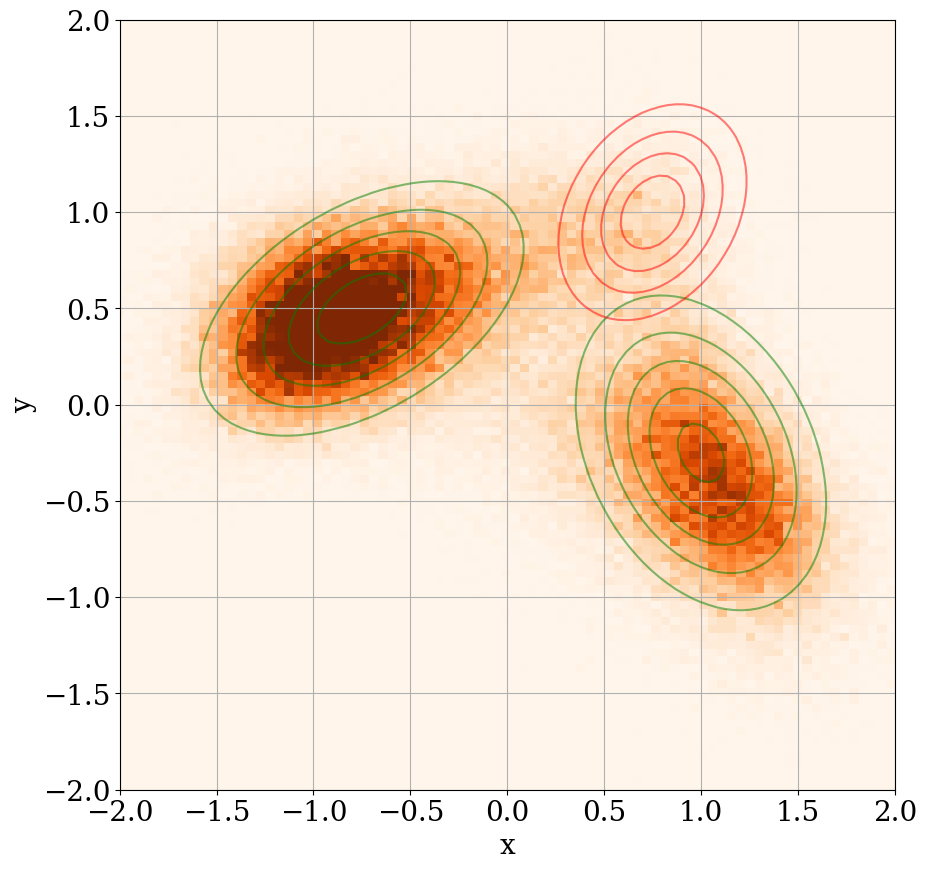}
    \end{subfigure}\hfill
    \begin{subfigure}{0.23\textwidth}
        \centering
        \includegraphics[width=\linewidth]{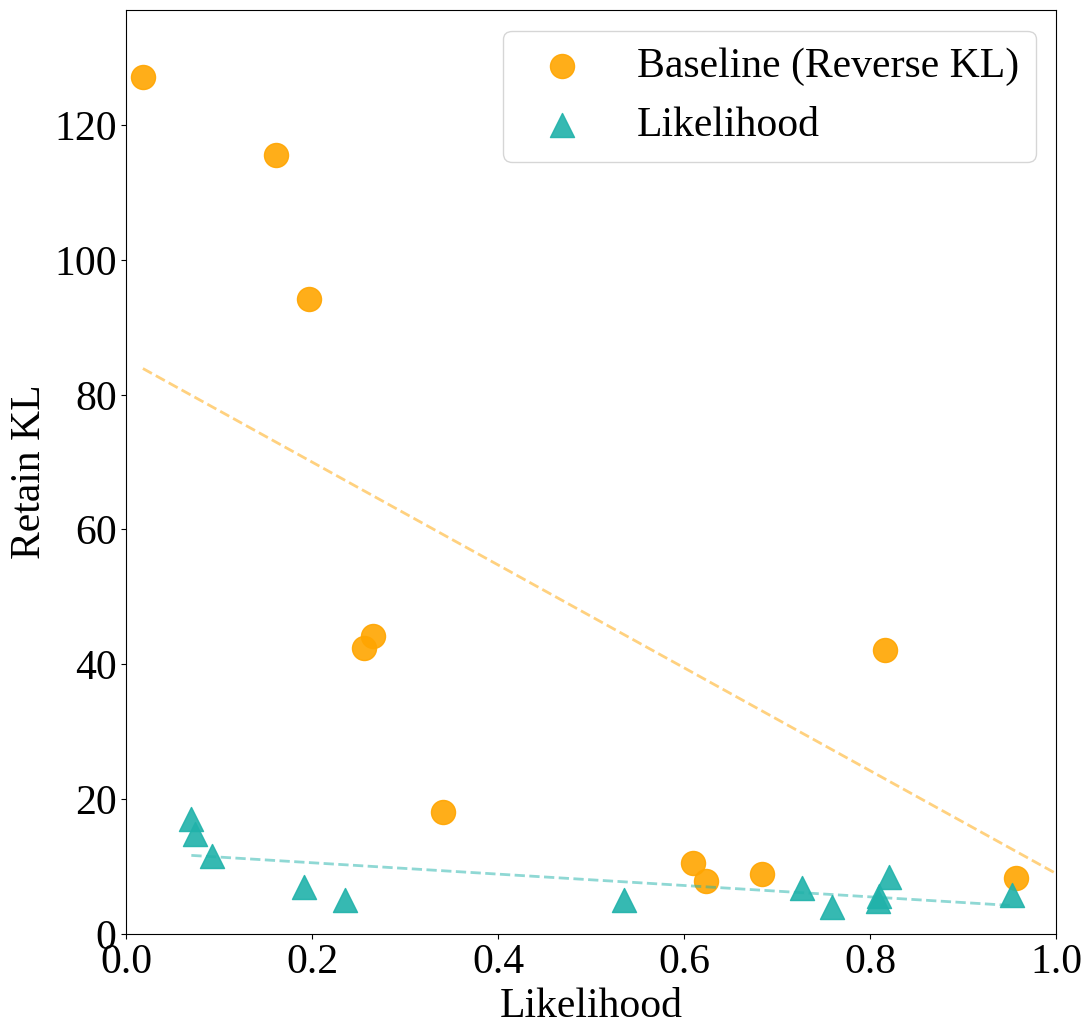}
    \end{subfigure}

    \caption{Performance of likelihood-constrained unlearning on a three-Gaussian mixture. From left to right: pretrained model, reverse-KL–constrained unlearning, likelihood-constrained unlearning, and the KL divergence to the retained model versus the likelihood of the unlearning mode. }
    \label{fig: gaussians_likelihood_vs_reverse}
\end{figure*}

\begin{figure*}[h]\label{fig: likelihood cowboys}
    \centering
    \begin{subfigure}{0.23\textwidth}
        \centering
        \includegraphics[width=\linewidth]{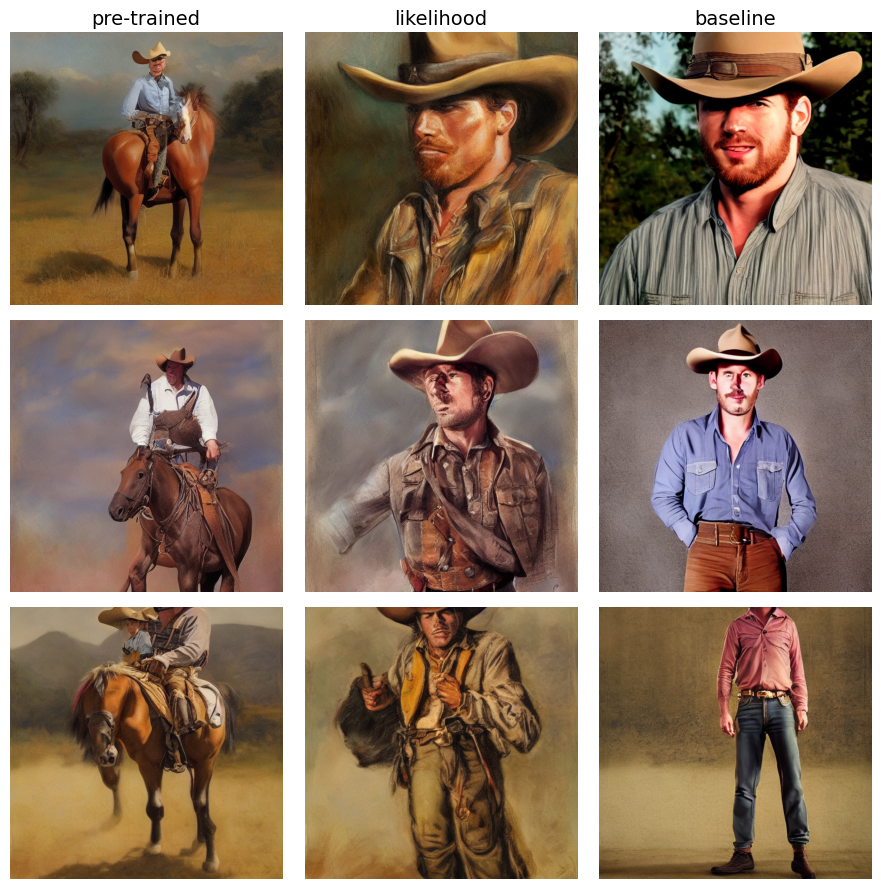}
    \end{subfigure}\hfill
    \begin{subfigure}{0.23\textwidth}
        \centering
        \includegraphics[width=\linewidth]{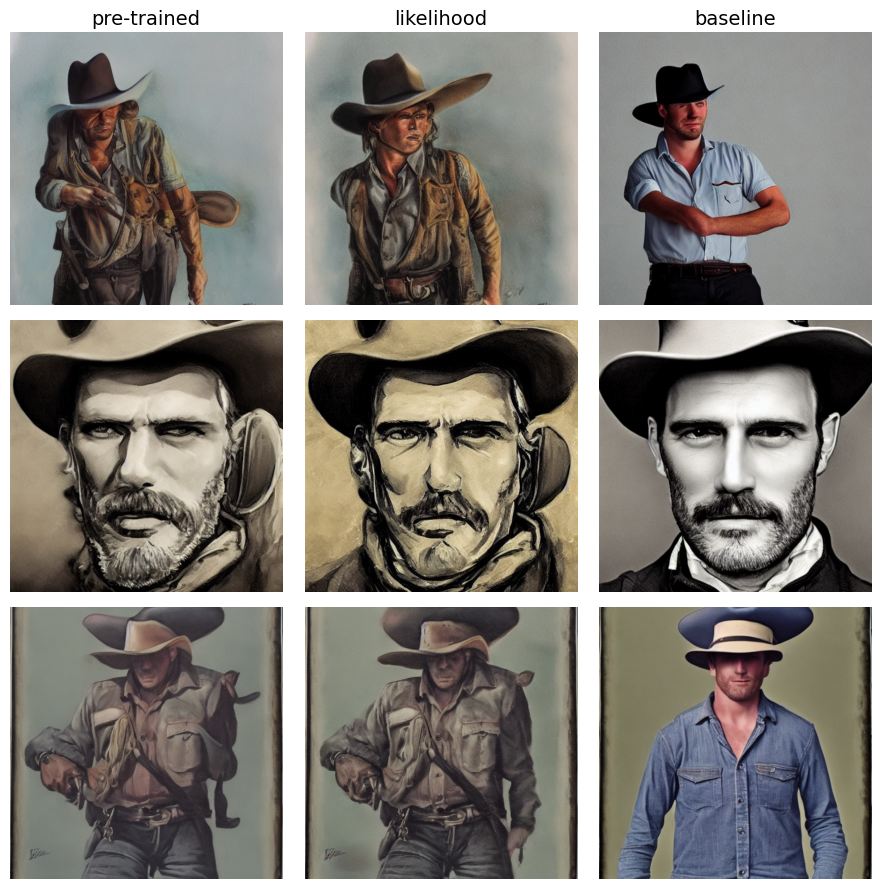}
    \end{subfigure}\hfill
    \begin{subfigure}{0.237\textwidth}
        \centering
        \includegraphics[width=\linewidth]{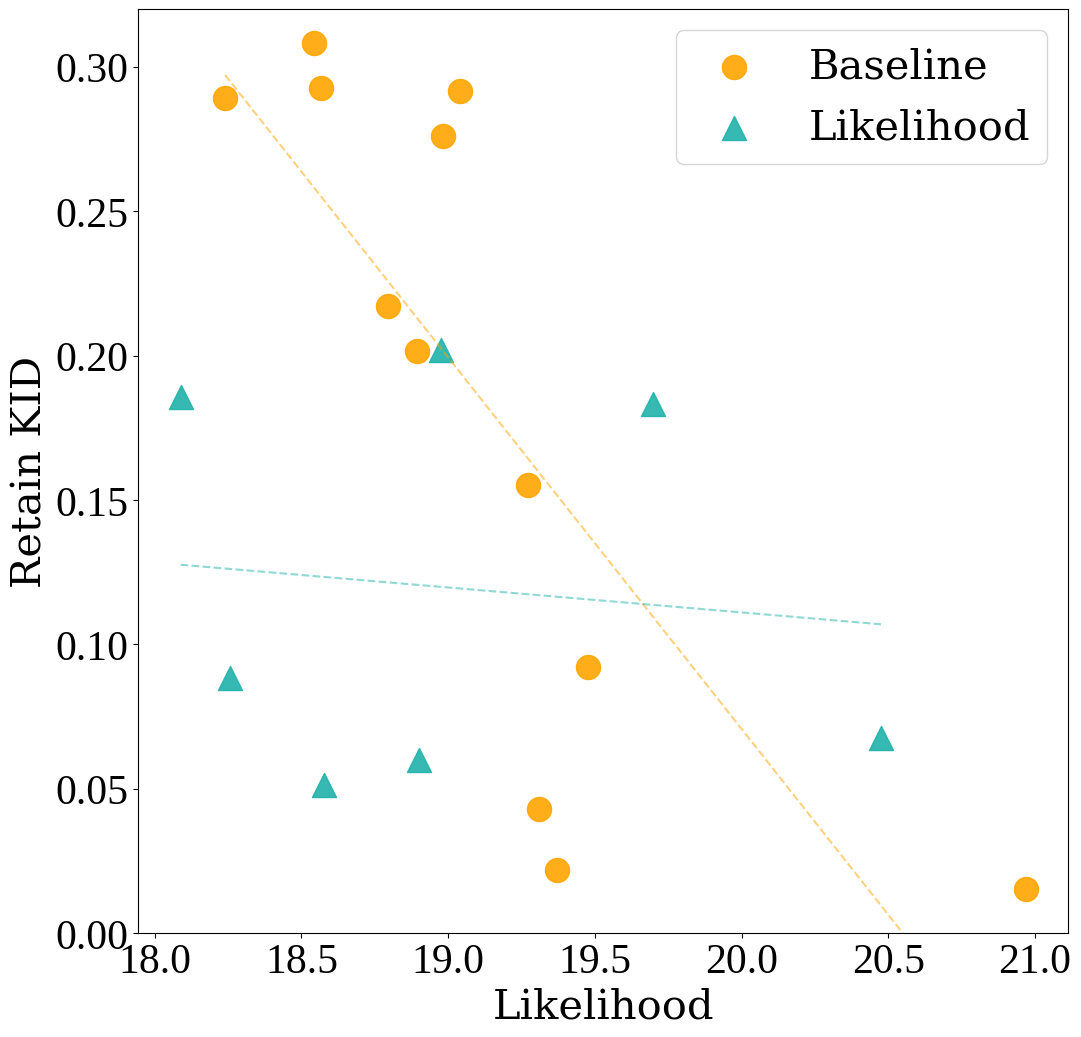}
    \end{subfigure}\hfill
    \begin{subfigure}{0.23\textwidth}
        \centering
        \includegraphics[width=\linewidth]{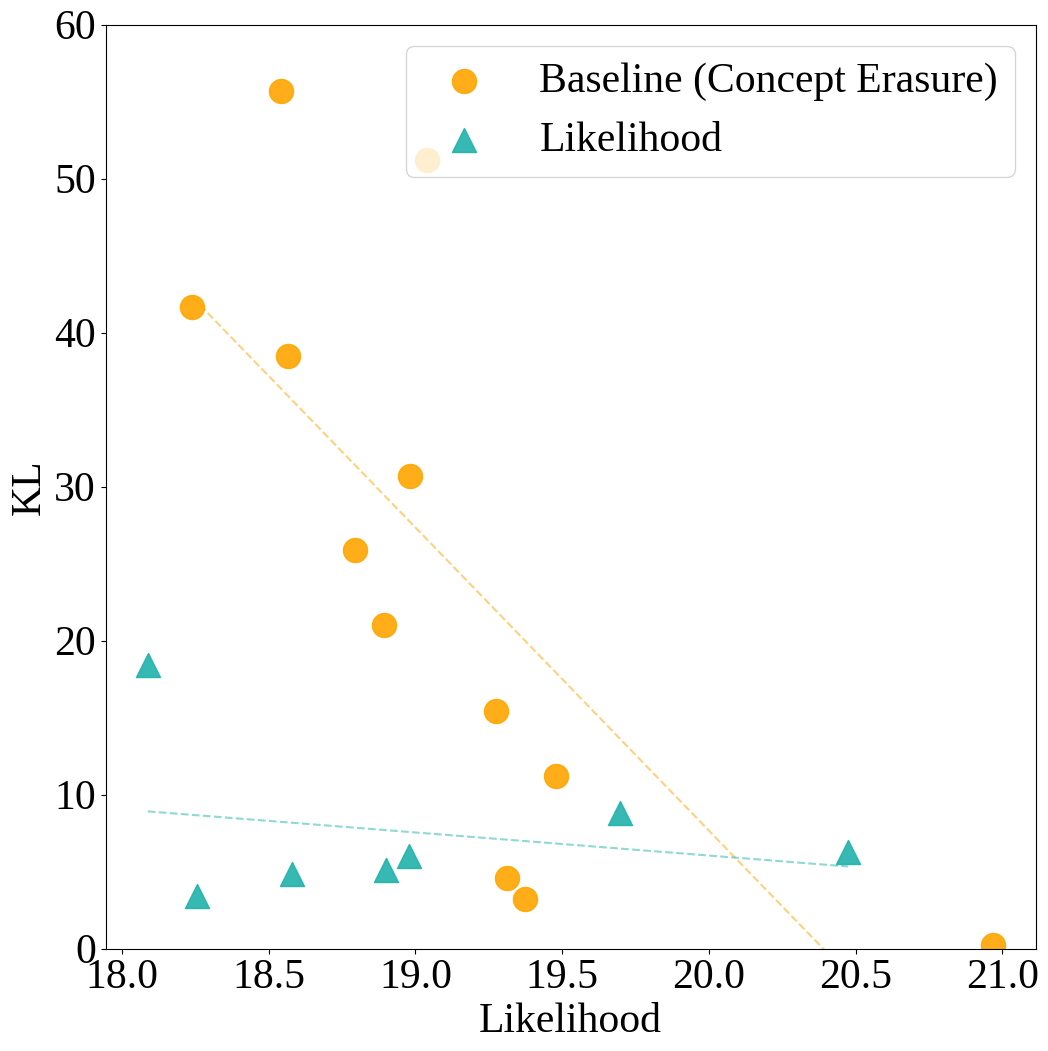}
    \end{subfigure}

        \caption{Performance of likelihood-constrained unlearning on a text-to-image model. From left to right: images generated by likelihood-constrained unlearning and baseline for unlearning the concept of `horse' (Left), for retaining the concept of `cowboy' (Middle left), the retention KID (Middle right) and the KL divergence (Right) to the retained model versus the likelihood of unlearning concept. }
    \label{fig: cowboys_likelihood_vs_reverse}
\end{figure*}

\subsection{Forward KL-Constrained Unlearning}\label{subsec: forward kl experiments}

To demonstrate the effectiveness of the forward KL constraints in~\eqref{eq: unlearning forward KL diffusion models_path-wise_score}, we work within the data unlearning paradigm~\cite{alberti2025data}, which seeks to prevent the model from generating specific samples that it may have memorized (e.g., for copyright reasons). We use the unconditional DDPM trained on the CelebA-HQ dataset as our pretrained model~\cite{ho2020denoisingdiffusionprobabilisticmodels}, and randomly select three images from the dataset to unlearn. As shown in Figure~\ref{fig: forward_images}, our constrained unlearning approach effectively removes the given samples in the left plot, while retaining the capability of generating other concepts, as shown in the right plot. This significantly improves upon the unconstrained baseline~\cite{alberti2025data}, which essentially solves the Lagrangian problem for Problem~\eqref{eq: unlearning forward KL diffusion models_path-wise_score} with equal weights assigned to all of the samples we wish to remove. We also show the retention–unlearning tradeoff in Figure~\ref{fig: forward pareto}, where the unlearning performance is measured by max SSCD~\cite{pizzi2022selfsuperviseddescriptorimagecopy} (Higher max SSCD indicates greater similarity to a forgotten sample and therefore weaker unlearning), the retention performance is measured by KID. We see that at the same level of unlearning, measured by max SSCD, our constrained unlearning approach deviates less from the pretrained model, as measured by KID. This is achieved by dynamically assigning large weights to the samples that are hard to unlearn and small weights to the easier samples.

\subsection{Reverse KL-Constrained Unlearning}\label{subsec: reverse kl experiments}

As in Section~\ref{subsec: Likelihood-Constrained Unlearning Experiments}, we consider concept unlearning by setting $q = p_{\text{pre}}(\cdot\,|\,c)$, and $q^i_{\text{u}} = p_{\text{pre}}(\cdot\,|\,c_i)$ in Problem~\eqref{eq: unlearning reverse KL diffusion models_point-wise_score}, where $c$ is a retained concept, and $c_i$ denotes other concepts or biases associated with the concept $c$ that we wish to remove. In Figure~\ref{fig: reverseKL_pareto}, we show the tradeoff between the worst-case unlearning performance (i.e., minimum KL divergences to unlearning concepts) and the retention performance (i.e., the KL divergence to the pretrained model), with each point representing a constraint threshold or regularization weight. We see that our constrained retention-unlearning tradeoff curve lies below the unconstrained baseline, achieving the same unlearning performance while deviating less from the pretrained model ({Figure~\ref{fig: reverseKL_pareto}, Left}). To confirm this, we also use the maximum CLIP text-to-image similarity score~\cite{hessel2022clipscorereferencefreeevaluationmetric} among the unlearning concepts, observing a similar result ({Figure~\ref{fig: reverseKL_pareto}, Right}).

\begin{figure}[ht]
    \centering
    \begin{subfigure}{0.23\textwidth}
        \centering
        \includegraphics[width=\linewidth]{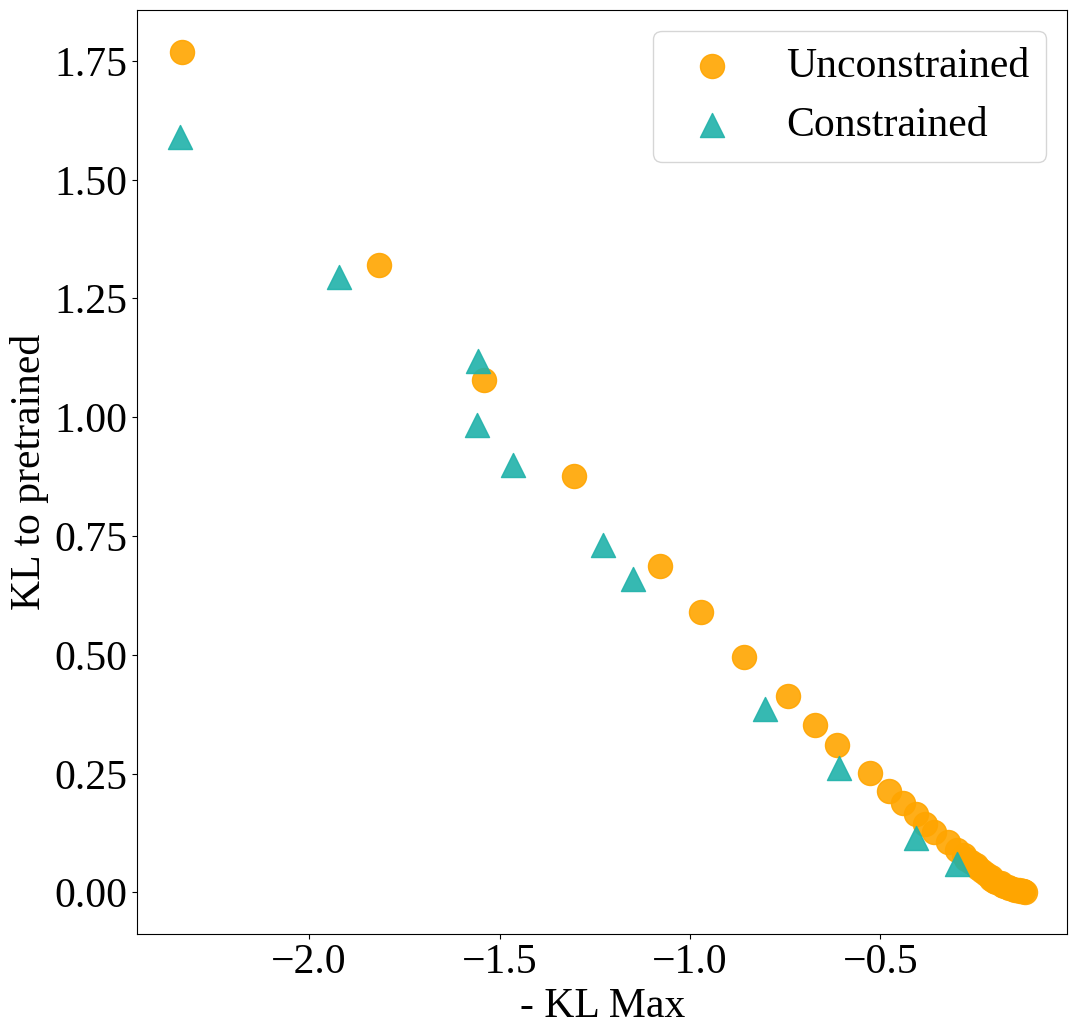}
    \end{subfigure}
    \hfill
    \begin{subfigure}{0.23\textwidth}
        \centering
        \includegraphics[width=\linewidth]{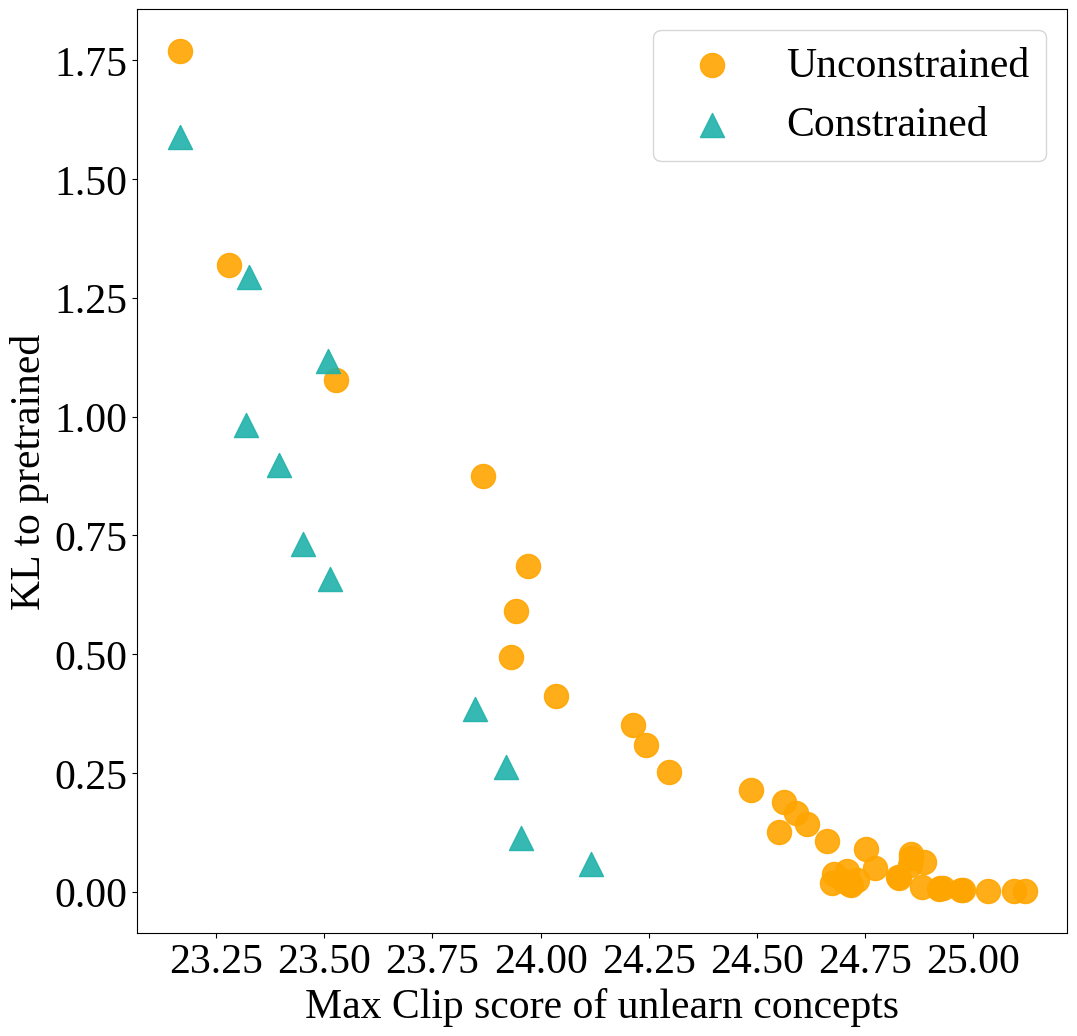}
    \end{subfigure}

    \caption{Retention–unlearning tradeoff for reverse KL-constrained unlearning and unconstrained baseline. The KL divergence to the pretrained model versus the KL constraint violation (Left) and the unlearning concept CLIP score (Right).}
    \label{fig: reverseKL_pareto}
\end{figure}

\section{Conclusion}
We have developed a constrained optimization framework that unifies unlearning as minimizing deviation from a pretrained model, subject to explicit separation constraints from the unlearning distributions. Despite the nonconvexity of these constraints, we prove that constrained unlearning has zero duality gap, which allows us to characterize optimal unlearning targets and develop primal–dual algorithms to approximate them. Empirically, we demonstrate our constrained unlearning approach on image generation tasks, showing a significantly-improved retention–unlearning tradeoff.

\textbf{Limitations}. Despite our theoretical characterizations and promising empirical results, additional experiments are needed to evaluate our method's effectiveness on unlearning tasks beyond image generation. Further theoretical study is required to establish convergence and sample-complexity guarantees for the proposed primal-dual algorithms.

%% file: appendices.tex
\section{Related Work}\label{app: related work}

\textbf{Unlearning in diffusion models.} Our constrained unlearning framework is related to prior work on unlearning in diffusion models. In this setting, given a pretrained diffusion model, unlearning aims to preserve the ability to generate diverse samples while avoiding the generation of samples that resemble specified samples or concepts to be forgotten. To handle these two conflicting objectives, existing approaches primarily adopt a classical regularization strategy, optimizing a weighted sum of retention and forgetting losses~\cite{golatkar2020eternal,fan2023salun,heng2023selective,park2024direct,alberti2025data,wu2025munba,park2025data,gao2025meta,li2025towards}. However, the weighted-sum objective has three key drawbacks: (i) the tradeoff between model utility and unlearning performance is tuned heuristically; (ii) the degree or likelihood of forgetting specific data or concepts is difficult to evaluate; and (iii) multiple unlearning objectives cannot be naturally incorporated within a single regularization term. In contrast, we formulate unlearning from the perspective of constrained distribution optimization. We treat the preservation of pretrained model utility as minimizing the distance between the trained and pretrained models, while expressing unlearning objectives as additional constraints that push the trained model away from specific data or concept distributions. This provides a more principled alternative to existing ad hoc approaches in~\citet{gandikota2023erasing,alberti2025data}. Our unlearning framework (i) offers a theoretical guarantee of an optimal
tradeoff between model utility retention and unlearning performance, and (ii) allows for
the direct imposition of multiple unlearning constraints. We also remark that our constrained unlearning approach is more tractable, both theoretically and practically, than the bilevel optimization approaches in~\citet{zhang2024defensive,chen2024score,wu2025erasing}, which often suffer from high computational costs and limited theoretical guarantees.

\textbf{Diffusion models under constraints.} Our work is also related to recent research on imposing constraints in diffusion models. Two types of constraints have been explored to control the samples generated by diffusion models: (i) hard constraints on individual samples~\cite{liang2024multi,liang2025simultaneous,christopher2024constrained,narasimhan2024constrained,zampini2025training,luan2025projected,christopher2025constrained} and (ii) soft constraints imposed in expectation~\cite{khalafi2024constrained,chamon2024constrained,khalafi2025composition}. Our constrained unlearning approach falls into the second category. In contrast to prior work~\cite{khalafi2024constrained,khalafi2025composition}, our constrained unlearning problem is nonconvex even in the distribution space, which constitutes the main challenge we address.  A closely related work is the constrained unlearning formulation of~\cite{feng2024controllable}, assuming a particular class of unlearning distributions. In contrast, our constrained unlearning framework is more general and does not impose such additional restrictions.

\newpage

\section{Proofs in Section~\ref{sec: unlearning in distribution space}}\label{app: proofs unlearning in distribution space}

\subsection{Proof of Theorem~\ref{thm: unlearning reverse KL strong duality}}\label{app: reverse kl strong duality}
\begin{proof}

    Define $f(p(x)) \DefinedAs p(x) \log \frac{p(x)}{q(x)}$. Since $p$, $q \ll \mu$, we can rewrite the reverse KL divergence $D_{\text{KL}}(p\,\Vert\, q)$ as
    \[
    D_{\text{KL}}(p\,\Vert\, q) 
    \; = \; 
    \int_{X} p(x) \log \frac{p(x)}{q(x)} \mu(dx)
    \; = \; 
    \mathbb{E}_\mu\left[\, f(p(x)) \,\right]
    \]
    where $\mu$: $\mathcal{B}(X) \to [0,\infty]$ is the standard Lebesgue measure. Similarly, we define $f_i(p(x)) \DefinedAs p(x) \log \frac{p(x)}{q_{\text{u}}^i(x)}$, and thus $D_{\text{KL}}(p\,\Vert\, q_{\text{u}}^i) = \mathbb{E}_\mu\left[\, f_i(p(x)) \,\right]$. Meanwhile, we explicitly express the probability constraint for  $p\in \Delta$ as
    \[
    \mathbb{E}_{\mu} \left[\,
    p(x)
    \,\right]
    \; \DefinedAs\;
    \int_X p(x) \mu(dx) 
    \; = \; 
    1.
    \]

    By relaxing Assumption~\ref{as: admissibility reverse KL}, we define a set of measures $\mathcal{P}$ that contains all probability measures that satisfy~\eqref{eq: admissible measures reverse KL}. To prove strong duality for Problem~\eqref{eq: unlearning reverse KL}, we prove it for an equivalent formulation of Problem~\eqref{eq: unlearning reverse KL},
    \begin{equation}\label{eq: unlearning reverse KL explicit}
    \begin{array}{rl}
        \displaystyle\minimize_{p\,\in\,\mathcal{P}} & 
        \mathbb{E}_\mu \left[\, f(p(x)) \,\right]
        \\[0.2cm]
        \subject &  \mathbb{E}_\mu \left[\, f_i(p(x)) \,\right]\; \geq \; b_i \;\; \text{ for } i \in [m]
        \\[0.2cm]
        & \displaystyle
        \mathbb{E}_{\mu} \left[\,
    p(x)
    \,\right] \; = \;1.
    \end{array}
\end{equation}
We note that the key difference between Problem~\eqref{eq: unlearning reverse KL} and Problem~\eqref{eq: unlearning reverse KL explicit} is the explicit constraint $\mathbb{E}_{\mu} \left[\,
    p(x)
    \,\right]  = 1$. Thus, they share the optimal solution $p_{\text{rev}}^\star$ and the optimal value $P_{\text{rev}}^\star$. We define the Lagrangian for Problem~\eqref{eq: unlearning reverse KL explicit} as $\hat{L}_{\text{rev}}(p, \lambda,\rho) \DefinedAs L_{\text{rev}}(p, \lambda) + \rho \left( \mathbb{E}_{\mu} \left[\,
    p(x)
    \,\right] - 1 \right)$. The associated dual function is $
\hat{D}_{\text{rev}}(\lambda,\rho)
= 
\minimize_{p\,\in\,\mathcal{P}} \hat{L}_{\text{rev}}(p, \lambda,\rho)$. Denote $(\hat\lambda_{\text{rev}}^\star,\hat\rho_{\text{rev}}^\star) \in \argmax_{\lambda\, \geq\,0,\, \rho} \hat{D}_{\text{rev}}(\lambda,\rho)$. We can verify that Problem~\eqref{eq: unlearning reverse KL explicit} is strongly dual, i.e.,
\[
    \hat{D}_{\text{rev}}^\star 
    \; \DefinedAs \; 
    \hat{D}_{\text{rev}} (\hat\lambda_{\text{rev}}^\star, \hat\rho_{\text{rev}}^\star)
    \; = \; 
    P_{\text{rev}}^\star
    \; \DefinedAs \; 
    \mathbb{E}_\mu \left[\, f(p_{\text{rev}}^\star(x)) \,\right].
\]
We provide a strong duality proof in Section~\ref{subsec: strong duality}.

Now, we show that Problem~\eqref{eq: unlearning reverse KL} is strongly dual, i.e., $D_{\text{rev}}^\star = \mathbb{E}_\mu \left[\, f(p_{\text{rev}}^\star(x)) \,\right]$. We note that weak duality always holds, i.e., $D_{\text{rev}}^\star \leq \mathbb{E}_\mu \left[\, f(p_{\text{rev}}^\star(x)) \,\right]$. It is sufficient to show $D_{\text{rev}}^\star \geq \hat{D}_{\text{rev}}^\star$. By the strong duality for Problem~\eqref{eq: unlearning reverse KL explicit}, we have
\[
\begin{array}{rcl}
     \hat{D}_{\text{rev}}^\star
     & = & \mathbb{E}_\mu \left[\, f(p_{\text{rev}}^\star(x)) \,\right]
     \\[0.2cm]
     & = & \displaystyle
     \mathbb{E}_\mu \left[\, f(p_{\text{rev}}^\star(x)) \,\right] \,+\, \sum_{i\,=\,1}^m \hat\lambda_{ \text{rev}, i}^\star \left(b_i - \mathbb{E}_\mu \left[\, f_i(p_{\text{rev}}^\star(x)) \,\right]\right) 
     \\[0.2cm]
     & = & \displaystyle \minimize_p\; L_{\text{rev}}(p; \hat\lambda_{\text{rev}}^\star)
     \\[0.2cm]
     & = & \displaystyle D_{\text{rev}}( \hat\lambda_{\text{rev}}^\star)
     \\[0.2cm]
     & \leq & \displaystyle \maximize_{\lambda\,\geq\,0} \; D_{\text{rev}}( \lambda)
     \\[0.2cm]
     & = & \displaystyle  D_{\text{rev}}^\star
\end{array}
\]
where the 2nd equality is due to that $\mathbb{E}_{\mu} \left[\,
    p_{\text{rev}}^\star(x)
    \,\right]  = 1$, the 3rd equality is due to that $(p_{\text{rev}}^\star,\hat{\lambda}_{\text{rev}}^\star)$ is a saddle point of $L_{\text{rev}}(p,\lambda)$ for $p\in\Delta$ and $\lambda\geq 0$, the 4th and 5th equalities follow the property of the dual function $D_{\text{rev}}(\lambda)$. Therefore, 
    \[
    D_{\text{rev}}^\star \; = \; \hat{D}_{\text{rev}}^\star \; = \; \mathbb{E}_\mu \left[\, f(p_{\text{rev}}^\star(x)) \,\right]
    \]
    which proves the strong duality for Problem~\eqref{eq: unlearning reverse KL}.
\end{proof}

\subsubsection{Strong duality for Problem~\eqref{eq: unlearning reverse KL explicit}}\label{subsec: strong duality}

The proof is an application of Lyapunov's convexity theorem in measure theory. We note that weak duality always holds, i.e., $\hat{D}_{\text{rev}}^\star \leq P_{\text{rev}}^\star$. Thus, it is sufficient to establish $\hat{D}_{\text{rev}}^\star \geq P_{\text{rev}}^\star$. We define an epigraph for Problem~\eqref{eq: unlearning reverse KL explicit} as
\[
    \Gamma
    \; \DefinedAs \; 
    \big\{ (\gamma, t) \,\vert\, \exists\, p \in \mathcal{P},  \mathbb{E}_\mu \left[\, f(p(x)) \,\right] \leq \gamma_0, b_i - \mathbb{E}_\mu \left[\, f_i(p(x)) \,\right] \leq \gamma_i \text{ for } i=1,\ldots,m, \text{ and }  \mathbb{E}_{\mu} \left[\,
    p(x)
    \,\right] - 1= \tau \big\}
\]
where $\gamma \DefinedAs [\,\gamma_0, \gamma_1,\ldots, \gamma_m\,]^\top$.

\begin{lemma}[Convexity]\label{lem: epigraph convexity}
    Let Assumption~\ref{as: feasibility reverse KL} hold. Then, the epigraph $\Gamma$ is non-empty and convex.
\end{lemma}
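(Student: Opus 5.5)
Non-emptiness comes directly from Assumption~\ref{as: feasibility reverse KL}. Take an admissible $p$ with $D_{\text{KL}}(p\,\Vert\,q_{\text{u}}^i)>b_i$. Set $\gamma_0 = \mathbb{E}_\mu[f(p(x))]$, $\gamma_i = b_i-\mathbb{E}_\mu[f_i(p(x))]$ and $\tau=0$. The resulting point lies in $\Gamma$, and every point obtained by increasing the $\gamma$-coordinates does too.

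For convexity, fix $(\gamma,\tau),(\gamma',\tau')\in\Gamma$ with witnesses $p,p'\in\mathcal{P}$, and fix $\theta\in(0,1)$. The natural guess $\theta p+(1-\theta)p'$ will not work, because $\mathbb{E}_\mu[f_i(\cdot)]$ is convex in $p$, so the constraint $b_i-\mathbb{E}_\mu[f_i]\le\gamma_i$ is not preserved under mixing. Instead I will \emph{splice} $p$ and $p'$ on complementary sets.

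Define the finite vector measure on $\mathcal{B}(X)$
\[
\nu(A) \DefinedAs \Big( \textstyle\int_A f(p)\,d\mu,\ \int_A f_1(p)\,d\mu,\ldots,\int_A f_m(p)\,d\mu,\ \int_A p\,d\mu,\ \int_A f(p')\,d\mu,\ \ldots,\int_A f_m(p')\,d\mu,\ \int_A p'\,d\mu\Big).
\]
Each component is finite by admissibility ($p\ll q,q_{\text{u}}^i$ and $p$ bounded, so the integrands are integrable). Each component is also absolutely continuous with respect to Lebesgue measure $\mu$, which is non-atomic, so $\nu$ is non-atomic. By Lyapunov's convexity theorem the range $\{\nu(A):A\in\mathcal{B}(X)\}$ is convex. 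Since it contains $\nu(\emptyset)=0$ and $\nu(X)$, there is a set $A_\theta$ with $\nu(A_\theta)=\theta\,\nu(X)$, and then $\nu(X\setminus A_\theta)=(1-\theta)\nu(X)$.

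Now I will set $p_\theta \DefinedAs p\,\mathbf{1}_{A_\theta}+p'\,\mathbf{1}_{X\setminus A_\theta}$. Because $f$ and $f_i$ act pointwise, $f(p_\theta(x))=f(p(x))\mathbf{1}_{A_\theta}+f(p'(x))\mathbf{1}_{X\setminus A_\theta}$. Hence
\[
\mathbb{E}_\mu[f(p_\theta)] = \theta\,\mathbb{E}_\mu[f(p)]+(1-\theta)\,\mathbb{E}_\mu[f(p')] \le \theta\gamma_0+(1-\theta)\gamma_0',
\]
and the same identity holds for each $f_i$ and for $\mathbb{E}_\mu[p_\theta]$. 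So the constraint coordinates are at most $\theta\gamma_i+(1-\theta)\gamma_i'$, and the normalization coordinate equals exactly $\theta\tau+(1-\theta)\tau'$.

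It remains to check that $p_\theta\in\mathcal{P}$. It is bounded because $p$ and $p'$ are. It is absolutely continuous with respect to $q$ and $q_{\text{u}}^i$ because it vanishes wherever both $p$ and $p'$ vanish. This is exactly why $\mathcal{P}$ was relaxed so that it need not enforce unit mass: $p_\theta$ need not integrate to $1$ individually, and its mass is tracked by the $\tau$-coordinate. This gives $\theta(\gamma,\tau)+(1-\theta)(\gamma',\tau')\in\Gamma$.

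The main obstacle is making the application of Lyapunov's theorem legitimate. One needs every component of $\nu$ to be a finite signed measure, i.e.\ $f(p),f_i(p)\in L^1(\mu)$; this follows from admissibility together with boundedness of the domain $X$. One also needs non-atomicity, which is inherited from $\mu$. As the paper notes, the vector measure is built from the integrands rather than from probability measures themselves, and the relaxation of $\mathcal{P}$ is what makes the spliced density admissible.
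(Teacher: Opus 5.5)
Your proposal is essentially the paper's proof: the same $2(m+2)$-dimensional vector measure built from the integrands of the two witnesses, Lyapunov's theorem to get a set carrying exactly the fraction $\alpha$ of every coordinate, and the density spliced from $p$ and $p'$ on complementary sets. Your explicit check that the spliced density stays in $\mathcal{P}$ is a detail the paper leaves implicit.

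One caveat, shared with the paper: admissibility and boundedness of $X$ do not make $f_i(p)$ integrable. A bounded $p\ll q_{\text{u}}^i$ can have $D_{\text{KL}}(p\,\Vert\,q_{\text{u}}^i)=+\infty$, and membership in $\Gamma$ only bounds $\int f_i(p)\,d\mu$ from below. So when that integral is infinite, you should feed Lyapunov's theorem the truncation $\min\{f_i(p),K\}$ instead. This truncation is integrable because $-q_{\text{u}}^i\le f_i(p)$. By monotone convergence its integral is at least $b_i-\gamma_i$ for large $K$. It is also dominated by $f_i(p_\theta)$ on the set where $p_\theta=p$, so the splicing inequality still goes through.
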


\begin{proof}
It is straightforward that $\Gamma$ is non-empty since Problem~\eqref{eq: unlearning reverse KL} is feasible. We next show that $\Gamma$ is a convex set. Assume that $(\gamma, \tau)$, $(\gamma', \tau') \in \Gamma$, i.e., there exist $p$, $p' \in \mathcal{P}$ such that
\begin{subequations}\label{eq: convexity assumption}
\begin{equation}
    \mathbb{E}_\mu \left[\, f(p(x)) \,\right] \leq \gamma_0, b_i - \mathbb{E}_\mu \left[\, f_i(p(x)) \,\right] \leq \gamma_i \text{ for } i=1,\ldots,m, \text{ and }  \mathbb{E}_{\mu} \left[\,
    p(x)
    \,\right] - 1= \tau
\end{equation}
\begin{equation}
    \mathbb{E}_\mu \left[\, f(p'(x)) \,\right] \leq \gamma_0', b_i - \mathbb{E}_\mu \left[\, f_i(p'(x)) \,\right] \leq \gamma_i' \text{ for } i=1,\ldots,m, \text{ and }  \mathbb{E}_{\mu} \left[\,
    p'(x)
    \,\right] - 1= \tau'.
\end{equation}
\end{subequations}
It is sufficient to show that $\alpha (\gamma,\tau) + (1-\alpha) (\gamma', \tau') \in \Gamma$ for $\alpha\in [0,1]$, i.e., there exists $p_\alpha \in \mathcal{P}$ such that
\begin{subequations}\label{eq: convexity existence}
\begin{equation}\label{eq: convexity existence a}
\mathbb{E}_\mu \left[\, f(p_\alpha(x)) \,\right] \; \leq \; \alpha \gamma_0 + (1-\alpha) \gamma_0'
\end{equation}
\begin{equation}\label{eq: convexity existence b}
    b_i - \mathbb{E}_\mu \left[\, f_i(p_\alpha(x)) \,\right] 
    \; \leq \; \alpha \gamma_i +(1-\alpha) \gamma_i' \; \text{ for } i=1,\ldots,m
\end{equation}
\begin{equation}\label{eq: convexity existence c}
\mathbb{E}_{\mu} \left[\,
    p_\alpha(x)
    \,\right] - 1
    \; = \; \alpha \tau +(1-\alpha) \tau'.
\end{equation}
\end{subequations}
To do so, we construct a vector measure $\mathfrak{p}$: $\mathcal{B}(X) \to \mathbb{R}^{2(m+2)}$,
\[
    \mathfrak{p}(E) 
    \; = \;
    \left[
    \begin{tabular}{c}
         $\int_E f(p(x)) \mu(dx)$
         \\[0.2cm]
         $\int_E f_1(p(x)) \mu(dx)$
         \\[0.2cm]
         $\vdots$
         \\[0.2cm]
         $\int_E  f_m(p(x)) \mu(dx) $
         \\[0.2cm]
         $\int_E p(x) \mu(dx)$
         \\[0.2cm]
         $\int_E f(p'(x)) \mu(dx)$
         \\[0.2cm]
         $\int_E f_1(p'(x)) \mu(dx)$
         \\[0.2cm]
         $\vdots$
         \\[0.2cm]
         $\int_E  f_m(p'(x)) \mu(dx)$
         \\[0.2cm]
         $\int_E p'(x) \mu(dx)$
    \end{tabular}
   \right]
\]
where each entry of $\mathfrak{p}$ is a Lebesgue integral. Clearly, $\mathfrak{p}(\emptyset) = 0$ and it is non-atomic. When $p\ll q$, $q_{\text{u}}^i$ (Assumption~\ref{as: admissibility reverse KL}), $\mathfrak{p}$ is a proper vector measure, i.e., $\mathfrak{p}(X) < \infty$. We note that $\mathfrak{p}(X) = \infty$ if and only if $p$ is not absolutely continuous with respect to $q$ or $q_{\text{u}}^i$, which is an infeasible case of Problem~\eqref{eq: unlearning reverse KL explicit}. According to a weak version of Lyapunov's convexity theorem~\cite{olech1968range}, there exists a set $E_\alpha \in \mathcal{B}(X)$ such that 
\[
    \mathfrak{p}(E_\alpha) \; = \;
    \alpha \mathfrak{p}(X) + (1-\alpha )\mathfrak{p}(\emptyset)
    \; = \;
    \alpha \mathfrak{p}(X)
\]
for $\alpha \in [0,1]$. By additivity, $\mathfrak{p}(X/E_\alpha) = (1-\alpha)\mathfrak{p}(X)$. Hence, we construct $p_\alpha \in \mathcal{P}$ as follows,
\[
p_\alpha(x) \; = \;
\begin{cases}
    p(x) \;\;\;\; \text{ when } x \in E_\alpha
    \\[0.2cm]
    p'(x) \;\;\; \text{ when } x \in X/E_\alpha.
\end{cases}
\]
It is ready to verify that $p_\alpha$ satisfies~\eqref{eq: convexity existence} regarding three conditions as follows. 
\begin{itemize}
    \item[(i)] Condition~\eqref{eq: convexity existence a}. By the construction of $p_\alpha$,
    \[
    \begin{array}{rcl}
         \mathbb{E}_\mu \left[\, f(p_\alpha(x)) \,\right]
    & = & \displaystyle
    \int_{E_\alpha} f(p(x)) \mu(dx)
    + 
    \int_{X/E_\alpha} f(p'(x)) \mu(dx)
    \\[0.2cm]
    & = & \displaystyle
    \mathfrak{p}_1(E_\alpha) + \mathfrak{p}_{m+3}(X/E_\alpha)
    \\[0.2cm]
    & = & \displaystyle
    \alpha\mathfrak{p}_1(X) + (1-\alpha)\mathfrak{p}_{m+3}(X)
    \\[0.2cm]
    & = & \displaystyle
    \alpha\mathbb{E}_\mu \left[\, f(p(x)) \,\right] + (1-\alpha)\mathbb{E}_\mu \left[\, f(p'(x)) \,\right].
    \end{array}
    \]
    By the assumption~\eqref{eq: convexity assumption},
    \[
    \mathbb{E}_\mu \left[\, f(p_\alpha(x)) \,\right]
    \; \leq \; \alpha \gamma_0 + (1-\alpha) \gamma_0'.
    \]
    \item[(ii)] Condition~\eqref{eq: convexity existence b}. Similarly, we have
    \[
    \mathbb{E}_\mu \left[\, f_i(p_\alpha(x)) \,\right] 
    \; = \;
     \alpha\mathbb{E}_\mu \left[\, f_i(p(x)) \,\right] + (1-\alpha)\mathbb{E}_\mu \left[\, f_i(p'(x)) \,\right]
    \]
     By the assumption~\eqref{eq: convexity assumption},
     \[
     b_i - \mathbb{E}_\mu \left[\, f_i(p_\alpha(x)) \,\right] 
    \; \leq \; \alpha \gamma_i +(1-\alpha) \gamma_i'.
     \]
     \item[(iii)] Condition~\eqref{eq: convexity existence c}. Similarly, we have
     \[
     \mathbb{E}_{\mu} \left[\,
    p_\alpha(x)
    \,\right] - 1
    \; = \; \alpha \tau +(1-\alpha) \tau'.
     \]
\end{itemize}
Therefore, there exists $p_\alpha \in \mathcal{P}$ such that~\eqref{eq: convexity existence} holds for $\alpha\in [0, 1]$, $(\gamma, \tau)$, $(\gamma', \tau') \in \Gamma$.
\end{proof}

With Lemma~\ref{lem: epigraph convexity}, we next show strong duality for Problem~\eqref{eq: unlearning reverse KL explicit}. First of all, it is easy to check that $(P_{\text{rev}}^\star, 0, 0)$ is not an interior point of $\Gamma$; otherwise, there always exists $\epsilon>0$ such that $(P_{\text{rev}}^\star-\epsilon, 0, 0) \in \Gamma$, which contradicts the optimality of $P_{\text{rev}}^\star$. By the supporting hyperplane theorem, there exists a non-zero vector $(y, z) \in \mathbb{R}^{m+2}$ such that
\begin{equation}\label{eq: supporting hyperplane}
\sum_{i\,=\,0}^{m}y_i \gamma_i + z \tau \; \geq \; y_0 P_{\text{rev}}^\star \; \text{ for } (\gamma, \tau) \in \Gamma.
\end{equation}
We note that $\Gamma$ is unbounded above, i.e., for any $(\gamma, \tau) \in \Gamma$, there exists $(\gamma', \tau) \in \Gamma$ such that $\gamma'\geq \gamma$ elementwise. To ensure~\eqref{eq: supporting hyperplane}, we must require $y_i\geq0$  for all $i$; otherwise, for some $y_i<0$, we can set the left-hand side of the inequality~\eqref{eq: supporting hyperplane} to be $-\infty$ by increasing $\gamma_i$. Furthermore, we can show that $y_0 > 0$; otherwise, $y_0=0$ simplifies~\eqref{eq: supporting hyperplane} to be $\sum_{i\,=\,1}^m y_i \gamma_i + z \tau \geq 0$ for $(\gamma, \tau) \in\Gamma$. This contradicts the strict feasibility assumption, since setting $(\gamma, \tau)$ to be the strict feasible point $\bar{p}$ concludes that $\gamma_i<0$ for all $i$, and $t=0$. Hence, $y_0 > 0$, and thus,
\[
\gamma_0 + \sum_{i\,=\,1}^m \bar{y}_i \gamma_i + \bar{z} \tau\; \geq \; P_{\text{rev}}^\star \; \text{ for } (\gamma, \tau) \in \Gamma
\]
where $\bar{y}_i \DefinedAs \frac{y_i}{y_0}$ and $\bar{z} \DefinedAs \frac{z}{y_0}$. It further implies that
\[
\mathbb{E}_\mu \left[\, f(p(x)) \,\right] + \sum_{i\,=\,1}^m \bar{y}_i \left( b_i - \mathbb{E}_\mu \left[\, f_i(p(x)) \,\right]\right) + \bar{z} \left(  \mathbb{E}_\mu \left[\,p(x) \,\right] -1 \right)
\; \geq \; P_{\text{rev}}^\star
\]
or, equivalently, $\hat{L}_{\text{rev}}(p; \bar{y}, \bar{z}) \geq P_{\text{rev}}^\star$ for $p\in\mathcal{P}$ and $\bar{y}\geq 0$. Thus, minimization of $\hat{L}_{\text{rev}}(p; \bar{y}, \bar{z})$ over $p\in\mathcal{P}$ leads to $\hat{D}_{\text{rev}}(\bar{y},\bar{z}) \geq P_{\text{rev}}^\star$, and thus $\hat{D}_{\text{rev}}^\star \geq P_{\text{rev}}^\star$.

\subsection{Proof of Corollary~\ref{cor: optimal distributions reverse KL}}\label{app: optimal distributions reverse KL}

\begin{proof}
    When $1-\one^\top\lambda  \neq 0$, we rearrange the Lagrangian $L_{\text{rev}}(p, \lambda)$ as follows,
\[
    \begin{array}{rcl}
         L_{\text{rev}}(p, \lambda)
        & = & 
        \displaystyle
        D_{\text{KL}}(p\,\Vert\, q)
        +
        \sum_{i\,=\,1}^m \lambda_i \left(
        b_i 
        - D_{\text{KL}}(p\,\Vert\, q_{\text{u}}^i)
        \right)
        \\[0.2cm]
        & = & 
        \displaystyle
        \mathbb{E}_{x\,\sim\,p}[ \,\log p(x) - \log q(x)\,]
        + 
        \lambda^\top b
        - 
        \one^\top\lambda  \mathbb{E}_{x\,\sim\,p}[ \,\log p(x)\,]
        +
        \sum_{i\,=\,1}^m \lambda_i \mathbb{E}_{x\,\sim\,p}[ \,\log q_{\text{u}}^i(x)\,]
        \\[0.2cm]
        & = & 
        \displaystyle
        \lambda^\top b
        +
        ( 1 - 
        \one^\top\lambda ) \mathbb{E}_{x\,\sim\,p}[ \,\log p(x)\,]
        -
        \left( 
        \mathbb{E}_{x\,\sim\,p}[ \,\log q(x)\,]
        -
        \sum_{i\,=\,1}^m \lambda_i \mathbb{E}_{x\,\sim\,p}[ \,\log q_{\text{u}}^i(x)\,]
        \right) 
        \\[0.2cm]
        & = & \displaystyle
        \lambda^\top b
        + 
        (1 - \one^\top\lambda) 
        \left(
            \mathbb{E}_{x\,\sim\,p}[ \,\log p(x)\,]
            - 
            \mathbb{E}_{x\,\sim\,p}\left[ \,\log \frac{(q(x))^{1/(1-\one^\top\lambda)}}{\prod_{i\,=\,1}^m(q_{\text{u}}^i(x))^{\hat\lambda_i}}\,\right]
        \right)
        \\[0.2cm]
        & = & \displaystyle
        \lambda^\top b
        + 
        (1-\one^\top\lambda)
        \left( D_{\text{KL}}(p(\cdot)\,\Vert\, q^{\dagger}_{\text{\normalfont u}}(\cdot;\lambda) ) 
        -
        \log Z^{\dagger}_{\text{\normalfont 
    u}}(\lambda)
    \right)
    \end{array}
\]
where the 4th equality abbreviates $\hat\lambda_i \DefinedAs \frac{\lambda_i}{1-\one^\top\lambda}$ for $i=1,\ldots,m$, and the last equality is due to that  $q_{\text{u}}^{\dagger}$ is well-defined for $q$ and $q_{\text{u}}^i$ that share the same support. When $1-\one^\top\lambda = 0$, it is straightforward to compute 
\[
\begin{array}{rcl}
     L_{\text{rev}}(p, \lambda) 
    & = & \displaystyle
    \lambda^\top b - \mathbb{E}_{x\,\sim\,p} \left[ \log \frac{q(x)}{\prod_{i\,=\,1}^m (q_{\text{u}}^i(x))^{\lambda_i}}\right]
    \\[0.2cm]
    & = & \displaystyle
    \one^\top b - \mathbb{E}_{x\,\sim\,p} \left[ \log q^{\ddagger}_{\text{u}}(x;\lambda) \right] - \log Z^{\ddagger}_{\text{u}}(\lambda)
\end{array}
\]
and simplify the unlearning target $q_{\text{\normalfont u}}^{\dagger}$ as
    \[
    q_{\text{\normalfont u}}^{\ddagger} (\cdot;\lambda)
    \; \DefinedAs \;
    \frac{1}{ Z^{\ddagger}_{\text{\normalfont 
    u}}(\lambda)}
    \frac{q(\cdot)}{\prod_{i\,=\,1}^m(q_{\text{\normalfont u}}^i(\cdot))^{\lambda_i}} 
\]
where $Z_{\text{\normalfont 
    u}}^{\ddagger}(\lambda) \DefinedAs \int {q(x)}/{\prod_{i\,=\,1}^m(q_{\text{\normalfont u}}^i(x))^{\lambda_i}} dx$.

To evaluate the dual function $D_{\text{rev}}(\lambda) \DefinedAs \minimize_{p\,\in\,\Delta} L_{\text{rev}}(p, \lambda)$, there are three cases. 
\begin{itemize}
    \item[(i)] When $0\leq \one^\top\lambda <1$, minimization of $L_{\text{rev}}(p, \lambda)$ over $p$ is at a unique minimizer,  
    \[
    p_{\text{rev}}^\star (\cdot; \lambda) 
    \; = \;
   q_{\text{u}}^{\dagger}(\cdot; \lambda)
    \]
    which defines the dual function $D_{\text{rev}}(\lambda) = \lambda^\top b - (1-\one^\top\lambda) \log Z^{\dagger}_{\text{u}}(\lambda)$. 
    
    \item[(ii)] When $ \one^\top\lambda =1$, minimization of $L_{\text{rev}}(p, \lambda)$ over $p$ is at $X(\lambda) \DefinedAs\argmax_{x}{q(x)}/{\prod_{i\,=\,1}^m (q_{\text{u}}^i(x))^{\lambda_i}}$. Thus,
    \[
    p_{\text{rev}}^\star(\cdot ;\lambda) \; = \;
    \frac{1}{q^{\ddagger}_{\text{u}}(X(\lambda);\lambda)}\delta_{X(\lambda)}(\cdot).
    \]
    Hence, 
    \[
    D_{\text{rev}}(\lambda) \; = \; \one^\top b -  \frac{1}{q^{\ddagger}_{\text{u}}(X(\lambda);\lambda)}\int_{X(\lambda)} \log q_{\text{u}}^{\ddagger}(x ;\lambda) dx - \log Z_{\text{u}}^{\ddagger}(\lambda).
    \]
    
    \item[(iii)] When $\one^\top\lambda >1$, minimization of $L_{\text{rev}}(p, \lambda)$ over $p$ is equivalent to 
    \[
    \maximize_{p\,\in\,\Delta} \; 
    D_{\text{KL}}(p\,\Vert\, q^{\dagger}_{\text{\normalfont u}} ).
    \]
    We note that when $p\in\Delta$ does not satisfy Assumption~\ref{as: admissibility reverse KL}, $D_{\text{KL}}(p\,\Vert\, q^{\dagger}_{\text{\normalfont u}} ) = \infty$. Hence, $D_{\text{rev}}^\star = -\infty$, which contradicts to Assumption~\ref{as: feasibility reverse KL}.
\end{itemize}

Hence, we can summarize them as follows. When $\one^\top\lambda \leq 1$ and $\lambda\geq 0$, the dual function $D_{\text{rev}}(\lambda)$ has an explicit form,
\[
    D_{\text{rev}}(\lambda)
    \; =\;
    \begin{cases}
        \displaystyle\lambda^\top b - (1-\one^\top \lambda)
        \log Z^{\dagger}_{\text{\normalfont 
    u}}(\lambda) 
    \;\;\;\;  \;\;\;\;  
    \;\;\;\;  \;\;\;\;
    \;\;\;\;  \;\;\;\;
    \;\;\;\;  \;\;\;\;
    \;\;\;\;  \;\;\;\;
    \;\;\;\;  \;\;\;
    \text{\normalfont when } 0 \leq \one^\top\lambda < 1 \text{ and }  \lambda\geq 0
    \\
    \displaystyle\one^\top b -  \frac{1}{q^{\ddagger}_{\text{u}}(X(\lambda);\lambda)}\int_{X(\lambda)} \log q_{\text{u}}^{\ddagger}(x ;\lambda) dx - \log Z_{\text{u}}^{\ddagger}(\lambda)
    \;\;\;\;  \;\;\;\;
    \;
    \text{\normalfont when } \one^\top\lambda = 1 \text{ and } \lambda\geq 0
    \end{cases}
    \]
    where $X(\lambda) \DefinedAs\argmax_{x}{q(x)}/{\prod_{i\,=\,1}^m (q_{\text{u}}^i(x))^{\lambda_i}}$. 
    The Lagrangian function $L_{\text{rev}}(p;\lambda)$ has an explicit form,
    \[
    L_{\text{rev}}(p, \lambda) \; = \;
    \begin{cases}
        \lambda^\top b
        + 
        (1-\one^\top\lambda)
        \left( D_{\text{\normalfont KL}}(p\,\Vert\, q^{\dagger}_{\text{\normalfont u}} ) 
        -
        \log Z^{\dagger}_{\text{\normalfont 
    u}}(\lambda)
    \right)
    \;\;\;\;  \;\;\;\; 
    \;\;
    \text{\normalfont when } \one^\top\lambda \neq 1 \text{ and } \lambda\geq 0
    \\[0.2cm]
    \one^\top b - \mathbb{E}_{x\,\sim\,p} \left[ \log q^{\ddagger}_{\text{\normalfont u}}(x;\lambda) \right] - \log Z^{\ddagger}_{\text{\normalfont u}}(\lambda)
    \;\;\;\;  \;\;\;\; 
    \;\;\;\;  \;\;\;\;
    \;\;\;
    \text{\normalfont when } \one^\top\lambda = 1 \text{ and } \lambda\geq 0.
    \end{cases}
    \]
    A partial minimizer of the Lagrangian $L_{\text{rev}}(p; \lambda)$ over $p$ is achieved at 
    \[
    p_{\text{rev}}^\star(\cdot;\lambda) 
    \; = \;
    \begin{cases}
    \displaystyle 
    q_{\text{\normalfont u}}^{\dagger}(\cdot; \lambda)
    \;\;\;\;  \;\;\;\;
    \;\;\;\;  \;\;\;\;
    \;\;\;\;  \;\;\;\;
    \;\;\;\;  \;\;\;\;
    \text{\normalfont when } 0 \leq \one^\top\lambda < 1 \text{ and }  \lambda\geq 0
    \\[0.2cm]
    \displaystyle 
    \frac{1}{q^{\ddagger}_{\text{\normalfont u}}(X(\lambda);\lambda)}\delta_{X(\lambda)}(\cdot)
    \;\;\;\;  \;\;\;\;
    \;\;\;\;  
    \text{\normalfont when } \one^\top\lambda = 1 \text{ and }  \lambda\geq 0.
    \end{cases}
    \]
    
    Finally, we apply the strong duality in Theorem~\ref{thm: unlearning reverse KL strong duality} to evaluate $p_{\text{rev}}^\star(\cdot) = p_{\text{rev}}^\star(\cdot;\lambda_{\text{rev}}^\star)$. 
    When $\one^\top \lambda_{\text{rev}}^\star = 1$, the solution $p_{\text{rev}}^\star$ is a delta function, which is not practically attainable.
    This case can always be avoided by scaling $\lambda_{\text{rev}}^\star$, allowing us to focus on $0 \le \mathbf{1}^\top \lambda_{\text{rev}}^\star < 1$. 
\end{proof}

\subsection{Proof of Theorem~\ref{thm: unlearning forward KL strong duality}}\label{app: unlearning forward KL strong duality}

\begin{proof}
    Since the proof is similar to the proof of Theorem~\ref{thm: unlearning reverse KL strong duality}, we omit the detail and only state the key changes. Define $f(p(x)) \DefinedAs q(x) \log \frac{q(x)}{p(x)}$. Since $p$, $q \ll \mu$, we can rewrite the forward KL divergence $D_{\text{KL}}(q\,\Vert\, p)$ as
    \[
    D_{\text{KL}}(q\,\Vert\, p) 
    \; = \; 
    \int_{X} q(x) \log \frac{q(x)}{p(x)} \mu(dx)
    \; = \; 
    \mathbb{E}_\mu\left[\, f(p(x)) \,\right]
    \]
    where $\mu$: $\mathcal{B}(X) \to [0,\infty]$ is the standard Lebesgue measure. Similarly, we define $f_i(p(x)) \DefinedAs q_{\text{u}}^i(x) \log \frac{q_{\text{u}}^i(x)}{p(x)}$, and thus $D_{\text{KL}}(q_{\text{u}}^i\,\Vert\, p) = \mathbb{E}_\mu\left[\, f_i(p(x)) \,\right]$. Meanwhile, we explicitly express the probability constraint for  $p\in \Delta_X$ as
    \[
    \mathbb{E}_{\mu} \left[\,
    p(x)
    \,\right]
    \; \DefinedAs\;
    \int_X p(x) \mu(dx) 
    \; = \; 
    1.
    \]

    With a slight abuse of notation, we also denote the set of all measures that satisfy~\eqref{eq: admissible measures forward KL} by $\mathcal{P}$. 
    To prove strong duality for Problem~\eqref{eq: unlearning forward KL}, we prove it for an equivalent formulation of Problem~\eqref{eq: unlearning forward KL},
    \begin{equation}\label{eq: unlearning forward KL explicit}
    \begin{array}{rl}
        \displaystyle\minimize_{p\,\in\,\mathcal{P}} & 
        \mathbb{E}_\mu \left[\, f(p(x)) \,\right]
        \\[0.2cm]
        \subject &  \mathbb{E}_\mu \left[\, f_i(p(x)) \,\right]\; \geq \; b_i \;\; \text{ for } i \in [m]
        \\[0.2cm]
        & \displaystyle
        \mathbb{E}_{\mu} \left[\,
    p(x)
    \,\right] \; = \;1.
    \end{array}
\end{equation}

The rest follows the same proof steps in the proof of Theorem~\ref{thm: unlearning reverse KL strong duality}. First, we employ Lyapunov's convexity theorem to show that the epigraph for Problem~\eqref{eq: unlearning forward KL explicit} is non-empty and convex, implying that~\eqref{eq: unlearning forward KL explicit} is strongly dual. Second, we translate the strong duality for Problem~\eqref{eq: unlearning forward KL explicit} to the original problem~\eqref{eq: unlearning forward KL}.
\end{proof}

\subsection{Proof of Corollary~\ref{cor: optimal distributions forward KL}}\label{app: optimal distributions forward KL}

\begin{proof}
    For any $\lambda\geq0$, we rearrange the Lagrangian $L_{\text{fw}}(p,\lambda)$ as follows,
    \[
    \begin{array}{rcl}
         L_{\text{fw}}(p,\lambda) 
         & = & 
         \displaystyle
        D_{\text{KL}}(q\,\Vert\, p)
        +
        \sum_{i\,=\,1}^m \lambda_i \left(
        b_i 
        - D_{\text{KL}}(q_{\text{u}}^i\,\Vert\, p)
        \right)
        \\[0.2cm]
        & = & 
        \displaystyle\mathbb{E}_{x\,\sim\,q} \left[ \log q(x)\right] + \sum_{i\,=\,1}^m \lambda_i (b_i - \mathbb{E}_{x\,\sim\,q_{\text{u}}^i} \left[ \log q_{\text{u}}^i(x)\right])
        - \int \left(q(x) - \sum_{i\,=\,1}^m \lambda_i q_{\text{u}}^i(x)\right)  \log p(x)
        \\[0.2cm]
        & = & 
        \displaystyle\mathbb{E}_{x\,\sim\,q} \left[ \log q(x)\right] + \sum_{i\,=\,1}^m \lambda_i (b_i - \mathbb{E}_{x\,\sim\,q_{\text{u}}^i} \left[ \log q_{\text{u}}^i(x)\right])
        - Z_{\text{u}}^\triangleleft(\lambda)\int q_{\text{u}}^\triangleleft(x;\lambda)\log p(x)
        \\[0.2cm]
        & = & 
        \displaystyle\mathbb{E}_{x\,\sim\,q} \left[ \log q(x)\right] + \sum_{i\,=\,1}^m \lambda_i (b_i - \mathbb{E}_{x\,\sim\,q_{\text{u}}^i} \left[ \log q_{\text{u}}^i(x)\right]) - Z_{\text{u}}^\triangleleft(\lambda) \mathbb{E}_{x\,\sim\,q_{\text{u}}^\triangleleft(\cdot;\lambda)} \left[ \log q_{\text{u}}^\triangleleft(x;\lambda)\right]
         \\[0.2cm]
        &  & 
        \displaystyle+ Z_{\text{u}}^\triangleleft(\lambda)D_{\text{KL}}\left( q_{\text{u}}^\triangleleft(\cdot;\lambda)\,\Vert\, p(\cdot)\right).
    \end{array}
    \]
    It is important to verify that $q(x) - \sum_{i\,=\,1}^m \lambda_{\text{fw}, i}^\star q_{\text{u}}^i(x) \geq 0 $ for any $x\in X$. If not, then $q(x) - \sum_{i\,=\,1}^m \lambda_{\text{fw}, i}^\star q_{\text{u}}^i(x) < 0 $ for some $x\in X$. In this case, minimization of $L_{\text{fw}}(p, \lambda_{\text{fw}}^\star)$ is achieved by $p_{\text{fw}}^\star$ that satisfies $p_{\text{fw}}^\star(x) = 0$, and $L_{\text{fw}}(p_{\text{fw}}^\star,\lambda_{\text{fw}}^\star) = -\infty$. However, from the strong duality we have $L_{\text{fw}}(p_{\text{fw}}^\star,\lambda_{\text{fw}}^\star) = D_{\text{KL}}(q\,\Vert\,p_{\text{fw}}^\star)\geq 0$, leading to a contradiction. Therefore, we apply the strong duality from Theorem~\ref{thm: unlearning forward KL strong duality} to evaluate, 
    \[
    p_{\text{fw}}^\star(\cdot) 
    \; = \; 
    p_{\text{fw}}^\star(\cdot;\lambda_{\text{fw}}^\star) 
    \; = \; 
    \argmin_{p\,\in\,\Delta} L_{\text{fw}}(p,\lambda_{\text{fw}}^\star)
    \; = \; 
    q_{\text{u}}^{\triangleleft}(\cdot;\lambda_{\text{fw}}^\star)
    \]
    which completes the proof.
\end{proof}

\subsection{Proof of Theorem~\ref{thm: unlearning likelihood}}\label{app: unlearning likelihood}

\begin{proof}
We note that Problem~\eqref{eq: unlearning alignment} is a convex optimization problem, since the reverse KL divergence is a strongly convex function and the likelihood constraints are linear. According to the standard duality analysis~\cite{boyd2004convex}, there is no duality gap when the problem is strictly feasible.

\end{proof}

\section{Proofs in Section~\ref{sec: unlearning in score function space}}\label{app: proofs unlearning in score function space}

\subsection{Proof of Theorem~\ref{thm: unlearning reverse KL strong duality_point-wise}}\label{app: reverse kl strong duality_point-wise}

\begin{proof}
We consider Problem~\eqref{eq: unlearning reverse KL diffusion models_point-wise_score} in the distribution space, 
\begin{equation}\label{eq: unlearning reverse KL diffusion models_point-wise_distribution}
    \begin{array}{rl}
        \displaystyle\minimize_{p_{0}(\cdot)} & 
        D_{\text{KL}}(p_{0}(\cdot)\,\Vert\, q_{0}(\cdot; s_q))
        \\[0.2cm]
        \subject &  D_{\text{KL}}(p_{0}(\cdot)\,\Vert\, q^i_{0}(\cdot; s_q^i))\; \geq \; b_i \; \text{ for }\; i\in [m].
    \end{array}
\end{equation}
    Define $f(p_{0:T}(x_{0:T})) \DefinedAs p_{0:T}(x_{0:T})\log \frac{\int p_{0:T}(x_{0:T})d{x_{1:T}}}{\int q_{0:T}(x_{0:T})d{x_{1:T}}}$. Since $p_{0:T}$, $q_{0:T} \ll \mu$, we can rewrite the reverse KL divergence $D_{\text{KL}}(p_{0}\,\Vert\, q_{0})$ as
    \[
    D_{\text{KL}}(p_{0}\,\Vert\, q_{0}) 
    \; = \; 
    \int_{X^{T+1}} p_{0:T}(x_{0:T}) \log \frac{\int p_{0:T}(x_{0:T})d{x_{1:T}}}{\int q_{0:T}(x_{0:T})d{x_{1:T}}} \mu(dx_{0:T})
    \; = \; 
    \mathbb{E}_\mu\left[\, f(p_{0:T}(x_{0:T})) \,\right]
    \]
    where $\mu$: $\mathcal{B}(X^{T+1}) \to [0,\infty]$ is the standard Lebesgue measure. Similarly, we define $f_i(p_{0:T}(x_{0:T})) \DefinedAs p_{0:T}(x_{0:T}) \log \frac{\int p_{0:T}(x_{0:T})dx_{1:T}}{\int q_{0:T}^i(x_{0:T})dx_{1:T}}$, and thus $D_{\text{KL}}(p_{0}\,\Vert\, q_{0}^i) = \mathbb{E}_\mu\left[\, f_i(p_{0:T}(x_{0:T})) \,\right]$. Meanwhile, we explicitly express the probability constraint for $p_{0:T}\in \Delta(X^{T+1})$ as
    \[
    \mathbb{E}_{\mu} \left[\,
    p_{0:T}(x_{0:T})
    \,\right]
    \; \DefinedAs\;
    \int_{X^{T+1}} p_{0:T}(x_{0:T}) \mu(dx_{0:T}) 
    \; = \; 
    1.
    \]
    
    To prove strong duality for Problem~\eqref{eq: unlearning reverse KL diffusion models_point-wise_distribution}, we prove it for an equivalent formulation of Problem~\eqref{eq: unlearning reverse KL diffusion models_point-wise_distribution},
    \begin{equation}\label{eq: unlearning reverse KL explicit diffusion models_point-wise_equiv}
    \begin{array}{rl}
        \displaystyle\minimize_{p_{0:T}\,\in\,\mathcal{P}} & 
        \mathbb{E}_\mu \left[\, f(p_{0:T}(x_{0:T})) \,\right]
        \\[0.2cm]
        \subject &  \mathbb{E}_\mu \left[\, f_i(p_{0:T}(x_{0:T})) \,\right]\; \geq \; b_i \;\; \text{ for } i \in [m]
        \\[0.2cm]
        & \displaystyle
        \mathbb{E}_{\mu} \left[\,
    p_{0:T}(x_{0:T})
    \,\right] \; = \;1.
    \end{array}
\end{equation}
The rest follows the same proof steps in the proof of Theorem~\ref{thm: unlearning reverse KL strong duality}. First, we employ Lyapunov's convexity theorem to show that the epigraph for Problem~\eqref{eq: unlearning reverse KL explicit diffusion models_point-wise_equiv} is non-empty and convex, implying that~\eqref{eq: unlearning reverse KL explicit diffusion models_point-wise_equiv} is strongly dual. Second, we translate the strong duality for Problem~\eqref{eq: unlearning reverse KL explicit diffusion models_point-wise_equiv} to the original problem~\eqref{eq: unlearning reverse KL diffusion models_point-wise_distribution}.

To prove strong duality for Problem~\eqref{eq: unlearning reverse KL diffusion models_point-wise_score}, we introduce some notation as follows. Let $\bar p_{0, \text{rev}}^\star$ be a solution to Problem~\eqref{eq: unlearning reverse KL diffusion models_point-wise_distribution} and denote $\bar P_{\text{rev}}^\star \DefinedAs D_{\text{KL}}(\bar p_{0, \text{rev}}^\star(\cdot)\,\Vert\, q_{0}(\cdot; s_q))$. The Lagrangian for Problem~\eqref{eq: unlearning reverse KL diffusion models_point-wise_distribution} is given by $\bar L_{\text{rev}}(p_{0}(\cdot), \lambda) \DefinedAs L_{\text{rev}}(p_{0}(\cdot), \lambda)$, and its dual function is given by $\bar D_{\text{rev}}(\lambda) \DefinedAs \min_{p_{0}} \bar L_{\text{rev}}(p_{0}(\cdot), \lambda)$. Let an optimal dual variable be $\bar \lambda_{\text{rev}}^\star \in\argmax_{\lambda\,\geq\,0} \bar D_{\text{rev}}(\lambda)$, and the optimal value of the dual function be $\bar D_{\text{rev}}^\star \DefinedAs \bar D_{\text{rev}}(\bar \lambda_{\text{rev}}^\star)$. From the strong duality of Problem~\eqref{eq: unlearning reverse KL diffusion models_point-wise_distribution}, there exists a pair $(\bar p_{0, \text{rev}}^\star, \bar \lambda_{\text{rev}}^\star)$ such that 
\[
\bar P_{\text{rev}}^\star \;=\; \bar D_{\text{rev}}^\star \; \text{ or }\; D_{\text{KL}}(\bar p_{0, \text{rev}}^\star(\cdot)\,\Vert\, q_{0}(\cdot; s_q)) \;=\; \bar D_{\text{rev}}(\bar \lambda_{\text{rev}}^\star).
\]
With Assumption~\ref{as: feasibility reverse KL_point-wise}, this implies that $(\bar p_{0, \text{rev}}^\star, \bar \lambda_{\text{rev}}^\star)$ is a saddle point of the Lagrangian $\bar L_{\text{rev}}(p_{0}(\cdot), \lambda)$,
\[
\bar L_{\text{rev}}(\bar p_{0, \text{rev}}^\star(\cdot), \lambda) \; \leq \; \bar L_{\text{rev}}(\bar p_{0, \text{rev}}^\star(\cdot), \bar\lambda_{\text{rev}}^\star) \; \leq \; \bar L_{\text{rev}}(p_{0}(\cdot), \bar\lambda_{\text{rev}}^\star)\; \text{ for all } p_{0}(\cdot) \text{ and } \lambda \geq 0.
\]
By Assumption~\ref{as: admissibility reverse KL_point-wise}, the score function class $\mathcal{S}$ is expressive enough, any point distribution $p_{0}(\cdot)$ can be represented as $p_{0}(\cdot;s_p)$ with some $s_p\in\mathcal{S}$; and vice versa. Thus, we can express $\bar p_{0, \text{rev}}^\star(\cdot)$ as $ p_{0}(\cdot; s_{\text{rev}}^\star)$ with some $s_{\text{rev}}^\star\in\mathcal{S}$. We also note that the dual function $\bar D_{\text{rev}}(\lambda)$ in the path and score function spaces are the same. Hence, the optimal dual function for Problem~\eqref{eq: unlearning reverse KL diffusion models_point-wise_score} remains to be $\hat D_{\text{rev}}(\lambda) = \bar D_{\text{rev}}(\lambda)$. Thus, $(s_{\text{rev}}^\star, \bar\lambda_{\text{rev}}^\star)$ is a saddle point of the Lagrangian $\hat L_{\text{rev}}(s_p;\lambda) \DefinedAs \bar L_{\text{rev}}(p_{0}(\cdot; s_p);\lambda)$,
\[
\hat L_{\text{rev}}(s_{\text{rev}}^\star;\lambda) \; \leq \; \hat L_{\text{rev}}(s_{\text{rev}}^\star;\bar\lambda_{\text{rev}}^\star) \; \leq\; \hat L_{\text{rev}}(s_p;\bar\lambda_{\text{rev}}^\star) \; \text{ for all } s_p\in\mathcal{S} \text{ and } \lambda\geq0.
\]
Therefore, the strong duality holds for Problem~\eqref{eq: unlearning reverse KL diffusion models_point-wise_score} in the score function space.
\end{proof}

\subsection{Proof of Theorem~\ref{thm: unlearning forward KL strong duality_path-wise}}\label{app: unlearning forward KL strong duality_path-wise}

\begin{proof}
We consider Problem~\eqref{eq: unlearning forward KL diffusion models_path-wise_score} in the path distribution space, 
\begin{equation}\label{eq: unlearning forward KL diffusion models_path-wise_distribution}
    \begin{array}{rl}
        \displaystyle\minimize_{p_{0:T}(\cdot)} & 
        D_{\text{KL}}( q_{0:T}(\cdot; s_q) \,\Vert\, p_{0:T}(\cdot))
        \\[0.2cm]
        \subject &  D_{\text{KL}}( q^i_{0:T}(\cdot; s_q^i) \,\Vert\, p_{0:T}(\cdot))\; \geq \; b_i \; \text{ for }\; i\in [m].
    \end{array}
\end{equation}
    Define $f(p_{0:T}(x_{0:T})) \DefinedAs q_{0:T}(x_{0:T}) \log \frac{q_{0:T}(x_{0:T})}{p_{0:T}(x_{0:T})}$. Since $p_{0:T}$, $q_{0:T} \ll \mu$, we can rewrite the forward KL divergence $D_{\text{KL}}(q_{0:T}\,\Vert\, p_{0:T})$ as
    \[
    D_{\text{KL}}(q_{0:T}\,\Vert\, p_{0:T}) 
    \; = \; 
    \int_{X^{T+1}} q_{0:T}(x_{0:T}) \log \frac{q_{0:T}(x_{0:T})}{p_{0:T}(x_{0:T})} \mu(dx_{0:T})
    \; = \; 
    \mathbb{E}_\mu\left[\, f(p_{0:T}(x_{0:T})) \,\right]
    \]
    where $\mu$: $\mathcal{B}(X^{T+1}) \to [0,\infty]$ is the standard Lebesgue measure. Similarly, we define $f_i(p_{0:T}(x_{0:T})) \DefinedAs q_{0:T}^i(x_{0:T}) \log \frac{q_{0:T}^i(x_{0:T})}{p_{0:T}(x_{0:T})}$, and thus $D_{\text{KL}}(q_{0:T}^i \,\Vert\, p_{0:T}) = \mathbb{E}_\mu\left[\, f_i(p_{0:T}(x_{0:T})) \,\right]$. Meanwhile, we explicitly express the probability constraint for $p_{0:T}\in \Delta(X^{T+1})$ as
    \[
    \mathbb{E}_{\mu} \left[\,
    p_{0:T}(x_{0:T})
    \,\right]
    \; \DefinedAs\;
    \int_{X^{T+1}} p_{0:T}(x_{0:T}) \mu(dx_{0:T}) 
    \; = \; 
    1.
    \]
    
    To prove strong duality for Problem~\eqref{eq: unlearning forward KL diffusion models_path-wise_distribution}, we prove it for an equivalent formulation of Problem~\eqref{eq: unlearning forward KL diffusion models_path-wise_distribution},
    \begin{equation}\label{eq: unlearning forward KL diffusion models_path-wise_distribution_equiv}
    \begin{array}{rl}
        \displaystyle\minimize_{p_{0:T}\,\in\,\mathcal{P}} & 
        \mathbb{E}_\mu \left[\, f(p_{0:T}(x_{0:T})) \,\right]
        \\[0.2cm]
        \subject &  \mathbb{E}_\mu \left[\, f_i(p_{0:T}(x_{0:T})) \,\right]\; \geq \; b_i \;\; \text{ for } i \in [m]
        \\[0.2cm]
        & \displaystyle
        \mathbb{E}_{\mu} \left[\,
    p_{0:T}(x_{0:T})
    \,\right] \; = \;1.
    \end{array}
\end{equation}
The rest follows the same proof steps in the proof of Theorem~\ref{thm: unlearning forward KL strong duality}. First, we employ Lyapunov's convexity theorem to show that the epigraph for Problem~\eqref{eq: unlearning forward KL diffusion models_path-wise_distribution_equiv} is non-empty and convex, implying that~\eqref{eq: unlearning forward KL diffusion models_path-wise_distribution_equiv} is strongly dual. Second, we translate the strong duality for Problem~\eqref{eq: unlearning forward KL diffusion models_path-wise_distribution_equiv} to the original problem~\eqref{eq: unlearning forward KL diffusion models_path-wise_distribution}.

To prove strong duality for Problem~\eqref{eq: unlearning forward KL diffusion models_path-wise_distribution_equiv}, we introduce some notation as follows. Let $\bar p_{0:T, \text{fw}}^\star$ be a solution to Problem~\eqref{eq: unlearning forward KL diffusion models_path-wise_distribution_equiv} and denote $\bar P_\text{fw}^\star \DefinedAs D_{\text{KL}}( q_{0:T}(\cdot; s_q) \,\Vert\,  \bar p_{0:T, \text{fw}}^\star(\cdot))$. The Lagrangian for Problem~\eqref{eq: unlearning forward KL diffusion models_path-wise_distribution} is given by $\bar L_\text{fw}(p_{0:T}(\cdot), \lambda) \DefinedAs L_\text{fw}(p_{0:T}(\cdot), \lambda)$, and its dual function is given by $\bar D_\text{fw}(\lambda) \DefinedAs \min_{p_{0:T}} \bar L_\text{fw}(p_{0:T}(\cdot), \lambda)$. Let an optimal dual variable be $\bar \lambda_\text{fw}^\star \in\argmax_{\lambda\,\geq\,0} \bar D_\text{fw}(\lambda)$, and the optimal value of the dual function be $\bar D_\text{fw}^\star \DefinedAs \bar D_\text{fw}(\bar \lambda^\star)$. From the strong duality of Problem~\eqref{eq: unlearning forward KL diffusion models_path-wise_distribution_equiv}, there exists a pair $(\bar p_{0:T, \text{fw}}^\star, \bar \lambda_\text{fw}^\star)$ such that 
\[
\bar P_\text{fw}^\star \;=\; \bar D_\text{fw}^\star \; \text{ or }\; D_{\text{KL}}(q_{0:T}(\cdot; s_q)\,\Vert\, \bar p_{0:T, \text{fw}}^\star(\cdot)) \;=\; \bar D_\text{fw}(\bar \lambda_\text{fw}^\star).
\]
With Assumption~\ref{as: feasibility forward KL_path-wise}, this implies that $(\bar p_{0:T, \text{fw}}^\star, \bar \lambda_{\text{fw}}^\star)$ is a saddle point of the Lagrangian $\bar L_{\text{fw}}(p_{0:T}(\cdot), \lambda)$,
\[
\bar L_{\text{fw}}(\bar p_{0:T,{\text{fw}}}^\star(\cdot), \lambda) \; \leq \; \bar L_{\text{fw}}(\bar p_{0:T, \text{fw}}^\star(\cdot), \bar\lambda_{\text{fw}}^\star) \; \leq \; \bar L_{\text{fw}}(p_{0:T}(\cdot), \bar\lambda_{\text{fw}}^\star)\; \text{ for all } p_{0:T}(\cdot) \text{ and } \lambda \geq 0.
\]
By Assumption~\ref{as: admissibility foward KL_path-wise}, the score function class $\mathcal{S}$ is expressive enough, any path distribution $p_{0:T}(\cdot)$ can be represented as $p_{0:T}(\cdot;s_p)$ with some $s_p\in\mathcal{S}$; and vice versa. Thus, we can express $\bar p_{0:T, \text{fw}}^\star(\cdot)$ as $p_{0:T}(\cdot; s_\text{fw}^\star)$ with some $s_\text{fw}^\star\in\mathcal{S}$. We also note that the dual function $\bar D_{\text{fw}}(\lambda)$ in the path and score function spaces are the same. Hence, the optimal dual function for Problem~\eqref{eq: unlearning forward KL diffusion models_path-wise_score} remains to be $\hat D_{\text{fw}}(\lambda) = \bar D_{\text{fw}}(\lambda)$. Thus, $(s_{\text{fw}}^\star, \bar\lambda_\text{fw}^\star)$ is a saddle point of the Lagrangian $\hat L_\text{fw}(s_p;\lambda) \DefinedAs \bar L_\text{fw}(p_{0:T}(x_{0:T}; s_p);\lambda)$,
\[
\hat L_\text{fw}(s_\text{fw}^\star;\lambda) \; \leq \; \hat L_\text{fw}(s_\text{fw}^\star;\bar\lambda_\text{fw}^\star) \; \leq\; \hat L_\text{fw}(s_p;\bar\lambda_\text{fw}^\star) \; \text{ for all } s_p\in\mathcal{S} \text{ and } \lambda\geq0.
\]
Therefore, the strong duality holds for Problem~\eqref{eq: unlearning forward KL diffusion models_path-wise_score} in the score function space.
\end{proof}

\subsection{Proof of Theorem~\ref{thm: unlearning reverse KL likelihood strong duality_point-wise}}\label{app: unlearning reverse KL likelihood strong duality_point-wise}

\begin{proof}
We consider Problem~\eqref{eq: unlearning alignment diffusion models} in the distribution space, 
\begin{equation}\label{eq: unlearning reverse KL likelihood diffusion models_point-wise_distribution}
    \begin{array}{rl}
        \displaystyle\minimize_{p_{0:T}(\cdot)} & 
        D_{\text{KL}}(p_{0:T}(\cdot)\,\Vert\, q_{0:T}(\cdot; s_q))
        \\[0.2cm]
        \subject &  \mathbb{E}_{x_0 \,\sim\, p_0}\left[\, q_0^i (x_0; s^i_q)\,\right]\; \leq \; \epsilon_i \; \text{ for }\; i\in [m].
    \end{array}
\end{equation}
    Define $f(p_{0:T}(x_{0:T})) \DefinedAs p_{0:T}(x_{0:T})\log \frac{ p_{0:T}(x_{0:T})}{ q_{0:T}(x_{0:T})}$. Since $p_{0:T}$, $q_{0:T} \ll \mu$, we can rewrite the reverse KL divergence $D_{\text{KL}}(p_{0}\,\Vert\, q_{0})$ as
    \[
    D_{\text{KL}}(p_{0:T}\,\Vert\, q_{0:T}) 
    \; = \; 
    \int_{X^{T+1}} p_{0:T}(x_{0:T}) \log \frac{ p_{0:T}(x_{0:T}) }{  q_{0:T}(x_{0:T}) } \mu(dx_{0:T})
    \; = \; 
    \mathbb{E}_\mu\left[\, f(p_{0:T}(x_{0:T})) \,\right]
    \]
    where $\mu$: $\mathcal{B}(X^{T+1}) \to [0,\infty]$ is the standard Lebesgue measure. Similarly, we define $f_i(p_{0:T}(x_{0:T})) \DefinedAs p_{0:T}(x_{0:T}) {\int q_{0:T}^i(x_{0:T})dx_{1:T}}$, and thus $\mathbb{E}_{x_0 \,\sim\, p_0}\left[\, q_0^i (x_0; s^i_q)\,\right] = \mathbb{E}_\mu\left[\, f_i(p_{0:T}(x_{0:T})) \,\right]$. Meanwhile, we explicitly express the probability constraint for $p_{0:T}\in \Delta(X^{T+1})$ as
    \[
    \mathbb{E}_{\mu} \left[\,
    p_{0:T}(x_{0:T})
    \,\right]
    \; \DefinedAs\;
    \int_{X^{T+1}} p_{0:T}(x_{0:T}) \mu(dx_{0:T}) 
    \; = \; 
    1.
    \]
    
    To prove strong duality for Problem~\eqref{eq: unlearning reverse KL likelihood diffusion models_point-wise_distribution}, we prove it for an equivalent formulation of Problem~\eqref{eq: unlearning reverse KL likelihood diffusion models_point-wise_distribution},
    \begin{equation}\label{eq: unlearning reverse KL likelihood explicit diffusion models_point-wise_equiv}
    \begin{array}{rl}
        \displaystyle\minimize_{p_{0:T}\,\in\,\mathcal{P}} & 
        \mathbb{E}_\mu \left[\, f(p_{0:T}(x_{0:T})) \,\right]
        \\[0.2cm]
        \subject &  \mathbb{E}_\mu \left[\, f_i(p_{0:T}(x_{0:T})) \,\right]\; \leq \; \epsilon_i \;\; \text{ for } i \in [m]
        \\[0.2cm]
        & \displaystyle
        \mathbb{E}_{\mu} \left[\,
    p_{0:T}(x_{0:T})
    \,\right] \; = \;1.
    \end{array}
\end{equation}
The rest follows the same proof steps in the proof of Theorem~\ref{thm: unlearning reverse KL strong duality}. First, we employ Lyapunov's convexity theorem to show that the epigraph for Problem~\eqref{eq: unlearning reverse KL likelihood explicit diffusion models_point-wise_equiv} is non-empty and convex, implying that~\eqref{eq: unlearning reverse KL likelihood explicit diffusion models_point-wise_equiv} is strongly dual. Second, we translate the strong duality for Problem~\eqref{eq: unlearning reverse KL likelihood explicit diffusion models_point-wise_equiv} to the original problem~\eqref{eq: unlearning reverse KL likelihood diffusion models_point-wise_distribution}.

To prove strong duality for Problem~\eqref{eq: unlearning alignment diffusion models}, we introduce some notation as follows. Let $\bar p_{0:T, \text{revl}}^\star$ be a solution to Problem~\eqref{eq: unlearning reverse KL likelihood diffusion models_point-wise_distribution} and denote $\bar P_{\text{revl}}^\star \DefinedAs D_{\text{KL}}(\bar p_{0:T, \text{revl}}^\star(\cdot)\,\Vert\, q_{0:T}(\cdot; s_q))$. The Lagrangian for Problem~\eqref{eq: unlearning reverse KL likelihood diffusion models_point-wise_distribution} is given by $\bar L_\text{revl}(p_{0:T}(\cdot), \lambda) \DefinedAs L_\text{revl}(p_{0:T}(\cdot), \lambda)$, and its dual function is given by $\bar D_\text{revl}(\lambda) \DefinedAs \min_{p_{0:T}} \bar L_\text{revl}(p_{0:T}(\cdot), \lambda)$. Let an optimal dual variable be $\bar \lambda_\text{revl}^\star \in\argmax_{\lambda\,\geq\,0} \bar D_\text{revl}(\lambda)$, and the optimal value of the dual function be $\bar D_\text{revl}^\star \DefinedAs \bar D_\text{revl}(\bar \lambda_\text{revl}^\star)$. From the strong duality of Problem~\eqref{eq: unlearning reverse KL likelihood diffusion models_point-wise_distribution}, there exists a pair $(\bar p_{0:T,\text{revl}}^\star, \bar \lambda_\text{revl}^\star)$ such that 
\[
\bar P_\text{revl}^\star 
\;=\; 
\bar D_\text{revl}^\star 
\; \text{ or }\; 
D_{\text{KL}}(\bar p_{0:T, \text{revl}}^\star(\cdot)\,\Vert\, q_{0:T}(\cdot; s_q)) 
\;=\; 
\bar D_\text{revl}(\bar \lambda_\text{revl}^\star).
\]
With Assumption~\ref{as: feasibility reverse KL likelihood_point-wise}, this implies that $(\bar p_{0:T,\text{revl}}^\star, \bar \lambda_\text{revl}^\star)$ is a saddle point of the Lagrangian $\bar L_\text{revl}(p_{0:T}(\cdot), \lambda)$,
\[
\bar L_{\text{revl}}(\bar p_{0:T, \text{revl}}^\star(\cdot), \lambda) \; \leq \; \bar L_{\text{revl}}(\bar p_{0:T, \text{revl}}^\star(\cdot), \bar\lambda_{\text{revl}}^\star) \; \leq \; \bar L_{\text{revl}}(p_{0:T}(\cdot), \bar\lambda_{\text{revl}}^\star)\; \text{ for all } p_{0:T}(\cdot) \text{ and } \lambda \geq 0.
\]
By Assumption~\ref{as: admissibility reverse KL_point-wise}, the score function class $\mathcal{S}$ is expressive enough, any path distribution $p_{0:T}(\cdot)$ can be represented as $p_{0:T}(\cdot;s_p)$ with some $s_p\in\mathcal{S}$; and vice versa. Thus, we can express $\bar p_{0:T, \text{revl}}^\star(\cdot)$ as $ p_{0}(\cdot; s_{\text{revl}}^\star)$ with some $s_{\text{revl}}^\star\in\mathcal{S}$. We also note that the dual function $\bar D_{\text{revl}}(\lambda)$ in the path and score function spaces are the same. Hence, the optimal dual function for Problem~\eqref{eq: unlearning alignment diffusion models} remains to be $\hat D_{\text{revl}}(\lambda) = \bar D_{\text{revl}}(\lambda)$. Thus, $(s_{\text{revl}}^\star, \bar\lambda_{\text{revl}}^\star)$ is a saddle point of the Lagrangian $\hat L_{\text{revl}}(s_p;\lambda) \DefinedAs \bar L_{\text{revl}}(p_{0:T}(\cdot; s_p);\lambda)$,
\[
\hat L_{\text{revl}}(s_{\text{revl}}^\star;\lambda) 
\; \leq \; 
\hat L_{\text{revl}}(s_{\text{revl}}^\star;\bar\lambda_{\text{revl}}^\star) 
\; \leq\; 
\hat L_{\text{revl}}(s_p;\bar\lambda_{\text{revl}}^\star) \; \text{ for all } s_p\in\mathcal{S} \text{ and } \lambda\geq0.
\]
Therefore, the strong duality holds for Problem~\eqref{eq: unlearning alignment diffusion models} in the score function space.
\end{proof}

\subsection{Likelihood Estimation}\label{app: proof of likelihood estimation}
\begin{lemma}\label{lem: likelihood estimation} Consider $q_0(\cdot)$ and $q_0^i(\cdot)$ to be the distributions of samples for two diffusion models with score functions $s_q(x_t, t)$ and $s_{q^i}(x_t,t)$ respectively. Then the  logarithm of the ratio of the probabilities $\log q_{0}^i(x_0)/q_{0}(x_0)$ assigned to a given sample $x_0$ by the two models can be written as:
    \[
        \log q^i_{0}(x_0) - \log q_{0}(x_0)
        \; \approx \;
        \frac{1}{2}\int_{0}^{\infty} d t
 \left(
\mathbb{E}_{q(x_t|x_0)} \left[ \norm{\nabla \log q(x_t) - s_q(x_t, t)}_2^2\right] - \mathbb{E}_{q(x_t|x_0)} \left[ \norm{\nabla \log q^i(x_t) - s_{q^i}(x_t,t)}_2^2\right] \right) \omega_t
    \]
where $\omega_t$ is a time-dependent constant related to the noise schedule of the diffusion process.
\end{lemma}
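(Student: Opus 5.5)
We prove the lemma using the information-theoretic (I-MMSE) representation of log-likelihoods from \cite{kong2023informationtheoreticdiffusion}, once for each model, and then subtract. We view the forward noising process as a Gaussian channel $x_t = \sqrt{\bar\alpha_t}\,x_0 + \sqrt{1-\bar\alpha_t}\,w$. Equivalently, in SNR parametrization, $z_\gamma = \sqrt{\gamma}\,x_0 + w$.

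The key identity is the pointwise I-MMSE formula. For any density $r$ on $X$ and any fixed $x_0$,
\[
-\log r(x_0) \;=\; c(x_0) \,+\, \frac{1}{2}\int_0^\infty \mathbb{E}_{z_\gamma\mid x_0}\big[\,\|x_0 - \hat x_r(z_\gamma,\gamma)\|_2^2\,\big]\, d\gamma .
\]
Here $\hat x_r$ is the optimal (Bayes) denoiser for prior $r$, and $c(x_0)$ collects the Gaussian reference terms, namely $\tfrac{d}{2}\log(2\pi e)$ together with the pointwise part of the integrated Gaussian MMSE $d/(1+\gamma)$. The term $c(x_0)$ depends only on $x_0$ and the channel, not on $r$.

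\textbf{Step 1: apply the formula to both models.} We apply the identity with $r=q_0$ and with $r=q_0^i$ at the same $x_0$. The common term $c(x_0)$ cancels when we subtract, leaving
\[
\log q_0^i(x_0) - \log q_0(x_0) \;=\; \tfrac12\int_0^\infty \Big(\mathbb{E}\|x_0-\hat x_{q}\|^2 - \mathbb{E}\|x_0-\hat x_{q^i}\|^2\Big)\,d\gamma .
\]
Establishing the pointwise formula is where the real content sits. We would cite \cite{kong2023informationtheoreticdiffusion} for it. Alternatively, it can be derived by differentiating $\log r_\gamma(z)$ along the channel and using Tweedie's formula $\hat x_r = (z+\nabla\log r_\gamma(z))/\sqrt\gamma$ with the heat-equation structure. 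This step is the main obstacle: it requires integrability and boundary behavior as $\gamma\to 0$ and $\gamma\to\infty$, which we take as regularity assumptions on $q_0$ and $q_0^i$.

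\textbf{Step 2: change variables to scores and time.} By Tweedie's formula, a denoiser error equals a scaled score error. Concretely, $x_0 - \hat x_r = -\big(\nabla \log r_t(x_t) - \nabla\log q(x_t\mid x_0)\big)\cdot(\text{schedule factor})$, where $\nabla \log q(x_t\mid x_0)$ is the conditional score that appears in the statement's $\nabla\log q(x_t)$ terms given $x_0$. We then change the integration variable from $\gamma$ to $t$ through the schedule $\gamma(t)$. The Jacobian $d\gamma/dt$ and the squared scaling factor are deterministic functions of $t$, and we absorb both into $\omega_t$.

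\textbf{Step 3: substitute learned scores.} The exact optimal scores $\nabla\log r_t$ are replaced by the learned networks $s_q(\cdot,t)$ and $s_{q^i}(\cdot,t)$. This substitution is exact only when the networks equal the true marginal scores of the respective models, and otherwise introduces error, which is the source of the ``$\approx$'' in the statement. We would note that the error is controlled by the score-matching errors of the two models.
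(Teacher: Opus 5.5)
Your proposal is correct and follows the paper's proof essentially step for step. You apply the pointwise I-MMSE formula of Kong et al.\ to each model, cancel the model-independent Gaussian terms on subtraction, and convert denoiser error into error against the conditional score $(\sqrt{\bar\alpha_t}x_0 - x_t)/(1-\bar\alpha_t)$ (your explicit Tweedie computation is what the paper gets by citing Luo), absorbing the schedule factor and Jacobian into $\omega_t$, with the learned-score substitution accounting for the $\approx$. One small repair: do not split off $c(x_0)$ as a finite constant, since $\int_0^\infty \frac{d}{1+\gamma}\,d\gamma$ and $\int_0^\infty \operatorname{mmse}(x_0,\gamma)\,d\gamma$ both diverge; subtract the two integrable integrands $\frac{d}{1+\gamma}-\operatorname{mmse}$ inside a single integral instead, which gives your Step~1 identity directly.
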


\begin{proof}

We begin with the setting and notation of~\cite{kong2023informationtheoreticdiffusion}. Let $p(z_\gamma \mid x)$ be a Gaussian noise channel with
\[
z_\gamma 
\;
= 
\;
\sqrt{\gamma}\,x + \epsilon
\]
and $\epsilon \sim \mathcal{N}(0, I)$, where $\gamma$ represents the
Signal-to-Noise Ratio (SNR) and $p(x)$ is the unknown data distribution.
Then the log probability of the unknown distribution for a given sample $x$ can be written as an integral (Equation 9 from~\cite{kong2023informationtheoreticdiffusion}):
\begin{equation}
    - \log p(x)
    \;
    = 
    \;
    \frac{d}{2}\log(2\pi e)
- \frac{1}{2}\int_{0}^{\infty} d\gamma
\left(
\frac{d}{1+\gamma} - \operatorname{mmse}(x,\gamma)
\right)
\end{equation}
where we define the point-wise mmse as:
\begin{equation}
    \operatorname{mmse}(x,\gamma) 
    \;
    \DefinedAs 
    \;
    \mathbb{E}_{p(z_\gamma|x)} \left[ \norm{x - \hat{x}^\star(z_\gamma,\gamma)}_2^2\right]
\end{equation}
Note that we can write the training loss of a diffusion model trained with samples from $p(x)$ as minimizing the expected point-wise mmse, thus $\hat{x}^\star(z_\gamma,\gamma)$ can be viewed as the optimal diffusion model denoiser. With this in mind, we modify the notation slightly to write:
\begin{equation}
    \operatorname{mmse}(x_0,t) 
    \;
    \DefinedAs 
    \;
    \mathbb{E}_{q(x_t|x_0)} \left[ \norm{x_0 - \hat{x}^\star(x_t,t)}_2^2\right].
\end{equation}

Furthermore, from~\cite{luo2022understandingdiffusionmodelsunified} we recall that the denoising diffusion loss and the score matching loss are equivalent up to a time-dependent constant which we denote $\omega_t$ allowing us to write:
\begin{equation}
    \operatorname{mmse}(x_0,t) 
    \; 
    = 
    \;
    \omega_t \cdot \mathbb{E}_{q(x_t|x_0)} \left[ \norm{\nabla \log p(x_t) - \hat{s}^\star(x_t,t)}_2^2\right]
\end{equation}
where $s^\star(\cdot, t)$ is the minimizer of the score matching loss $\mathbb{E}_{p(x_0)} \mathbb{E}_{q(x_t|x_0)} \left[ \norm{\nabla \log p(x_t) - \hat{s}(x_t,t)}_2^2\right]$.
Thus, if we consider two distributions $p_1(x_0)$, $p_2(x_0)$ we can write:
\[
\begin{array}{rcl}
    &&
    \!\!\!\!  \!\!\!\!
    \!\!
    \log p_2(x_0) - \log p_1(x_0) 
    \\[0.2cm]
    & = & 
    \displaystyle
    \frac{1}{2}\int_{0}^{\infty} d t
\left(
\operatorname{mmse}_1(x_t,t) - \operatorname{mmse}_2(x_t, t)
\right)
\\[0.2cm]
   & = & 
   \displaystyle
   \frac{1}{2}\int_{0}^{\infty} d t
 \left(
\mathbb{E}_{q(x_t|x_0)} \left[ \norm{\nabla \log p_1(x_t) - \hat{s}_1^\star(x_t,t)}_2^2\right] - \mathbb{E}_{q(x_t|x_0)} \left[ \norm{\nabla \log p_2(x_t) - \hat{s}_2^\star(x_t,t)}_2^2\right] \right) \omega_t.
\end{array}
\]

Again, from~\cite{luo2022understandingdiffusionmodelsunified} we know:
\begin{equation}
    \nabla \log p(x_t) = \frac{\sqrt{\bar \alpha_t} x_0 - x_t}{1 - \bar \alpha_t}
\end{equation}
where $\alpha_t$ represents the noise schedule, allowing us to compute the integral.
\end{proof}

\newpage
\section{Algorithms}\label{app: algorithms}
Here we detail the primal-dual algorithms discussed in Section~\ref{sec: unlearning in score function space} of the main paper.

\subsection{Reverse KL-Constrained Unlearning Algorithm}\label{app: algorithm reverse KL}

As discussed in the main paper the algorithm alternates between minimizing the Lagrangian (primal) function, and maximizing the dual function.

\textbf{Primal step: }In the primal step we minimize the Lagrangian:
\begin{equation}\label{eq: reverse kl alg lagrangian}
    \minimize_{s_p}\;\hat{L}_{\text{rev}}(s_p,\lambda) 
    \; \DefinedAs \; 
    D_{\text{KL}}(p_{0}(\cdot; s_p)\,\Vert\, q_{0}(\cdot; s_q))  -\sum_i \lambda_i \left( D_{\text{KL}}(p_{0}(\cdot; s_p)\,\Vert\, q^i_{0}(\cdot; s^i_q)) \; - b_i \right)
\end{equation}

We can estimate the point-wise KL divergences in~\eqref{eq: reverse kl alg lagrangian} by sampling trajectories from the model and using the following result (Lemma  2 from~\cite{khalafi2025composition}):

\begin{lemma}[Point-wise KL divergence]\label{lem: marginal kl computation}
    Assume two score functions $s_p(x,t) = \nabla \log \bar p_t(x)$, $s_q (x,t) = \nabla \log \bar q_t(x)$, where $\bar p_t$, $\bar q_t$ are two marginal densities induced by two forward diffusion processes, with the same noise schedule, starting from initial distributions $\bar p_0$ and $\bar q_0$, respectively. Then, the point-wise KL divergence between two distributions of the samples generated by running DDIM with $s_p$ and $s_q$ is given by
    \begin{equation}\label{eq: marginal kl computation}
        D_{\text{\normalfont KL}} (p_0(\cdot\,; s_p) \,\Vert\, q_0(\cdot\,; s_q))
        \; = \;
        \sum_{t \,=\, 0}^T \tilde \omega_t \, \mathbb{E}_{x_t\,\sim\, p_t(\cdot\,; s_p)} \left[ \,
        \norm{s_p(x_t, t) - s_q(x_t, t)}_2^2
        \,\right] 
        \,+\, \epsilon_T
    \end{equation}
    where $\tilde \omega_t$ is a time-dependent constant, and $\epsilon_T$ is a discretization error that depends on the total number of diffusion time steps $T$ and goes to zero as $T \rightarrow \infty$
\end{lemma}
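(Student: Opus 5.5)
The plan is to obtain the discrete-time statement by specializing the continuous-time argument of Lemma~2 in~\cite{khalafi2025composition}: express the KL between the terminal samples as a KL between the two deterministic flows, then reduce that KL to a time integral of squared score differences along the trajectory of $p$. The data processing inequality is not enough, since we need an equality up to a vanishing error. So I would start from the probability-flow ODE view of DDIM. As $T\to\infty$, DDIM with score $s$ is a discretization of $\dot x_t = f(t)x_t - \tfrac12 g(t)^2 s(x_t,t)$. Because $s_p$ and $s_q$ are exact scores of forward processes with a shared noise schedule, the marginals of these ODEs are exactly $\bar p_t$ and $\bar q_t$.

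\textbf{Step 1 (continuous identity).} Differentiate $t\mapsto D_{\text{KL}}(\bar p_t\Vert\bar q_t)$ using the forward Fokker--Planck equations. The standard de Bruijn-type computation gives
\[
\frac{d}{dt}D_{\text{KL}}(\bar p_t\Vert \bar q_t) = -\tfrac12 g(t)^2\,\mathbb{E}_{\bar p_t}\big[\norm{\nabla\log\bar p_t-\nabla\log\bar q_t}_2^2\big].
\]
Since both processes converge to the same Gaussian, $D_{\text{KL}}(\bar p_T\Vert\bar q_T)\to 0$. Integrating from $0$ to $T$ then yields
\[
D_{\text{KL}}(\bar p_0\Vert\bar q_0)=\int_0^T \tfrac12 g(t)^2\,\mathbb{E}_{\bar p_t}\norm{s_p-s_q}_2^2\,dt + o(1).
\]

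\textbf{Step 2 (identify with DDIM outputs).} DDIM is a deterministic discretization of the probability-flow ODE, so the law of its output $p_0(\cdot;s_p)$ equals $\bar p_0$ up to a discretization error. The same holds for $q$, and the intermediate laws $p_t(\cdot;s_p)$ approximate $\bar p_t$.

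\textbf{Step 3 (discretize).} Replace the integral by the Riemann sum over the DDIM grid, with $\tilde\omega_t$ equal to $\tfrac12 g(t)^2$ times the step size. Collect the quadrature error, the terminal KL and the Step~2 errors into $\epsilon_T$.

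The main obstacle is Step~2 together with the control of $\epsilon_T$. KL is not continuous under weak convergence, so showing that the discretization error in the output law becomes negligible in KL requires regularity: Lipschitz scores, bounded domain $X$, and densities bounded away from zero on the shared support. My first attempt would be to bound $\epsilon_T$ via the Lipschitz stability of the discretized flow maps, using the change-of-variables formula for pushforward densities under diffeomorphisms. I would leave the explicit rate implicit, as the statement does.
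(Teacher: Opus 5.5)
The paper gives no proof of this lemma. It quotes it as Lemma~2 of \cite{khalafi2025composition} and uses it as a black box, so there is no competing argument to compare against. Your outline is the natural derivation and is correct in substance. The Fokker--Planck computation $\frac{d}{dt}D_{\text{KL}}(\bar p_t\Vert\bar q_t)=-\tfrac12 g(t)^2\,\mathbb{E}_{\bar p_t}\norm{\nabla\log\bar p_t-\nabla\log\bar q_t}_2^2$ is right, since the shared drift cancels. What turns the data-processing inequality into an equality is exactly the hypothesis that $s_p,s_q$ are \emph{exact} forward scores: then the probability-flow ODE started at $\bar p_T$ has marginals $\bar p_t$. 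Your derivation also makes explicit what the quoted statement leaves implicit: $\tilde\omega_t\approx\tfrac12 g(t)^2\Delta t$, and $\epsilon_T$ lumps together terminal, initialization and quadrature errors.

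Step~2 is where the real work lies, and, as you say, you leave it as a plan. The bookkeeping of $\epsilon_T$ also needs tightening in three places.

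(i) Step~2 is not only a discretization error. Even in continuous time, DDIM starts both chains at $\gamma=\mathcal N(0,I)$, not at $\bar p_T,\bar q_T$, so its marginals are not $\bar p_t,\bar q_t$. You can handle this using bijectivity of the flow maps $\Phi^p,\Phi^q$. First, $D_{\text{KL}}(\Phi^p_\#\gamma\Vert\bar p_0)=D_{\text{KL}}(\gamma\Vert\bar p_T)$ holds exactly. Then split $\log\frac{d\Phi^p_\#\gamma}{d\Phi^q_\#\gamma}=\log\frac{d\Phi^p_\#\gamma}{d\bar p_0}+\log\frac{\bar p_0}{\bar q_0}+\log\frac{d\bar q_0}{d\Phi^q_\#\gamma}$. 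With $\log(\bar p_0/\bar q_0)$ bounded (your bounded-away-from-zero assumption) and Pinsker, this reduces the initialization error to terms that vanish as $\bar p_T,\bar q_T\to\gamma$, given moment control of the flow maps. The discrete-versus-continuous part then genuinely needs $C^1$ convergence of the DDIM maps to the flow maps, because the pushforward densities involve Jacobian determinants. Trajectory-level Lipschitz stability is not enough.

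(ii) ``Both converge to the same Gaussian'' does not by itself give $D_{\text{KL}}(\bar p_T\Vert\bar q_T)\to 0$, since KL has no triangle inequality. Use joint convexity over a coupling instead: $D_{\text{KL}}(\bar p_T\Vert\bar q_T)\le\frac{\bar\alpha_T}{2(1-\bar\alpha_T)}W_2^2(\bar p_0,\bar q_0)$, which is finite on the bounded domain.

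(iii) You use $T$ in two roles. As the horizon, it must grow to kill the terminal and initialization terms. As the number of steps, its mesh must shrink to kill the quadrature error. Refining a fixed continuous schedule on a fixed interval keeps the terminal term fixed. Appending steps to a fixed $\beta$-schedule lengthens the horizon but does not shrink the mesh. So you must specify a joint scaling in which both happen before claiming $\epsilon_T\to 0$.

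Finally, because the sum includes $t=0$, you must assume that $\nabla\log\bar p_0-\nabla\log\bar q_0$ is square-integrable under $\bar p_0$ and that the integrand is regular near $t=0$; the argument does not supply this.
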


We can then minimize the weighted sum of the KL divergences using standard gradient-based optimization techniques.

\textbf{Dual step:} In the dual step we update the dual multipliers via $\lambda^+ = \lambda + \eta \partial\hat{D}_{\text{rev}}(\lambda)$, using the fact that the constraint violations  give us a subgradient $\partial\hat{D}_{\text{rev}}(\lambda)$ of the dual function.
\begin{equation}\label{eq: dual update reverse kl 123}
    \lambda^+_i = \left(\lambda_i - \eta_d \left( D_{\text{KL}}(p_{0}(\cdot; s_p)\,\Vert\, q^i_{0}(\cdot; s^i_q)) \; - b_i \right) \right)_+.
\end{equation}
A simple observation regarding the dual update is that if a constraint is not satisfied, we increase the corresponding multiplier.

\subsection{Forward KL-Constrained Unlearning Algorithm}

Similar to the reverse KL algorithm, we alternate between minimizing the Lagrangian (primal) function, and maximizing the dual function.

\textbf{Primal step: }In the primal step we minimize the Lagrangian:
\begin{equation}
    \minimize_{s_p}\;
    \hat{L}_{\text{fw}}(s_p,\lambda) 
    \; \DefinedAs \;
    D_{\text{KL}}(q_{0:T}(\cdot; s_q)\,\Vert\, p_{0:T}(\cdot; s_p))
   -\sum_i \lambda_i \left( D_{\text{KL}}(q^i_{0:T}(\cdot; s^i_q)\,\Vert\, p_{0:T}(\cdot; s_p))\; - b_i \right).
\end{equation}

The forward KL divergence $D_{\text{KL}}(q_{0:T}(\cdot; s^i_q)\,\Vert\, p_{0:T}(\cdot; s_p))$ is the standard diffusion model training objective which we can estimate using samples from the distribution $q$:

\begin{equation}\label{eq: score matching}
    D_{\text{KL}}(q_{0:T}(\cdot; s_q)\,\Vert\, p_{0:T}(\cdot; s_p)) 
    \; = \;
    \mathbb{E}_{x_0\sim q} \mathbb{E}_{q(x_t|x_0)} \left[ \Vert s_p(x_t, t) - \nabla \log q(x_t, t) \Vert^2_2 \right] + \text{constant}.
\end{equation}

The first term on the RHS of~\eqref{eq: score matching} is the standard score matching objective with samples from the distribution $q$. The Lagrangian then becomes:
\begin{equation}
    \hat{L}_{\text{fw}}(s,\lambda) = \mathbb{E}_{x_0\sim q} \mathbb{E}_{q(x_t|x_0)} \left[ \Vert s_p(x_t, t) - \nabla \log q(x_t, t) \Vert^2_2 \right]
   -\sum_i \lambda_i \left( \mathbb{E}_{x_0\sim q_i} \mathbb{E}_{q(x_t|x_0)} \left[ \Vert s_p(x_t, t) - \nabla \log q^i(x_t, t) \Vert^2_2 \right]\; - b_i \right)
\end{equation}
which we can minimize using standard gradient-based optimization techniques using samples from $q$ and $q^i$.

\textbf{Dual step:} In the dual step, similar to the reverse KL algorithm we simply update the dual multipliers dependent on the constraint violations:
\begin{equation}
    \lambda^+_i = \left(\lambda_i - \eta_d \left( D_{\text{KL}}(q^i_{0:T}\,\Vert\, p_{0:T}(\cdot; s_p)) \; - b_i \right) \right)_+.
\end{equation}\label{eq: dual update reverse kl}

\subsection{Likelihood-Constrained Unlearning Algorithm}

The algorithm similarly alternates between primal and dual steps.

\textbf{Primal step:} For the primal step we need to minimize the Lagrangian:

\begin{equation}\label{eq: lagrangian revl}
    \minimize_{s_p} \;\hat{L}_{\text{revl}}(s_p,\lambda) 
    \; \DefinedAs \; 
    D_{\text{KL}}(p_{0:T}(\cdot; s_p)\,\Vert\, q_{0:T}(\cdot; s_q))
   +\sum_i \lambda_i \left( \mathbb{E}_{p}
\bigl[ q^i(x_0) \bigr]\; - \epsilon_i \right).
\end{equation}

We can treat the likelihood of the unlearning distribution $q^i(x_0)$ as a reward function that can be evaluated for any given sample using a diffusion model that samples from the distribution $q^i(\cdot)$. We discuss this in detail in Lemma~\ref{lem: likelihood estimation} the result of which we summarize here:
\[
        \log q^i_{0}(x_0) - \log q_{0}(x_0)
        \; \approx \;
        \frac{1}{2}
        \sum_{t\,=\,0}^T 
        \left(
\mathbb{E}_{q(x_t|x_0)} \left[ \norm{\nabla \log q(x_t) - s_q(x_t, t)}_2^2\right] - \mathbb{E}_{q(x_t|x_0)} \left[ \norm{\nabla \log q^i(x_t) - s_{q^i}(x_t,t)}_2^2\right] \right)
\]
Note we have further approximated the integral in Lemma~\ref{lem: likelihood estimation} with a sum over discrete diffusion time steps.

Now that we have a way to estimate the likelihood, the primal minimization becomes the well-known problem of aligning a diffusion model with a penalty function with the reward being the unlearning likelihood $q(x_0)$ in this case. We can use the following result from~\cite{fan2023dpokreinforcementlearningfinetuning} to compute the gradient of the expected likelihood:
\begin{equation}
 \nabla_\theta \mathbb{E}_{p_\theta(x_0)}
\bigl[ q(x_0) \bigr] 
\;=\;
\mathbb{E}_{p_\theta(x_{0:T})}
\left[
q(x_0)
\sum_{t=1}^{T}
\nabla_\theta \log p_\theta(x_{t-1} \mid x_t)
\right]
\end{equation}
which allows us to minimize the Lagrangian in~\eqref{eq: lagrangian revl} using gradient-based optimization.

\textbf{Dual step:} For the dual steps we update the dual variables via:
\begin{equation}
    \lambda_i 
    \;=\; 
    \left(\lambda_i + \eta_d \left( \mathbb{E}_{p}
\bigl[ q^i(x_0) \bigr] - \epsilon_i \right)\right)_+.
\end{equation}

\newpage
\section{Additional Implementation Details}
Here we provide some additional implementation details regarding the experiments discussed in Section~\ref{sec: experiments}. All experiments were run on an NVIDIA B200 GPU.

\subsection{Likelihood-Constrained Unlearning}
\subsubsection{Gaussian Mixture Unlearning} 
For the Gaussian mixture unlearning experiments we trained diffusion models with ResNet~\cite{he2015deepresiduallearningimage} networks as denoisers. We used a DDPM~\cite{ho2020denoisingdiffusionprobabilisticmodels} scheduler with $T = 100$ diffusion time steps. For the reverse KL baseline, we set the dual update learning rate to $0$ to essentially minimize the Lagrangian with a fixed multiplier $\lambda$, and we ran for different values of $\lambda$ ranging from $0.01$ to $1.0$. For the Likelihood constrained approach we ran for different constraint thresholds ranging from $0.05$ to $1.0$. The likelihood is normalized such that the expected likelihood of the unlearn concept is approximately $1.0$ when evaluated over samples of the initial pretrained model.

The constraint thresholds are chosen by multiplying the expected likelihood of the unlearning evaluated on samples from the pretrained model, by a factor.

We also note that since Lemma~\ref{lem: likelihood estimation} allows the estimation of ratio of likelihoods, we consider the likelihood of the distribution that is proportional to $q^i(x)/q_{pre}(x)$, where $q_{pre}(x)$ is the pretrained model.

The full list of important hyperparameters is listed in Table~\ref{tab: hyper gmm}.

\begin{table}[h]
\centering
\caption{Hyperparameters for constrained likelihood vs fixed weight reverse KL diffusion unlearning experiments. Used for producing the results in Figure~\ref{fig: gaussians_likelihood_vs_reverse}.}
\label{tab: hyper gmm}
\begin{tabular}{lll}
\hline
\textbf{Category} & \textbf{Hyperparameter} & \textbf{Value} \\
\hline
Training 
& Number of epochs ($N_{\text{epochs}}$) & $100$ \\
& Batches per epoch & $1$ \\
& Primal learning rate ($\eta_p$) & $2 \times 10^{-4}$ \\
& Dual learning rate ($\eta_d$) & $0$/$0.1$ \\
& Primal steps per dual step & $5$ \\
& Primal batch size & $128$ \\
& Gradient accumulation steps & $1$ \\
& Optimizer & AdamW \\
& LR schedule & Cosine w/ warmup \\
\hline
Diffusion
& Number of diffusion steps ($T$) & $100$ \\
& Inference steps & $10$ \\
& Noise scheduler & DDPM \\
& noise scheduler $\beta_{\text{start}}$ & $1 \times 10^{-4}$ \\
& noise scheduler $\beta_{\text{end}}$ & $2 \times 10^{-2}$ \\
\hline
Model architecture
& Model & ResNet Diffusion Model \\
& Input dimension ($x_{\text{dim}}$) & $2$ \\
& Hidden dimension ($h_{\text{dim}}$) & $128$ \\
& Embedding dimension & $32$ \\
& Number of layers & $4$ \\
& Widen factor & $2$ \\
& Time embedding & Learned \\
\hline
\end{tabular}
\end{table}

\subsubsection{Concept Unlearning with Stable Diffusion}
We use a distilled version of Stable Diffusion v1.4 for these experiments. We fine-tune the parameters of LoRA adapters~\cite{hu2021loralowrankadaptationlarge} added to the cross-attention layers in the diffusion U-Net following the esd-x approach from~\cite{gandikota2023erasing}.

For computing KID to the retain distribution, we construct a retain set by sampling images from the pretrained model and keeping those whose likelihood/CLIP score for the unlearn concept is lower than a certain threshold i.e., samples in which the unlearn concept does not appear. The threshold is the expected likelihood/CLIP over samples from the pretrained model multiplied by a factor.

Similarly the likelihood constraint thresholds were chosen by multiplying the expected likelihood of the unlearn concept for the pretrained model ($\bar \ell$) by factors smaller than $1$. We highlight that since Lemma~\ref{lem: likelihood estimation} allows the estimation of ratio of likelihoods, we consider the likelihood of the distribution that is proportional to $q_{pre}(x, c_i)/q_{pre}(x)$, where $q_{pre}(x)$ is the unconditional pretrained distribution of Stable Diffusion and $q_{pre}(x, c_i)$ is conditioned on the unlearning concept $c_i$.

We also note that, unlike the Gaussian case where the likelihood constraint is reliably satisfied regardless of threshold level, for Stable Diffusion, optimizing the expected likelihood  is challenging as the loss sometimes diverges. In Figure~\ref{fig: cowboys_likelihood_vs_reverse} we report only the convergent runs.

The full list of important hyperparameters is listed in Table~\ref{tab:unlearning_sweep_hyperparams}.

\begin{table}[t]
\centering
\caption{Hyperparameters for fixed weight concept erasure baseline (unconstrained) and Likelihood-constrained unlearning sweeps. Used for producing the results in Figure~\ref{fig: cowboys_likelihood_vs_reverse}.}
\label{tab:unlearning_sweep_hyperparams}
\begin{tabular}{lll}
\hline
\textbf{Category} & \textbf{Hyperparameter} & \textbf{Value} \\
\hline
General
& Base model & \texttt{huggingface/nota-ai/bk-sdm-tiny} \\
& Mixed precision & FP16 \\
& Number of epochs & $30$ \\
& Training batch size & $8$ \\
& Gradient accumulation steps & $1$ \\
& Learning rate & $5 \times 10^{-4}$ \\
& Optimizer & Adam \\
& LoRA rank ($r$) & $8$ \\
& LoRA scaling ($\alpha$) & $8$ \\
& Primal steps per dual step & $2$ \\
\hline
Diffusion / sampling
& Inference steps (train) & $20$ \\
& Noise samples per timestep for likelihood estimation & $8$ \\
& Expected Likelihood Evaluation batch size & $16$ \\
\hline
Retain set generation
& Number of retain images & $1000$ \\
& clip threshold factor & $0.95$ \\
& likelihood threshold factor & $1.0$ \\
\hline
Unconstrained Baseline
& Dual learning rate ($\eta_d$) & $0$ \\
& Fixed multiplier sweep values & 
$\{0.01, 0.1, 0.2, 0.3, 0.5, 0.75,$ \\
& & $1.0, 1.5, 2.0, 2.5, 3.0, 5.0\}$ \\
\hline
Likelihood-constrained
& Dual learning rate ($\eta_d$) & $10^{-3}$ \\
& Constraint threshold ($\epsilon$) & $\bar{\ell} \times \beta$ \\
& threshold scaling factors ($\beta$) &
$\{0.70, 0.71, \dots, 0.95\}$ \\
\hline
\end{tabular}
\end{table}

\subsection{Forward KL-Constrained Unlearning}
For forward KL constrained unlearning we unlearn specific samples from a model pretrained on the \texttt{CelebA-HQ} dataset. Our implementation is based on modifying that of~\cite{alberti2025data}, namely, SISS without importance sampling, to incorporate dual updates.

For the unconstrained baseline, we use the same fixed multiplier $\lambda$ for all the unlearning samples. We sweep $\lambda$ over a range from $0.05$ to $0.5$ to observe SSCD values from $1.0$ (almost no unlearning) to around $0.3$ which by visual inspection represents complete unlearning of the memorized sample. 

We compute the SSCD score between two images: the original image sample we wish to unlearn from the model, and the image obtained by adding noise to the unlearning sample up to time step $t = 200$ and then denoising using the model to see how closely it can recover the original sample. This comparison is visually represented in Figure~\ref{fig: forward_images}, Left.

The full list of important hyperparameters is listed in Table~\ref{tab:siss_constrained_hyperparams}.

\begin{table}[h]
\centering
\caption{Hyperparameters for forward KL constrained data unlearning experiments on CelebA-HQ data samples. Used to obtain the results in Figures~\ref{fig: forward_images},~\ref{fig: forward pareto}.}
\label{tab:siss_constrained_hyperparams}
\begin{tabular}{lll}
\hline
\textbf{Category} & \textbf{Hyperparameter} & \textbf{Value} \\
\hline
General
& Dataset & CelebA-HQ ($256 \times 256$) \\
& Pretrained model & \texttt{google/ddpm-celebahq-256}\\
& Diffusion time steps $T$ & 1000 \\
\hline
Training
& Epochs & $200$ \\
& Train batch size & $2$ \\
& Gradient accumulation steps & $16$ \\
& Effective batch size & $32$ \\
& Learning rate ($\eta_p$) & $5 \times 10^{-6}$ \\
& Dual learning rate ($\eta_d$) & $5 \times 10^{-2}$ \\
& Optimizer & Adam \\
\hline
Constraints
& Constraint type & Forward KL divergence \\
& Threshold values & $\{0.0025,\;0.005,\;0.0075,\;0.010,\;0.015,\;0.02,\;0.03\}$ \\
\hline
Unconstrained
&fixed multiplier sweep values& $\{0.01,\;0.015,\;0.02,\;0.025,\;0.030,\;0.035,\;0.04,\;0.045,\;0.05\}$\\
\hline
\end{tabular}
\end{table}

\subsection{Reverse KL-Constrained Unlearning}
\textbf{Dual-Only Algorithm. }For the reverse KL experiments we modify the algorithm slightly. We can skip the primal step in the algorithm discussed in Appendix~\ref{app: algorithm reverse KL}, by observing the fact that we already know from section~\ref{subsec: concept unlearning} that for a given $\lambda$ the optimal primal minimizer is given by
\[
q_{\text{\normalfont u}}^{\dagger} (\cdot;\lambda)
    \; = \;
    \frac{1}{ Z^\dagger_{\text{\normalfont 
    u}}(\lambda)}
    \frac{(q(\cdot))^{1/(1-\one^\top\lambda)}}{\prod_{i\,=\,1}^m(q_{\text{\normalfont u}}^i(\cdot))^{\hat\lambda_i}}.
    \]
Since we have access to the score functions of $q(\cdot)$, and the scores of the individual unlearning distributions $q^i(\cdot)$, we can bypass the primal step and directly do the dual updates in~\eqref{eq: dual update reverse kl 123}.

This dual-only algorithm allows us to find the optimal multipliers $\lambda^\star$ that satisfy the unlearning reverse KL constraints. If needed, the model weights can then be fine-tuned so that the distribution of generated samples fits $q_{\text{\normalfont u}}^{\dagger} (\cdot;\lambda^\star)$.~\citet{khalafi2025composition} propose a similar dual-only algorithm to find the optimal weights for diffusion model composition.

To choose the constraint thresholds, we first estimate the reverse KL between the pretrained model, and each of the individual unlearning concepts. Then we set the threshold as the average of these KL divergences, multiplied by a factor.

For the fixed equal weight, unconstrained baseline, we choose all dual multipliers to be equal, and across runs we sweep the multiplier values so that the sum of the dual multipliers, $\lambda^\top 1$, goes from $0$ to $0.95$. Recall from Section~\ref{subsec: concept unlearning} that to avoid degenerate solutions we need $\lambda^\top 1< 1$.

\begin{table}[h]
\centering
\caption{Hyperparameters for reverse KL constrained unlearning experiments with dual-only algorithm. Used to obtain the results in Figure~\ref{fig: reverseKL_pareto}.}
\label{tab:dual_only_hyperparams}
\begin{tabular}{lll}
\hline
\textbf{Category} & \textbf{Hyperparameter} & \textbf{Value} \\
\hline
General
& Base model & Stable Diffusion v1.4 \\
& Prompt structure & 1 retain prompt + multiple unlearn prompts \\
& Example prompt &
\parbox[t]{0.55\linewidth}{\small
``a photo of a dog, a golden retriever, a german shepherd, a brown dog''} \\
\hline
Training
& Epochs (heuristic $b$ sweep) & $75$ \\
& Training batch size & $8$ \\
& Primal learning rate & $10^{-4}$ \\
& Dual learning rate ($\eta_d$) & $0.02$ \\
\hline
Diffusion / sampling
& Guidance scale & $7.5$ \\
& Inference steps & $10$ \\
\hline
Dual / constraint settings
& KL type & Pathwise \\
& Multiplicative threshold factor &
$\{2,3,4,5,6,7,8,9,10,12,15\}$ \\
\hline
\end{tabular}
\end{table}